\algrenewcommand\algorithmicindent{2.0em}%
\providecommand{\algorithmname}{Algorithm}
\theoremstyle{plain}
\newtheorem{assumption}{\protect\assumptionname}
\theoremstyle{definition}
\newtheorem{defn}{\protect\definitionname}
\theoremstyle{plain}
\newtheorem{lem}{\protect\lemmaname}
\theoremstyle{plain}
\newtheorem{thm}{\protect\theoremname}
\theoremstyle{plain}
\newtheorem{cor}{\protect\corollaryname}
\providecommand{\assumptionname}{Assumption}
\providecommand{\corollaryname}{Corollary}
\providecommand{\definitionname}{Definition}
\providecommand{\lemmaname}{Lemma}
\providecommand{\theoremname}{Theorem}
\begin{document}

\global\long\def\mS{\mathcal{S}}%
\global\long\def\mA{\mathcal{A}}%
\global\long\def\mP{\mathcal{P}}%
\global\long\def\mQ{\mathcal{Q}}%
\global\long\def\mV{\mathcal{V}}%
\global\long\def\E{\mathbb{E}}%
\global\long\def\R{\mathbb{R}}%
\global\long\def\P{\mathrm{P}}%

\global\long\def\simplex{\Delta}%
\global\long\def\mL{\mathcal{L}}%
\global\long\def\bL{\boldsymbol{\mathcal{L}}}%
\global\long\def\H{\mathcal{H}}%

\global\long\def\mH{\mathcal{H}}%
\global\long\def\Qhat{\widehat{Q}}%
\global\long\def\pihat{\widehat{\pi}}%
\global\long\def\tildeO{\widetilde{O}}%

\global\long\def\bigO{\mathcal{O}}%
 
\global\long\def\indic{\mathds{1}}%
\global\long\def\joint{\mathcal{M}}%
\global\long\def\muspace{\mathcal{M}}%
\global\long\def\mdp{\mathrm{MDP}}%
\global\long\def\entro{\mathbb{H}}%
\global\long\def\KL{D_{\mathrm{KL}}}%
\global\long\def\ddup{\mathrm{d}}%
\global\long\def\KLmax{\mathrm{KL}_{\max}}%

\global\long\def\gapmu{\sigma_{\mu}}%
\global\long\def\gappi{\sigma_{\pi}}%
\global\long\def\argmax{\operatorname*{argmax}}%

\title{Provable Fictitious Play for General Mean-Field Games}

\author{Qiaomin Xie$^\dagger$, Zhuoran Yang$^\mathsection$, Zhaoran Wang$^\ddagger$, Andreea Minca$^\dagger$  \footnote{Emails: \texttt{qiaomin.xie@cornell.edu}, \texttt{zy6@princeton.edu}, \texttt{zhaoranwang@gmail.com}, \texttt{acm299@cornell.edu}}\\ ~\\
	\normalsize $^\dagger$School of Operations Research and Information Engineering, Cornell University\\
	\normalsize $^\mathsection$Department of Operations Research and Financial Engineering, Princeton University\\
	\normalsize $^\ddagger$Department of Industrial Engineering and Management Sciences, Northwestern University
}

\date{}

\maketitle
\begin{abstract}
We propose a reinforcement learning algorithm for stationary mean-field
games, where the goal is to learn a pair of mean-field state and stationary
policy that constitutes the Nash equilibrium. When viewing the mean-field
state and the policy as two players, we propose a fictitious play
algorithm which alternatively updates the mean-field state and the
policy via gradient-descent and proximal policy optimization, respectively.
Our algorithm is in stark contrast with previous literature which
solves each single-agent reinforcement learning problem induced by
the iterates mean-field states to the optimum. Furthermore, we prove
that our fictitious play algorithm converges to the Nash equilibrium
at a sublinear rate. To the best of our knowledge, this seems the
first provably convergent single-loop reinforcement learning algorithm for mean-field
games based on iterative updates of both mean-field state and policy.

\end{abstract}

\section{Introduction}

Multi-agent reinforcement learning (MARL) \citep{shoham2007if, busoniu2008comprehensive,  hernandez2017survey, hernandez2018multiagent, zhang2019multi}   aims to tackle  sequential decision-making problems in  multi-agent systems \citep{wooldridge2009introduction} by integrating  the classical reinforcement learning framework   \citep{sutton2018reinforcement}  with game-theoretical thinking \citep{bacsar1998dynamic}. 
Powered by deep-learning \citep{goodfellow2016deep}, MARL recently has achieved striking empirical successes in games \citep{silver2016alphago,silver2017alphagozero,vinyals2019grandmaster,  berner2019dota, schrittwieser2019mastering}, robotics \citep{yang2004multiagent, busoniu2006decentralized, leottau2018decentralized},  transportation \citep{kuyer2008multiagent,mannion2016experimental}, and 
 social science \citep{leibo2017multi, jaques2019social, cao2018emergent, mckee2020social}.

Despite the  empirical successes,  
MARL is known to suffer from the scalability issue. 
Specifically, 
in a multi-agent system, each agent interacts with the other agents as well as the environment, with the goal of maximizing its own expected total return.
As a result, for each agent, the reward function and the transition kernel  of its local state also involve the local states and actions of all the other agents. 
As a result,  
as the number of agents  increases, the capacity of the joint state-action space grows exponentially, 
which brings tremendous difficulty to reinforcement learning algorithms due to the need to handle  high-dimensional  input spaces. 
Such a curse of dimensionality due to having a large number of agents in the system is named as the ``curse of many agents'' \citep{sonu2017decision}.

To circumvent such a notorious  curse, 
a popular approach is through mean-field approximation, which imposes symmetry among the agents and specifies that, for each agent, the joint effect of all the other agents is summarized by a population quantity, which is oftentimes given by the empirical distribution of the local states and actions of all the other agents or a functional of such an empirical distribution.   
Specifically, to obtain symmetry, the reward and local state transition functions  are the same for each agent, which are functions of the local state-action and the population quantity. 
Thanks to mean-field approximation, such a multi-agent system, known as the mean-field game (MFG) \citep{huang2003individual, lasry2006jeux, lasry2006jeux1, lasry2007mean, huang2007large, gueant2011mean, carmona2018probabilistic}, is   readily scalable to  an arbitrary number of agents.

In this work, we  aim to find the Nash equilibrium \citep{nash1950equilibrium} of MFG with infinite number of agents via reinforcement learning. 
By mean-field approximation, such a game consists of a population of symmetric agents among which each individual agent has infinitesimal  effect over the whole population. 
By symmetry, it suffices to find a symmetric Nash equilibrium where each agent adopts the same policy.  
Under such consideration, we can focus on a single agent, also known as the representative agent, and 
view  MFG as a game between the representative agent's local policy 
  $\pi    $ and the mean-field state  $\mathcal{L} $ which aggregates the collective effect of the  population.
Specifically, the representative agent  $\pi$ aims to find the optimal policy when the mean-field state is fixed to $\mathcal{L}$, which reduces  to solving a Markov decision process (MDP) induced by $\mathcal{L}$. 
Simultaneously, we aim to let $\mathcal{L}$ be the mean-field state when all the agents adopt policy $\pi$. 
The Nash equilibrium of such a two-player game,  $(\pi^*, \mathcal{L}^*)$,  yields a symmetric  Nash equilibrium $\pi^*$ of the original MFG.

Under proper conditions,  
the Nash equilibrium $(\pi^*, \mathcal{L}^*)$ can be obtained via   fixed-point updates, which generate  a sequence $\{ \pi_t, \mathcal{L}_t  \}$ as follows. 
  For any $t\geq 0$, in the $t$-th iteration, we solve the MDP induced by $\mathcal{L}_t$ and let $\pi_t$ be the optimal policy. Then we update the mean-field state by letting $\mathcal{L}_{t+1}$ be the mean-field state obtained by letting every agent follow $\pi_t$. 
  Under appropriate assumptions, the mapping from  $ \mathcal{L}_t $ to $ \mathcal{L}_{t+1}  $ is a contraction and thus such an iterative algorithm converges to the unique fixed-point of such a contractive mapping, which corresponds to $\mathcal{L}^*$ \citep{guo2019learning}. 
  Based on the contractive property, various reinforcement learning methods are proposed to approximately implement the fixed-point updates and   find the Nash equilibrium $(\pi^*, \mathcal{L}^*)$  \citep{guo2019learning,guo2020general, anahtarci2019value, anahtarci2019fitted, anahtarci2020q}. 
However,  such an approach requires solving a standard reinforcement learning problem 
approximately within each iteration, which itself is solved by an iterative algorithm such as Q-learning \citep{watkins1992q,mnih2015human, bellemare2017distributional} or   actor-critic methods \citep{konda2000actor,haarnoja2018soft, schulman2015trust, schulman2017proximal}. 
As a result, this approach leads to a \emph{double-loop} iterative algorithm for solving MFG. 
When the state space $\mathcal{S}$ is enormous, 
function approximation tools such as deep neural networks are equipped to represent the value and policy functions in the reinforcement learning algorithm, making solving each inner  subproblem 
computationally demanding. 

To obtain a computationally efficient algorithm for MFG, we consider the following question: 
\begin{center}
    Can we design a single-loop  reinforcement learning algorithm for solving MFG which updates the policy and mean-field state simultaneously in each iteration? 
\end{center}
For such a question, we  provide an affirmative answer by proposing a fictitious play \citep{brown1951iterative}  policy optimization algorithm, where we view the policy $\pi$ and mean-field state $\mathcal{L}$ as the two players and update them simultaneously in each iteration. 
Fictitious play is a general algorithm framework for solving games where each player first infers the opponent and then improves its own policy based on the inferred opponent information. 
When it comes to MFG, in each iteration, the policy player $\pi$ first infers the mean-field state implicitly by solving a policy evaluation problem associated with $\pi$ on the MDP induced by $\mathcal{L}$. Then the policy $\pi$ is updated   via a proximal policy optimization (PPO) \citep{schulman2017proximal} step with entropy regularization, which is adopted to ensure the uniqueness of the Nash equilibrium. 
Meanwhile,   the mean-field state $\mathcal{L}$ obtains its update direction by solving how the mean-field state evolves when all the agents execute  policy $\pi$ with their state distribution being $\mathcal{L}$. Then $\mathcal{L}$ is updated towards this direction with some stepsize. Such an algorithm is single-loop as the mean-field state $\mathcal{L}$ is updated immediately when $\pi$  is updated. 

Furthermore, since $\mathcal{L}$ is a distribution over the state space $\mathcal{S} $, when $\mathcal{S}$ is continuous,  $\mathcal{L} $ lies in an  infinite-dimensional space, which makes it computationally challenging to be updated.  
To overcome this challenge, we employ a succinct representation of $\mathcal{L}$ via kernel mean embedding, which maps $\mathcal{L}$ to an element in a reproducing kernel Hilbert space (RKHS) \citep{smola2007hilbert, gretton2008kernel, sriperumbudur2010hilbert}. Such a mechanism  enables us to update the mean-field state within RKHS, which can be computed efficiently. 

When the stepsizes for policy and mean-field state updates are properly chosen, we prove that our single-loop fictitious play algorithm   converges to the entropy-regularized Nash equilibrium at a sublinear $   \widetilde{\mathcal{O}} ( T^{-1/5})$-rate, where $T$ is the total number of iterations and $  \widetilde{\mathcal{O}} (\cdot)$ hides logarithmic terms. 
To our best knowledge, we establish the first single-loop reinforcement learning algorithm for mean-field game with finite-time convergence guarantee to Nash equilibrium. 
   
\paragraph{Our Contributions.} Our contributions are two-fold. 
First, we propose a single-loop fictitious play algorithm that updates both the policy and the mean-field state simultaneously in each iteration, where the policy is updated via  entropy-regularized proximal policy optimization. 
Moreover, 
we utilize kernel mean embedding to represent  the mean-field states and the policy update subroutine can readily incorporate any function approximation tools to represent both the value and policy functions, which makes our fictitious play method  a general algorithmic framework that is able to handle MFG with continuous state space. 
Second, we prove that the   policy and mean-field state sequence generated by the proposed  algorithm converges to the Nash equilibrium of the MFG at a sublinear $\widetilde{ \mathcal{O}}(T^{-1/5})$ rate.

\paragraph{Related Works.}

Our work belongs to the literature on discrete-time MFG. 
A variety of works have focused on the 
existence of a Nash equilibrium and the behavior of Nash equilibrium as the number of agents goes to infinity under various settings of MFG. See, e.g.,  \cite{gomes2010discrete, tembine2011mean, moon2014discrete,biswas2015mean,saldi2018markov, saldi2018discrete,saldi2019approximate, wikecek2020discrete} and the references therein. 
In addition, our work is more related to the line of research that aims to solve MFG via reinforcement learning methods. 
Most of the existing works propose to find the Nash equilibrium via fixed-point iterations in space of the mean-field states, which requires solving an MDP induced by a mean-field state within each iteration \citep{guo2019learning,guo2020general, anahtarci2019fitted, anahtarci2019value, fu2019actor, uz2020approximate, anahtarci2020q}. 
Among these works, \cite{guo2019learning,guo2020general, anahtarci2019fitted, anahtarci2019value,anahtarci2020q} propose to solve each MDP via Q-learning \citep{watkins1992q} or approximated value iteration \citep{munos2008finite}, whereas \cite{fu2019actor, uz2020approximate} solve each MDP using actor-critic \citep{konda2000actor} under the linear-quadratic setting. Furthermore,  more closely related works are  \cite{elie2020convergence, perrin2020fictitious}, which study the convergence of a version of fictitious play for MFG. 
Similar to our algorithm, their fictitious play also regards the policy and the mean-field state as the two players.
 However, for policy update, they compute the best response policy to the current mean-field state  by solving the MDP induced by the mean-field state to approximate optimality, and the obtained policy is added to the set of previous policy iterates to form a mixture policy. 
As a result, their algorithm is double-loop in essence due to solving an MDP in each iteration. 
In contrast, our fictitious play is single-loop --- the policy is updated via a single PPO step in each iteration, and the mean-field state is updated before the policy solves any MDP associated with a mean-field state. We remark that the recent work by \cite{subramanian2019reinforcement} also considers a single-loop algorithm. However, only asymptotic convergence guarantee is established via two time-scale stochastic approximation.

\paragraph{Notations.}

We use $\left\Vert \cdot\right\Vert _{1}$ to denote the vector $\ell_{1}$-norm,
and $\simplex(\mathcal{\mathcal{D}})$ the probability simplex over
$\mathcal{D}.$ The Kullback-Leibler (KL) divergence between $p_{1},p_{2}\in\simplex(\mA)$
is defined as $\KL(p_{1}\Vert p_{2}):=\sum_{a\in\mA}p_{1}(a)\log\frac{p_{1}(a)}{p_{2}(a)}.$
Let $\boldsymbol{1}_{n}\in\R^{n}$ denote the all-one vector.  For
two quantities $x$ and $y$ that may depend on problem parameters
($|\mA|,\gamma,$ etc.), if $x\ge Cy$ holds for a universals constant
$C>0$, we write $x\gtrsim y$, $x=\Omega(y)$ and $y=\bigO(x)$.
We use $\tildeO(\cdot)$ to denote $\bigO(\cdot)$ ignoring logarithmic
factors.

\section{Background and Preliminaries\label{sec:prelim}}

In this section, we first review the standard setting of mean-field
games (MFG) from \citet{guo2019learning}, and then introduce a more
general MFG with mean embedding and entropy regularization. 

\subsection{Mean-Field Games\label{sec:mean_field_games}}

Consider a discrete-time Markov game involving an infinite number
of identical and interchangeable agents. Let $\mS\subseteq\R^{d}$
and $\mA\subseteq\R^{p}$ be the state space and action space, respectively,
that are common to the agents. We assume that $\mS$ is compact and
$\mA$ is finite. The reward and the state dynamic for each agent
depend on the collective behavior of all agents through the mean-field
state, i.e., the \emph{distribution} of the states of all agents.
As the agents are homogeneous and interchangeable, one can focus on
a single agent representative of the population. Let $r:\mS\times\mA\times\simplex(\mS)\to[0,R_{\max}]$
be the (bounded) reward function and $\P$: $\mS\times\mA\times\simplex(\mS)\to\simplex(\mS)$
be the state transition kernel. At each time $t$, the representative
agent is in state $s_{t}\in\mS$, and the probability distribution
of $s_{t}$, denoted by $\mL_{t}\in\simplex(\mS)$, corresponds to
the mean-field state. Upon taking an action $a_{t}\in\mA$, the agent
receives a reward $r(s_{t},a_{t},\mL_{t})$ and transitions to a new
state $s_{t+1}\sim\P(\cdot|s_{t},a_{t},\mL_{t})$. A Markovian policy
for the agent is a function $\pi:\mS\to\simplex(\mA)$ that maps her
own state to a distribution over actions,\footnote{In general, the policy may be a function of the mean-field state $\mL_{t}$
as well. We have suppressed this dependency since our ultimate goal
is to find a \emph{stationary} equilibrium, under which the mean-field
state remains fixed over time. See \citet{guo2019learning,saldi2018markov}
for a similar treatment.} i.e., $\pi(a|s)$ is the probability of taking action $a$ in state
$s$. Let $\Pi$ be the set of all Markovian policies.

When an agent is operating under a policy $\pi\in\Pi$ and the mean-field
population flow is $\bL:=(\mL_{t})_{t\geq0}$, we define the expected
cumulative discounted reward (or value function) of this agent as
\begin{align*}
	V^{\pi}(s,\boldsymbol{\mL}):=\E\bigg[{ \sum_{t=0}^{\infty} } \gamma^{t}r(s_{t},a_{t},\mL_{t})\mid s_{0}=s\bigg],
\end{align*}
where $a_{t}\sim\pi(\cdot|s_{t}),$ $s_{t+1}\sim\P(\cdot|s_{t},a_{t},\mL_{t})$,
and $\gamma\in(0,1)$ is the discount factor. The goal of this agent
is to find a policy $\pi$ that maximizes $V^{\pi}(s,\boldsymbol{\mL})$
while interacting with the mean-field $\bL$. 

We are interested in finding a \emph{stationary (time-independent)
Nash Equilibrium} (NE) of the game, which is a policy-population pair
$(\pi^{*},\mL^{*})\in\Pi\times\simplex(\mS)$ satisfying the following
two properties:
\begin{itemize}[leftmargin=7mm]
\item (Agent rationality) $V^{\pi^{*}}(s,\mL^{*})\geq V^{\pi}(s,\mL^{*}),\forall\pi\in\Pi,s\in\mS.$
\item (Population consistency) $\mL_{t}=\mL^{*},\forall t$ under policy
$\pi^{*}$ with initial mean-field state $\mL_{0}=\mL^{*}$.
\end{itemize}
That is, $\pi^{*}$ is the optimal policy under the mean-field $\mL^{*}$,
and $\mL^{*}$ remains fixed under $\pi^{*}$. We formalize the notion
of NE in Section~\ref{sec:regularization} after introducing a more
general setting of MFG.

\subsection{Mean Embedding of Mean-Field States\label{sec:mean_embedding}}

\textbf{}Note that the mean-field state $\mL^{*}$ is a distribution
over the states. When the state space is continuous, the NE $(\pi^{*},\mL^{*})$
is an infinite dimensional object, posing challenges for learning
the NE. To overcome this challenge, we make use of a succinct representation
of the mean-field via mean embedding, which embeds the mean-field
states into a reproducing kernel Hilbert space (RKHS) \citep{smola2007hilbert,gretton2008kernel,sriperumbudur2010hilbert}.
 Specifically, given a positive definite kernel $k:\mS\times\mS\rightarrow\R$,
let $\H$ be the associated RKHS endowed with the inner product $\left\langle \cdot,\cdot\right\rangle _{\mH}$
and norm $\left\Vert \cdot\right\Vert _{\mH}$. For each $\mL\in\simplex(\mS)$,
its mean embedding $\mu_{\mL}\in\mH$ is defined as 
\[
\mu_{\mL}(s):=\E_{x\sim\mL}\left[k(x,s)\right],\quad\forall s\in\mS.
\]
Let $\muspace:=\left\{ \mu_{\mL}:\mL\in\simplex(\mS)\right\} \subseteq\mH$
be the set of all possible mean embeddings. Note that when $k$ is
the identity kernel, we have $\mu_{\mL}=\mL$ and $\muspace=\simplex(\mS)$
. On the other hand, when $k$ is more structured (e.g., with a fast
decaying eigen spectrum), $\muspace$ has significantly lower complexity
than the set $\simplex(\mS)$ of raw mean-field states.

We assume that the MFG respects the mean embedding structure, in the
sense that the reward $r:\mS\times\mA\times\muspace\to[0,R_{\max}]$
and transition kernel $\P:\mS\times\mA\times\muspace\to\simplex(\mS)$
(with a slight abuse of notation) depend on the mean-field state $\mL$
through its mean embedding representation $\mu_{\mL}$. In particular,
at each time $t$ with state $s_{t}$ and mean-field state $\mL_{t}$,
the representative agent takes action $a_{t}\sim\pi(\cdot|s_{t})$, receives
reward $r(s_{t},a_{t},\mu_{\mL_{t}})$ and then transitions to a new
state $s_{t+1}\sim\P(\cdot|s_{t},a_{t},\mu_{\mL_{t}})$. The NE of
the game is defined analogously. As mentioned, when $k$ is the identity
kernel, the above setting reduces to the standard setting in Section~\ref{sec:mean_field_games}
with raw-mean field states.

We impose a standard regularity condition on the kernel $k$.
\begin{assumption}
\label{assu:RKSH_kernel}The kernel $k$ is bounded and universal,
in the sense that $k(s,s)\leq1,\forall s\in\mS$ and the corresponding
RKHS $\mH$ is dense w.r.t.~the $L_{\infty}$ norm in the space of
continuous functions on $\mS$.
\end{assumption}
Assumption \ref{assu:RKSH_kernel} is standard in the kernel learning
literature \citep{caponnetto2007optimal,muandet2012learning,szabo2015two,lin2017distributed}.
 When the kernel is bounded, the embedding of each $\mL\in\simplex(\mS)$
satisfies $\left\Vert \mu_{\mL}\right\Vert _{\mH}\leq\int_{x\sim\mL}\left\Vert k(x,\cdot)\right\Vert _{\mH}\textup{d}x\leq1.$
When one uses a universal kernel (e.g., Gaussian or Laplace kernel),
the mean embedding mapping is injective and hence each embedding $\mu\in\muspace$
uniquely characterizes a distribution $\mL$ in $\simplex(\mS)$ \citep{gretton2008kernel,gretton2012kernel}.

\subsection{Entropy Regularization\label{sec:regularization}}

 To ensure the uniqueness of the NE and achieve fast algorithmic
convergence, we use an entropy regularization approach \citep{cen2020fast,shani2019adaptive,nachum2017bridging},
which augments the standard expected reward objective with an entropy
term of the policy. In particular, we define the entropy-regularized
value function as 
\begin{align*}
V_{\mu}^{\lambda,\pi}(s):=\E_{a_{t}\sim\pi(\cdot|s_{t}),s_{t+1}\sim\P(\cdot|s_{t},a_{t},\mu)}\bigg[ { \sum_{t=0}^{\infty} } \gamma^{t} [r(s_{t},a_{t},\mu)-\lambda\log\pi(a_{t}|s_{t}) ]\mid s_{0}=s\bigg],
\end{align*}
where the parameter $\lambda>0$ controls the regularization level
and $\mu$ is the mean-embedding of some given mean-field state (fixed
over time). Equivalently, one may view $V_{\mu}^{\lambda,\pi}$ as
the usual value function of $\pi$ with an entropy-regularized reward
\begin{equation}
r_{\mu}^{\lambda,\pi}(s,a):=r(s,a,\mu)-\lambda\log\pi(a|s),\qquad\forall s\in\mS,a\in\mA.\label{eq:ER_reward}
\end{equation}
Also define the $Q$-function of a policy $\pi$ as
\begin{align}
Q_{\mu}^{\lambda,\pi}(s,a) & =r(s,a,\mu)+\gamma\E\left[V_{\mu}^{\lambda,\pi}(s_{1})\mid s_{0}=s,a_{0}=a\right],\label{eq:ER_Q}
\end{align}
which is related to the value function as
\begin{align}
V_{\mu}^{\lambda,\pi}(s) & =\E_{a\sim\pi(\cdot|s)}\left[Q_{\mu}^{\lambda,\pi}(s,a)-\lambda\log\pi(a|s)\right]=\left\langle Q_{\mu}^{\lambda,\pi}(s,\cdot),\pi(\cdot|s)\right\rangle +\entro\left(\pi(\cdot|s)\right),\label{eq:ER_V_ER_Q}
\end{align}
where $\entro\left(\pi(\cdot|s)\right):=-\sum_{a}\pi(a|s)\log\pi(a|s)$
is the Shannon entropy of the distribution $\pi(\cdot|s)$. Since
the reward function $r$ is assumed to be $R_{\max}$-bounded, it
is easy to show that the Q-function is also bounded as $\left\Vert Q_{\mu}^{\lambda,\pi}\right\Vert _{\infty}\le Q_{\max}:= (R_{\max}+\gamma\lambda\log\left|\mA\right| ) / (1-\gamma);$
see Lemma~\ref{lem:optimal_ER_MDP}.

\paragraph{Single-Agent MDP.}

When the mean-field state and its mean-embedding remain fixed over
time, i.e., $\mL_{t}=\mL$ and $\mu_{\mL_{t}}=\mu,\forall t$, a representative
agent aims to solve the optimization problem 
\begin{align}
\max_{\pi:\mS\rightarrow\simplex(\mA)} & V_{\mu}^{\lambda,\pi}(s)\label{eq:ER_MDP_L}
\end{align}
for each $s\in\mS$. This problem corresponds to finding the (entropy-regularized)
optimal policy for a single-agent discounted MDP, denoted by $\mdp_{\mu}:=\left(\mS,\mA,\P(\cdot|\cdot,\cdot,\mu),r(\cdot,\cdot,\mu),\gamma\right)$,
that is induced by $\mu\in\muspace$. Let $\pi_{\mu}^{\lambda,*}$
be the optimal solution to the problem~(\ref{eq:ER_MDP_L}), that
is, the optimal regularized policy of $\mdp_{\mu}$. The optimal policy
is unique whenever $\lambda>0$. One can thus define a mapping $\Gamma_{1}^{\lambda}:\muspace\to\Pi$
via
$
\Gamma_{1}^{\lambda}(\mu)=\pi_{\mu}^{\lambda,*}\,,
$
which maps each embedded mean-field state $\mu$ to the optimal regularized
policy $\pi_{\mu}^{\lambda,*}$ of $\mdp_{\mu}$. Let $Q_{\mu}^{\lambda,*}$
be the optimal regularized Q-function corresponding to the optimal
policy $\pi_{\mu}^{\lambda,*}$. 

Throughout the paper, we fix a state distribution $\nu_{0}\in\simplex(\mS)$,
which will serve as the initial state of our policy optimization algorithm.
For each $\mu\in\muspace$ and a policy $\pi:\mS\rightarrow\simplex(\mA)$,
define 
\begin{equation}
J_{\mu}^{\lambda}(\pi):=\E_{s\sim\nu_{0}}\left[V_{\mu}^{\lambda,\pi}(s)\right]\label{eq:expected_value_function}
\end{equation}
as the expectation of the value function $V_{\mu}^{\lambda,\pi}(s)$
of policy $\pi$ on the regularized $\mdp_{\mu}$. We define the discounted
state visitation distribution $\rho_{\mu}^{\pi}$ induced by a policy
$\pi$ on $\mdp_{\mu}$ as:
\begin{equation}
\vspace{-0.1in}
\rho_{\mu}^{\pi}(s):=(1-\gamma)\sum_{t=0}^{\infty}\gamma^{t}\mathbb{P}(s_{t}=s),\label{eq:def_state_visitation_dist}
\end{equation}
where $\mathbb{P}(s_{t}=s)$ is the state distribution when $s_{0}\sim\nu_{0}$
and the actions are chosen according to $\pi$.

\paragraph{Mean-field Dynamics.}

When all agents follow the same policy $\pi$, we can define another
mapping $\Gamma_{2}:\Pi\times\muspace\rightarrow\muspace$ that describes
the dynamic of the embedded mean-field state. In particular, given
the current embedding $\mu$ corresponding to some mean-field state
$\mL$, the next embedded mean-field state $\mu^{+}=\Gamma_{2}(\pi,\mu)$
is given by 
\begin{equation}
\mL^{+}(s')=\int_{\mS}\sum_{a\in\mA}\mL(s)\pi(a|s)\P(s'|s,a,\mu)\ddup s,\qquad\mu^{+}=\mu_{\mL^{+}}\,.\label{eq:mean_Gamma2}
\end{equation}
Note that the evolution of the mean-field depends on the agents' policy
in a deterministic manner.

\paragraph{Entropy-regularized Mean-field Nash Equilibrium (NE).}

With the above notations, we can formally define our notion of equilibrium.
\begin{defn}
A stationary (time-independent) entropy-regularized Nash equilibrium
for the MFG is a policy-population pair $(\pi^{*},\mu^{*})\in\Pi\times\muspace$
that satisfies 
\begin{align*}
\text{(agent rationality)}\qquad\pi^{*} & =\Gamma_{1}^{\lambda}(\mu^{*}),\\
\text{(population consistency)}\qquad\mu^{*} & =\Gamma_{2}(\pi^{*},\mu^{*}).\qquad\qquad\qquad\qquad\qquad
\end{align*}
\end{defn}
When $\lambda=0$, the above definition reduces to that of the (unregularized)
NE discussed in Section~\ref{sec:mean_field_games}, which requires
$\pi^{*}$ to the unregularized optimal policy of $\mdp_{\mu^{*}}$.
For general values of $\lambda$, the regularized NE $(\pi^{*},\mu^{*})$
approximates the unregularized NE \citep{geist2019theory}, in the
sense that $\pi^{*}$ is an approximate optimal policy of $\mdp_{\mu^{*}}$
satisfying 
\begin{align}
\max_{\pi\in\Pi}\{ J_{\mu^{*}}^{0}(\pi)\} -J_{\mu^{*}}^{\lambda}(\pi^{*})\le \frac{\lambda\log |\mA |}{1-\gamma}. \label{eq:ER_optimality_gap}
\end{align}

One may further define the composite mapping $\Lambda^{\lambda}:\muspace\to\muspace$
as $\Lambda^{\lambda}(\mu)=\Gamma_{2}\left(\Gamma_{1}^{\lambda}(\mu),\mu\right).$
When $\Lambda^{\lambda}$ is a contraction, the regularized NE exists and is unique \citep{guo2019learning}.
Moreover, the iterates $\left\{ (\pi_{t},\mu_{t})\right\} _{t\ge0}$
given by the two-step update
\[
\pi_{t}=\Gamma_{1}^{\lambda}(\mu_{t}),\qquad\mu_{t+1}=\Gamma_{2}(\pi_{t},\mu_{t})
\]
converge to the regularized NE at a linear rate. Note that the first
step above requires an oracle for computing the exact optimal policy
$\pi_{\mu_{t}}^{\lambda,*}$. In most cases, such an exact oracle
is not available; various single-agent reinforcement learning algorithms
have been considered for computing an approximate optimal policy,
including Q-learning \citep{guo2019learning} and policy gradient
methods \citep{guo2020general,subramanian2019reinforcement}. The
recent work by \citet{elie2020convergence} considers fictitious play
iterative learning scheme. We remark that their convergence guarantee
requires being able to compute the approximate optimal policy to an
arbitrary precision with high probability.

\section{Fictitious Play Algorithm for MFG}

In this section, we present a fictitious play algorithm, which simultaneously
estimates the policy $\pi^{*}$ and the embedded mean-field state
$\mu^{*}$ of the NE. As given in Algorithm \ref{alg:Fictitious-Play-embedded},
each iteration of the algorithm involves three steps: policy evaluation
(line 3), policy improvement (line 4), and updating the embedded mean-field
state (line 5). Below we explain each step in more details.

\begin{algorithm}[h]
\caption{\label{alg:Fictitious-Play-embedded}Mean-Embedded Fictitious Play}

\begin{algorithmic}[1]

\State Input: initial estimate $(\pi_{0},\mu_{0})$, step size sequence
$\{\alpha_{t},\beta_{t}\}_{t\geq0}$, mixing parameter $\eta$.

\For{Iteration $t=0,1,2,\ldots,T-1$ }

\State (Policy evaluation step) Compute an approximate version $\Qhat_{t}^{\lambda}:\mS\times\mA\to[0,Q_{\max}]$
of the \mbox{~~~~~~}Q-function $Q_{\mu_{t}}^{\lambda,\pi_{t}}$ of policy $\pi_{t}$
with respect to the entropy-regularized $\mdp_{\mu_{t}}$

\State (Policy improvement step) Update the policy by 
\begin{align}
\pihat_{t+1}(\cdot|s) & \propto\left(\pi_{t}(\cdot|s)\right)^{1-\alpha_{t}\lambda}\exp \big(\alpha_{t}\Qhat_{t}^{\lambda}(s,\cdot)\big)\label{eq:policy_update}\\
\pi_{t+1}(\cdot|s) & =(1-\eta)\pihat_{t+1}(\cdot|s)+\eta \cdot  \boldsymbol{1}_{|\mA|}(\cdot) / |\mA| \label{eq:policy_mix}
\end{align}

\State Update the embedded mean-field state by
\begin{equation}
\mu_{t+1}=(1-\beta_{t})\mu_{t}+\beta_{t}\cdot\Gamma_{2}(\pi_{t+1},\mu_{t}).\label{eq:mean_field_update}
\end{equation}

\EndFor

\State Output: $\left\{ (\ensuremath{\pi_{t},}\ensuremath{\mu_{t}})\right\} _{t=1,\ldots,T}$

\end{algorithmic}
\end{algorithm}

\paragraph{Policy Evaluation. }

In each iteration, we first evaluate the current policy $\pi_{t}$
with respect to the regularized single-agent $\mdp_{\mu_{t}}$ induced
by the current mean-field estimate $\mu_{t}$. In particular, we compute
an approximation $\Qhat_{t}^{\lambda}$ of the true Q-function $Q_{t}^{\lambda}:=Q_{\mu_{t}}^{\lambda,\pi_{t}}$,
which can be done using, e.g., TD(0) or LSTD methods. Our theorem
characterizes how convergence depends on the policy evaluation error
in this step.\textbf{}

\paragraph{Policy Improvement.}

To update our policy estimate $\pi_{t}$, we first compute an intermediate
policy $\pihat_{t+1}$ by a \emph{single} policy improvement step:
for each $s\in\mS$, 
\begin{equation}
\pihat_{t+1}(\cdot|s)=\argmax_{\pi(\cdot|s)\in\simplex(\mA)}\Big\{ \alpha_{t}\big\langle \Qhat_{t}^{\lambda}(s,\cdot)-\lambda\log\pi_{t}(\cdot|s),\pi(\cdot|s)-\pi_{t}(\cdot|s)\big\rangle -\KL\left(\pi(\cdot|s)\Vert\pi_{t}(\cdot|s)\right)\Big \} ,\label{eq:policy_improvement}
\end{equation}
where $\alpha_{t}>0$ is the stepsize. This step corresponds to one
iteration of Proximal Policy Optimization (PPO) \citep{schulman2017proximal}.
It can also be viewed as one mirror descent iteration, where the shifted
Q-function $\Qhat_{t}^{\lambda}(s,\cdot)-\lambda\log\pi_{t}(\cdot|s)$
plays the role of the gradient. The maximizer $\widehat{\pi}_{t+1}$
in equation~(\ref{eq:policy_improvement}) can be computed in closed
form as done in equation~(\ref{eq:policy_update}) in Algorithm~\ref{alg:Fictitious-Play-embedded}.
We then compute the new policy $\pi_{t+1}$ by mixing $\pihat_{t+1}$
with a small amount of uniform distribution, as done in equation~(\ref{eq:policy_mix}).
``Mixing in'' a uniform distribution is a standard technique to
prevent the policy from approaching the boundary of the probability
simplex and becoming degenerate. Doing so allows us to upper bound
a quantity of the form $\KL\left(p\,\Vert\,\pi_{t+1}(\cdot|s)\right)$
(cf.\ Lemma~\ref{lem:mix_diff_bound}), which otherwise may be infinite.
It also ensures that the KL divergence satisfies a Lipschitz condition
(cf.\ Lemma~\ref{lem:mix_KL_lipschitz}).

\paragraph{Mean-field Update.}

We next compute an updated (embedded) mean-field state $\mu_{t+1}$
as a weighted average of the current $\mu_{t}$ and the mean-field
state $\Gamma_{2}(\pi_{t+1},\mu_{t})$ induced by the new policy $\pi_{t+1}$,
namely, $\mu_{t+1}=(1-\beta_{t})\mu_{t}+\beta_{t}\cdot\Gamma_{2}(\pi_{t+1},\mu_{t}),$
where $\beta_{t}\in(0,1)$ is the stepsize. This update can be viewed
as a single step of the (soft) fixed point iteration for the equation
$\mu=\Gamma_{2}(\pi_{t+1},\mu)$. 

We remark that our algorithm is similar to the classical fictitious
play approach for finding NEs, where each agent plays a response to
the empirical average of its opponent's past behaviors. In our algorithm,
the representative agent views the population of all agents collectively
as an opponent. Expanding the recursion (\ref{eq:policy_update})
and ignoring the difference between $\widehat{\pi}_{t+1}$ and $\pi_{t+1}$,
we can write the policy $\pi_{t+1}$ as 
\[
\pi_{t+1}(\cdot|s)\propto\exp\biggl ({ \sum_{\tau=0}^{t} } w_{\tau}\Qhat_{\tau}^{\lambda}(s,\cdot)\biggr )
\]
for some positive weights $\{w_{\tau}\}$. Therefore, the representative
agent is playing a policy that responds to the (weighted) average
of all previous Q functions, which reflects the representative agent's
belief on the aggregate population policy. 

Also note that our algorithm only performs a single policy improvement
step to compute the updated policy $\pi_{t+1}$. It is unnecessary
to compute the exact optimal policy $\pi_{t+1}^{*}=\Gamma_{1}^{\lambda}(\mu_{t})$
under $\mu_{t}$ (which would require an inner loop for solving $\mdp_{\mu_{t}})$,
as $\mu_{t}$ is only an approximate anyway of the true NE mean-field
$\mu^{*}$. Our algorithm updates $\pi_{t}$ and $\mu_{t}$ simultaneously
within a single loop.

\section{Main Results}

In this section, we establish the theoretical guarantees on learning
the regularized NE $(\pi^{*},\mu^{*})$ of the MFG for our fictitious
play algorithm. To state our theorem, we first discuss several regularity
assumptions on the MFG model. Recall the definition~(\ref{eq:def_state_visitation_dist})
of the discounted state visitation distribution and let $\rho^{*}:=\rho_{\mu^{*}}^{\pi^{*}}\in\simplex(\mS)$
be the visitation distribution induced by the NE $(\pi^{*},\mu^{*})$.
We make use of the following distance metric between two policies
$\pi,\pi'\in\Pi$:
\begin{equation}
D(\pi,\pi'):=\E_{s\sim\rho^{*}}\left[\left\Vert \pi(\cdot|s)-\pi'(\cdot|s)\right\Vert _{1}\right].\label{eq:def_policy_distance}
\end{equation}

As in the classical MFG literature \citep{guo2020general,saldi2018markov},
we assume certain Lipschitz properties for the two mappings $\Gamma_{1}^{\lambda}:\muspace\to\Pi$
and $\Gamma_{2}:\Pi\times\muspace\to\muspace$ defined in Section~\ref{sec:regularization}.
The first assumption states that $\Gamma_{1}^{\lambda}(\mu)$ is Lipschitz
in the mean-embedded mean-field state $\mu$ with respect to the RKHS
norm.
\begin{assumption}
\label{assu:Lipz_Gamma_1}There exists a constant $d_{1}>0$, such
that for any $\mu,\mu'\in\muspace,$ it holds that
\[
D\left(\Gamma_{1}^{\lambda}(\mu),\Gamma_{1}^{\lambda}(\mu')\right)\le d_{1}\left\Vert \mu-\mu'\right\Vert _{\H}.
\]
\end{assumption}
The second assumption states that $\Gamma_{2}(\pi,\mu)$ is Lipschitz
in each of its arguments when the other argument is fixed. 
\begin{assumption}
\label{assu:Lipz_Gamma_2}There exist constants $d_{2}>0,d_{3}>0$
such that for any policies $\pi,\pi'\in\Pi$ and embedded mean-field
states $\mu,\mu'\in\muspace$, it holds that
\begin{align*}
\left\Vert \Gamma_{2}(\pi,\mu)-\Gamma_{2}(\pi',\mu)\right\Vert _{\H}   \le d_{2}D\left(\pi,\pi'\right), 
\quad \left\Vert \Gamma_{2}(\pi,\mu)-\Gamma_{2}(\pi,\mu')\right\Vert _{\H}   \le d_{3}\left\Vert \mu-\mu'\right\Vert _{\H}.
\end{align*}
\end{assumption}
Assumptions \ref{assu:Lipz_Gamma_1} and~\ref{assu:Lipz_Gamma_2}
immediately imply Lipschitzness of the composite mapping $\Lambda^{\lambda}:\muspace\to\muspace$,
which we recall is defined as $\Lambda^{\lambda}(\mu)=\Gamma_{2}\left(\Gamma_{1}^{\lambda}(\mu),\mu\right).$
The proof is provided in Appendix \ref{subsec:proof_Lipz_Lambda}.
\begin{lem}
\label{lem:Lipz_Lambda}Suppose Assumptions \ref{assu:Lipz_Gamma_1}
and~\ref{assu:Lipz_Gamma_2} hold. Then for each $\mu,\mu'\in\muspace,$
it holds that
\[
\left\Vert \Lambda^{\lambda}(\mu)-\Lambda^{\lambda}(\mu')\right\Vert _{\H}\leq(d_{1}d_{2}+d_{3})\left\Vert \mu-\mu'\right\Vert _{\H}.
\]
\end{lem}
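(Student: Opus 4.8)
The plan is to reduce the claim to a single application of the triangle inequality in $\H$ followed by the two Lipschitz assumptions, exploiting that $\Lambda^{\lambda}$ is the composition of $\Gamma_1^{\lambda}$ and $\Gamma_2$. Fixing $\mu,\mu'\in\muspace$, I would unfold the definition $\Lambda^{\lambda}(\mu)=\Gamma_2(\Gamma_1^{\lambda}(\mu),\mu)$ and insert the hybrid term $\Gamma_2(\Gamma_1^{\lambda}(\mu'),\mu)$, which keeps the second argument fixed at $\mu$ while switching only the policy argument. The triangle inequality then splits $\|\Lambda^{\lambda}(\mu)-\Lambda^{\lambda}(\mu')\|_{\H}$ into two pieces: one in which only the policy argument of $\Gamma_2$ varies, and one in which only its mean-field argument varies.

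For the first piece, $\|\Gamma_2(\Gamma_1^{\lambda}(\mu),\mu)-\Gamma_2(\Gamma_1^{\lambda}(\mu'),\mu)\|_{\H}$, the second argument is held fixed, so the first bound in Assumption~\ref{assu:Lipz_Gamma_2} gives an upper bound of $d_2\,D(\Gamma_1^{\lambda}(\mu),\Gamma_1^{\lambda}(\mu'))$; feeding the policy distance into Assumption~\ref{assu:Lipz_Gamma_1} then yields $d_1 d_2\|\mu-\mu'\|_{\H}$. For the second piece, $\|\Gamma_2(\Gamma_1^{\lambda}(\mu'),\mu)-\Gamma_2(\Gamma_1^{\lambda}(\mu'),\mu')\|_{\H}$, the policy argument $\Gamma_1^{\lambda}(\mu')$ is held fixed, so the second bound in Assumption~\ref{assu:Lipz_Gamma_2} directly gives $d_3\|\mu-\mu'\|_{\H}$. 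Summing the two pieces produces the coefficient $d_1 d_2 + d_3$, as claimed.

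There is no genuine obstacle here; this is a routine composition-of-Lipschitz-maps argument. The only point requiring care is the choice of hybrid term, which must hold exactly one argument of $\Gamma_2$ fixed at each step so that each of the two bounds in Assumption~\ref{assu:Lipz_Gamma_2} applies cleanly; inserting $\Gamma_2(\Gamma_1^{\lambda}(\mu),\mu')$ instead would work symmetrically and give the same constant. Note that the policy metric $D$ defined in~(\ref{eq:def_policy_distance}) enters only as the intermediate distance linking $\Gamma_1^{\lambda}\to\Gamma_2$, and its specific form (an $\E_{s\sim\rho^*}$ of an $\ell_1$ policy gap) plays no role in the argument — it is used purely as the abstract distance through which Assumptions~\ref{assu:Lipz_Gamma_1} and~\ref{assu:Lipz_Gamma_2} are chained.
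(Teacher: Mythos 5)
Your proof is correct and is essentially identical to the paper's own argument: the same hybrid term $\Gamma_{2}\left(\Gamma_{1}^{\lambda}(\mu'),\mu\right)$, the triangle inequality in $\H$, then Assumption~\ref{assu:Lipz_Gamma_2} on each piece followed by Assumption~\ref{assu:Lipz_Gamma_1} on the policy distance. Nothing is missing.
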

We next impose an assumption on the boundedness of certain concentrability
coefficients. This type of assumption, standard in analysis of policy
optimization algorithms \citep{kakade2002approximately,shani2019adaptive,bhandari2019global,agarwal2020optimality},
allows one to define the policy optimization error in an average-case
sense with respect to appropriate distributions over the states. 
\begin{assumption}[Finite Concentrability Coefficients]
\label{assu:concentrability} There exist two constants $C_{\rho},\overline{C}_{\rho}>0$
such that for each $\mu\in\muspace,$ it holds that
\[
\Biggl \|  \frac{ \rho_{\mu}^{\pi_{\mu}^{\lambda,*} }}{ \rho^{*} }\Biggr \|  _{\infty}
	:=\sup_{s} \Biggl [  \frac{ \rho_{\mu}^{\pi_{\mu}^{\lambda,*}}(s) }{ \rho^{*}(s) } \Biggr ] 
	\le C_{\rho}
	\quad\text{and}\quad
	\Biggl \{ \E_{s\sim\rho_{\mu}^{\pi_{\mu}^{\lambda,*}}}\Bigg [\bigg  | \frac{ \rho^{*}(s) } { \rho_{\mu}^{\pi_{\mu}^{\lambda,*}}(s) } \bigg|^{2}\Bigg ]\Biggr \} ^{1/2}
	\le\overline{C}_{\rho}.
\]
\end{assumption}
Finally, our last assumption stipulates that the state visitation
distributions are smooth with respect to the (embedded) mean-field
states of the MFG. This assumption is analogous to those in the literature
on MDP and two-player games \citep{fei2020dynamic,radanovic2019learning},
which requires the visitation distributions to be smooth with respect
to the policy. 
\begin{assumption}
\label{assu:Lipz_visitation}There exists a constant $d_{0}>0$, such
that for any $\mu,\mu'\in\mathcal{M},$ it holds that
the discounted state visitation distributions induced by the corresponding optimal policy $\pi_{\mu}^{\lambda,*}$ for regularized $\mdp_{\mu}$ and $\pi_{\mu'}^{\lambda,*}$ for regularized
$\mdp_{\mu'}$ satisfy 
\[
\Big \Vert \rho_{\mu}^{\pi_{\mu}^{\lambda,*}}-\rho_{\mu'}^{\pi_{\mu'}^{\lambda,*}}\Big \Vert _{1}\leq d_{0}\left\Vert \mu-\mu'\right\Vert _{\H}.
\]
\end{assumption}
We now state our theoretical guarantees on the convergences of the
policy-population sequence $\{\pi_{t},\mu_{t}\}$ in Algorithm~\ref{alg:Fictitious-Play-embedded}
to the NE $\left\{ \pi^{*},\mu^{*}\right\} $. For the estimates of
the embedded mean-field states, it is natural to consider the distance
$\left\Vert \mu_{t}-\mu^{*}\right\Vert _{\H}$ in RKHS norm. For convergence
to NE policy $\mu^{*}$, recall that $\mu^{*}$ is the optimal policy
to $\mdp_{\mu^{*}}$, and each iteration of our algorithm involves
a single policy improvement step to compute $\pi_{t+1}$ rather than
solving $\mdp_{\mu_{t}}$ to its optimal policy $\pi_{t+1}^{*}:=\Gamma_{1}^{\lambda}(\mu_{t})$.
As such, we analyze the difference between these two policies in terms
of $D\left(\pi_{t+1},\pi_{t+1}^{*}\right)$, where the metric $D$
is defined in equation~(\ref{eq:def_policy_distance}). Also let
$\rho_{t}^{*}:=\rho_{\mu_{t}}^{\pi_{t+1}^{*}}$ denote the discounted
visitation distribution induced by the optimal policy $\pi_{t+1}^{*}$
of $\mdp_{\mu_{t}}.$\footnote{The subscript in $\rho_{t}^{*}$ emphasizes that $\rho_{t}^{*}$ only
depends on the mean-field state $\mu_{t}$ at time $t$ through $\pi_{t+1}^{*}=\Gamma_{1}^{\lambda}(\mu_{t})$.} With the above considerations in mind, we have the following theorem,
which is proved in Appendix~\ref{sec:proof_main_thm}. .
\begin{thm}
\label{thm:main}Suppose that Assumptions \ref{assu:RKSH_kernel}--\ref{assu:Lipz_visitation}
hold and $d_{1}d_{2}+d_{3}<1$ and that the error in the policy evaluation
step in Algorithm~\ref{alg:Fictitious-Play-embedded} satisfies 
\[
\E_{s\sim\rho_{t}^{*}}\Big[\bigl \Vert Q_{t}^{\lambda}(s,\cdot)-\Qhat_{t}^{\lambda}(s,\cdot)\big \Vert _{\infty}^{2}\Big ]\leq\varepsilon^{2},\qquad\forall t\in[T].
\]
With the choice of 
\[
\eta=c_{\eta}T^{-1},\qquad\alpha_{t}\equiv\alpha=c_{\alpha}T^{-2/5},\qquad\beta_{t}\equiv\beta=c_{\beta}T^{-4/5},
\]
for some universal constants $c_{\eta}>0$, $c_{\alpha}>0$ and $c_{\beta}>0$
in Algorithm \ref{alg:Fictitious-Play-embedded}, the resulting policy
and embedded mean-field state sequence $\{ (\pi_{t},\mu_{t})\} _{t=1}^{T}$
satisfy
\begin{align}
D\Big (\frac{1}{T} { \sum_{t=1}^{T}} \pi_{t},\frac{1}{T} { \sum_{t=1}^{T}}\pi_{t}^{*}\Big )\leq \frac{1}{T} { \sum_{t=1}^{T}} D (\pi_{t},\pi_{t}^{*} ) & \lesssim \frac{1}{\sqrt{\lambda}} \cdot \Big ( \sqrt{\log T} \cdot T^{- 1/5} +\sqrt{\varepsilon} \Big ),\label{eq:main_policy_bound}\\
\Big \Vert \frac{1}{T} { \sum_{t=1}^{T}} \mu_{t}-\mu^{*} \Big\Vert _{\H}\le \frac{1}{T} { \sum_{t=1}^{T}}  \Vert \mu_{t}-\mu^{*} \Vert_{\H} & \lesssim \frac{1}{\sqrt{\lambda}} \cdot \Big ( \sqrt{\log T} \cdot T^{- 1/5} +\sqrt{\varepsilon} \Big).\label{eq:main_mean_field_bound}
\end{align}
\end{thm}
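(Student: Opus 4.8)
The plan is to derive two coupled one-step recursions---one for the embedded mean-field error $m_{t}:=\|\mu_{t}-\mu^{*}\|_{\H}$ and one for a KL-divergence potential that controls the policy tracking error $D(\pi_{t},\pi_{t}^{*})$---and then telescope both, choosing the stepsizes so that every residual term is forced to the common rate $T^{-1/5}$. Throughout, I write $\pi_{t+1}^{*}=\Gamma_{1}^{\lambda}(\mu_{t})$ for the optimal policy of $\mdp_{\mu_{t}}$ that the $t$-th PPO step targets.

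First I would establish the mean-field recursion. Writing $\mu_{t+1}-\mu^{*}=(1-\beta)(\mu_{t}-\mu^{*})+\beta(\Gamma_{2}(\pi_{t+1},\mu_{t})-\mu^{*})$, inserting $\Gamma_{2}(\pi_{t+1}^{*},\mu_{t})=\Lambda^{\lambda}(\mu_{t})$ and $\mu^{*}=\Lambda^{\lambda}(\mu^{*})$, and applying the triangle inequality together with Assumption~\ref{assu:Lipz_Gamma_2} and Lemma~\ref{lem:Lipz_Lambda} gives
\[
m_{t+1}\le(1-\beta\kappa)\,m_{t}+\beta d_{2}\,D(\pi_{t+1},\pi_{t+1}^{*}),\qquad\kappa:=1-(d_{1}d_{2}+d_{3})>0.
\]
The boundedness $\|\mu_{\mL}\|_{\H}\le1$ from Assumption~\ref{assu:RKSH_kernel} also yields $\|\mu_{t+1}-\mu_{t}\|_{\H}\le2\beta$, which I reuse below.

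The crux is the policy recursion for the potential $P_{t}:=\E_{s\sim\rho^{*}}[\KL(\pi_{t}^{*}(\cdot|s)\,\Vert\,\pi_{t}(\cdot|s))]$, which by Pinsker's and Jensen's inequalities dominates $\tfrac{1}{2}D(\pi_{t},\pi_{t}^{*})^{2}$. I would analyze the single PPO step~\eqref{eq:policy_improvement} as one mirror-ascent iteration on the entropy-regularized $\mdp_{\mu_{t}}$ and invoke the three-point identity for the KL-Bregman proximal map with comparator $\pi_{t+1}^{*}$. Exploiting the optimality of $\pi_{t+1}^{*}$ for $\mdp_{\mu_{t}}$ and the $\lambda$-strong convexity induced by the entropy term produces a contraction of $\E_{s\sim\rho^{*}}[\KL(\pi_{t+1}^{*}\Vert\pi_{t+1})]$ toward $\pi_{t+1}^{*}$ with factor $(1-\alpha\lambda)$, while the second-order term of the proximal step contributes $O(\alpha^{2}Q_{\max}^{2})$ with $Q_{\max}$ as in Lemma~\ref{lem:optimal_ER_MDP}. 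Three error sources must then be absorbed: (i) the inexact critic $\Qhat_{t}^{\lambda}-Q_{t}^{\lambda}$ enters linearly and is bounded by $O(\alpha\varepsilon)$ after a change of measure from $\rho_{t}^{*}$ (where the evaluation error is controlled) to $\rho^{*}$ via the concentrability coefficients of Assumption~\ref{assu:concentrability} and the visitation smoothness of Assumption~\ref{assu:Lipz_visitation}; (ii) the moving target---$P_{t}$ uses $\pi_{t}^{*}=\Gamma_{1}^{\lambda}(\mu_{t-1})$ while the step contracts toward $\pi_{t+1}^{*}=\Gamma_{1}^{\lambda}(\mu_{t})$---is handled by the KL-Lipschitz bound of Lemma~\ref{lem:mix_KL_lipschitz}, since $D(\pi_{t+1}^{*},\pi_{t}^{*})\le d_{1}\|\mu_{t}-\mu_{t-1}\|_{\H}\le2d_{1}\beta$ by Assumption~\ref{assu:Lipz_Gamma_1}, giving a drift of $O(\log(1/\eta)\,\beta)$; and (iii) replacing $\pihat_{t+1}$ by its uniform mixture $\pi_{t+1}$ costs $O(\eta)$ via Lemma~\ref{lem:mix_diff_bound}, the same mixing that keeps $\log\pi_{t}$ bounded so that the Lipschitz constant in (ii) is finite. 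Collecting these yields
\[
P_{t+1}\le(1-\alpha\lambda)\,P_{t}+c_{1}\alpha^{2}Q_{\max}^{2}+c_{2}\alpha\varepsilon+c_{3}\log(1/\eta)\,d_{1}\beta+c_{4}\eta.
\]

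Finally I would telescope. Summing the policy recursion and dividing by $\alpha\lambda$ gives $\tfrac{1}{T}\sum_{t}P_{t}\lesssim\tfrac{P_{0}}{\alpha\lambda T}+\tfrac{\alpha Q_{\max}^{2}}{\lambda}+\tfrac{\varepsilon}{\lambda}+\tfrac{\log(1/\eta)\beta}{\alpha\lambda}+\tfrac{\eta}{\alpha\lambda}$; applying Jensen and Pinsker as above and substituting $\alpha=c_{\alpha}T^{-2/5}$, $\beta=c_{\beta}T^{-4/5}$, $\eta=c_{\eta}T^{-1}$ makes the second-order term, the drift term, and the evaluation term dominant, producing exactly $\tfrac{1}{\sqrt{\lambda}}(\sqrt{\log T}\,T^{-1/5}+\sqrt{\varepsilon})$ and hence~\eqref{eq:main_policy_bound}. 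Summing the mean-field recursion gives $\tfrac{1}{T}\sum_{t}m_{t}\le\tfrac{m_{0}}{\beta\kappa T}+\tfrac{d_{2}}{\kappa}\cdot\tfrac{1}{T}\sum_{t}D(\pi_{t+1},\pi_{t+1}^{*})$, where $\tfrac{m_{0}}{\beta\kappa T}\lesssim T^{-1/5}$ under the chosen $\beta$ and the second term is controlled by~\eqref{eq:main_policy_bound}, establishing~\eqref{eq:main_mean_field_bound}; the leftmost inequalities in both displays follow from convexity of $\|\cdot\|_{\H}$ and of $D$. The main obstacle is the policy recursion: obtaining the clean $(1-\alpha\lambda)$ contraction for a single \emph{inexact} PPO step while simultaneously propagating the \emph{moving} target $\pi_{t+1}^{*}$ and reconciling the three distributions $\rho^{*}$, $\rho_{t}^{*}$, and $\rho_{\mu_{t}}^{\pi_{t+1}^{*}}$ through the concentrability and smoothness assumptions---this is precisely where the coupling between the policy and mean-field players is quantified, and it is what forces the two-timescale separation $\beta\ll\alpha$.
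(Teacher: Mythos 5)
Your overall architecture --- coupled recursions for $\left\Vert \mu_{t}-\mu^{*}\right\Vert _{\H}$ and a KL potential, two-timescale stepsizes, telescoping, and the final Pinsker/Jensen/concentrability conversion --- matches the paper, and your mean-field recursion is essentially the paper's Lemma~\ref{lem:mu_recursion}; the stepsize arithmetic is also correct. But there is a genuine gap at what you yourself call the crux: you define the policy potential $P_{t}=\E_{s\sim\rho^{*}}\left[\KL\left(\pi_{t}^{*}(\cdot|s)\Vert\pi_{t}(\cdot|s)\right)\right]$ under the \emph{fixed} measure $\rho^{*}$ and assert that mirror descent plus ``optimality of $\pi_{t+1}^{*}$ and $\lambda$-strong convexity'' yields a $(1-\alpha\lambda)$ contraction of $\E_{s\sim\rho^{*}}\left[\KL\left(\pi_{t+1}^{*}\Vert\pi_{t+1}\right)\right]$. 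This step does not go through. The one-step mirror-descent bound (the paper's Lemma~\ref{lem:one_step_MD}) is pointwise in $s$, but the linearized-improvement cross term $\left\langle Q_{t}^{\lambda}(s,\cdot)-\lambda\log\pi_{t}(\cdot|s),\pi_{t+1}^{*}(\cdot|s)-\pi_{t}(\cdot|s)\right\rangle$ has no sign control state by state; its lower bound by $\lambda\KL\left(\pi_{t+1}^{*}(\cdot|s)\Vert\pi_{t}(\cdot|s)\right)$ holds only \emph{in expectation} under the comparator's visitation distribution $\rho_{t}^{*}=\rho_{\mu_{t}}^{\pi_{t+1}^{*}}$, because that is what the performance-difference identity (Lemma~\ref{lem:performance_difference}) requires before the nonnegativity of $J_{\mu_{t}}^{\lambda}(\pi_{t+1}^{*})-J_{\mu_{t}}^{\lambda}(\pi_{t})$ can be invoked. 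You cannot transport this sign information to $\rho^{*}$: Assumption~\ref{assu:concentrability} bounds $\rho_{t}^{*}/\rho^{*}$ above pointwise and $\rho^{*}/\rho_{t}^{*}$ only in an $\ell_{2}$ sense, and a crude bound on the negative part of the cross term costs $O(\alpha)$ per step, which leaves a constant (non-decaying) residual after dividing the telescoped sum by $\alpha\lambda$.

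This is precisely why the paper defines $\gappi^{t+1}=\E_{s\sim\rho_{t}^{*}}\left[\KL\left(\pi_{t+1}^{*}\Vert\pi_{t+1}\right)\right]$ under the \emph{time-varying} measure, and then must pay for two drifts per iteration: the moving target $\pi_{t}^{*}\to\pi_{t+1}^{*}$ (your item (ii), handled as you describe but with an extra change of measure via $C_{\rho}$), and --- the step missing from your outline --- the moving \emph{measure} $\rho_{t-1}^{*}\to\rho_{t}^{*}$, i.e., the term $B_{1}$ in the paper's proof of Lemma~\ref{lem:KL_recursion}, which is bounded by $\KLmax\cdot d_{0}\left\Vert \mu_{t}-\mu_{t-1}\right\Vert _{\H}\lesssim d_{0}\beta\log\left(|\mA|/\eta\right)$ using Assumption~\ref{assu:Lipz_visitation}. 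In your proposal, Assumption~\ref{assu:Lipz_visitation} is invoked only for the critic error (item (i)), where it is not actually needed (Cauchy--Schwarz with $\overline{C}_{\rho}$ suffices); its real role is to control the measure drift, which your fixed-measure potential was designed to avoid but, for the reason above, cannot.
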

Theorem~\ref{thm:main} bounds the distance between $\pi_{t}$ and
the optimal policy $\pi_{t}^{*}$ of $\mdp_{\mu_{t}^{*}}.$ By directly
measuring the distance between $\pi_{t}$ and the NE policy $\pi^{*}$,
we can define the notion of an $\delta$-approximate NE of the game. 
\begin{defn}
\label{def:approx_NE}For each $\delta>0$, a policy-population pair
$(\pi,\mu)$ is called an $\delta$-approximate (entropy-regularized) NE of the
MFG if 
\[
D(\pi,\pi^{*})\le\delta\quad\text{and}\quad\left\Vert \mu-\mu^{*}\right\Vert _{\H}\le\delta.
\]
\end{defn}

The following corollary of Theorem~\ref{thm:main} shows that after
$T$ iterations of our algorithm, the average policy-population pair
$ (\frac{1}{T}\sum_{t=1}^{T}\pi_{t},\frac{1}{T}\sum_{t=1}^{T}\mu_{t} )$
is an $\widetilde{\mathcal{O}}\left(T^{-1/5}\right)$-approximate
NE.
\begin{cor}
\label{cor:main}Under the assumptions of Theorem~\ref{thm:main},
we have
\[
D  \Big( \frac{1}{T} { \sum_{t=1}^{T}} \pi_{t},\pi^{*}\Big)+\Big \Vert \frac{1}{T} { \sum_{t=1}^{T}} \mu_{t}-\mu^{*}\Big \Vert _{\mH}\lesssim \frac{1}{\sqrt{\lambda}} \cdot \Big ( \sqrt{\log T} \cdot T^{- 1/5} +\sqrt{\varepsilon} \Big).
\]
\end{cor}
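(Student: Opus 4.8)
The plan is to derive the corollary directly from the two bounds in Theorem~\ref{thm:main} by replacing the ``moving-target'' optimal policies $\pi_{t}^{*}$ with the fixed NE policy $\pi^{*}$. The only genuinely new ingredient is a bound on the discrepancy $D(\pi_{t}^{*},\pi^{*})$, obtained from the Lipschitz property of $\Gamma_{1}^{\lambda}$; everything else is triangle inequality plus convexity. First I would observe that both $D(\cdot,\pi^{*})$ and $\|\cdot-\mu^{*}\|_{\H}$ are convex, since each is an affine map composed with a norm (and, for $D$, followed by the expectation $\E_{s\sim\rho^{*}}$). Hence Jensen's inequality gives
\[
D\Big(\tfrac{1}{T}{\sum_{t=1}^{T}}\pi_{t},\pi^{*}\Big)\le\frac{1}{T}{\sum_{t=1}^{T}}D(\pi_{t},\pi^{*}),\qquad\Big\Vert\tfrac{1}{T}{\sum_{t=1}^{T}}\mu_{t}-\mu^{*}\Big\Vert_{\H}\le\frac{1}{T}{\sum_{t=1}^{T}}\left\Vert\mu_{t}-\mu^{*}\right\Vert_{\H}.
\]
The second of these is already controlled by the bound~\eqref{eq:main_mean_field_bound}, so the mean-field half of the corollary follows immediately.

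For the policy half, I would invoke the triangle inequality for $D$ (which holds since $D$ is an expectation of the $\ell_{1}$-norm) to split $D(\pi_{t},\pi^{*})\le D(\pi_{t},\pi_{t}^{*})+D(\pi_{t}^{*},\pi^{*})$. Recalling that $\pi_{t}^{*}=\Gamma_{1}^{\lambda}(\mu_{t-1})$ and $\pi^{*}=\Gamma_{1}^{\lambda}(\mu^{*})$, Assumption~\ref{assu:Lipz_Gamma_1} yields $D(\pi_{t}^{*},\pi^{*})\le d_{1}\left\Vert\mu_{t-1}-\mu^{*}\right\Vert_{\H}$. Averaging over $t=1,\dots,T$, the first term is bounded by~\eqref{eq:main_policy_bound}, while the second becomes $d_{1}\cdot\frac{1}{T}\sum_{t=0}^{T-1}\left\Vert\mu_{t}-\mu^{*}\right\Vert_{\H}$.

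The last step is to reconcile this shifted average with~\eqref{eq:main_mean_field_bound}, which is stated for $\frac{1}{T}\sum_{t=1}^{T}\left\Vert\mu_{t}-\mu^{*}\right\Vert_{\H}$. The two averages differ only in the endpoint terms $\left\Vert\mu_{0}-\mu^{*}\right\Vert_{\H}$ and $\left\Vert\mu_{T}-\mu^{*}\right\Vert_{\H}$; since every embedding satisfies $\left\Vert\mu\right\Vert_{\H}\le1$ under Assumption~\ref{assu:RKSH_kernel}, each endpoint contributes at most $2/T=\bigO(T^{-1})$, which is dominated by the $T^{-1/5}$ rate. Combining the two halves and absorbing the constant $d_{1}$ together with the $\bigO(T^{-1})$ correction into the leading $\sqrt{\log T}\cdot T^{-1/5}$ term completes the argument. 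I do not anticipate a serious obstacle: the argument is essentially a triangle-inequality-and-Lipschitz bookkeeping exercise, and the only point requiring genuine care is the index shift $\mu_{t-1}$ versus $\mu_{t}$, which is precisely why I track the two endpoint corrections explicitly rather than silently identifying the two averages.
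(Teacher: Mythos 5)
Your proposal is correct and follows essentially the same route as the paper's own proof: Jensen's inequality to pass the average inside, the triangle inequality $D(\pi_{t},\pi^{*})\le D(\pi_{t},\pi_{t}^{*})+D(\pi_{t}^{*},\pi^{*})$, and Assumption~\ref{assu:Lipz_Gamma_1} to bound $D(\pi_{t}^{*},\pi^{*})$ by $d_{1}$ times an RKHS distance, after which Theorem~\ref{thm:main} finishes the job. If anything you are more careful than the paper, whose proof of the corollary writes $\pi_{t}^{*}=\Gamma_{1}^{\lambda}(\mu_{t})$ and thus silently ignores the index shift (the paper's own definition is $\pi_{t+1}^{*}:=\Gamma_{1}^{\lambda}(\mu_{t})$), whereas you use $\pi_{t}^{*}=\Gamma_{1}^{\lambda}(\mu_{t-1})$ and explicitly absorb the resulting $\bigO(T^{-1})$ endpoint correction, which is indeed dominated by the $T^{-1/5}$ rate.
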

\noindent We prove this corollary in Appendix~\ref{sec:proof_cor_main}.

The above results require an $\ell_{2}$-error of $\varepsilon$ for
policy evaluation. A variety of algorithms have been shown to achieve
such a guarantees, including TD(0) and LSTD \citep{bhandari2018finite}. It is also worth emphasizing that the convergence rate to the regularized NE scales inverse proportionally with $\sqrt{\lambda},$ implying that convergence can be accelerated with a higher level of entropy regularization. On other hand, the approximation error of the regularized NE for the original unregularized NE scales proportionally with $\lambda$ (cf.~(\ref{eq:ER_optimality_gap})). Therefore, it is desirable to choose the regularization parameter $\lambda$ that balances the target accuracy level and convergence rate.

\subsection{Guarantees under Weaker Concentrability Assumption \label{sec:results_W}}

In this section, we show that the $\ell_{\infty}$ condition on concentrability
coefficient in Assumption~\ref{assu:concentrability} can be relaxed
to an $\ell_{2}$ condition of the form $\big\{\E\big[\big|\rho_{\mu}^{\pi_{\mu}^{\lambda,*}}(s)/\rho^{*}(s)\big|^{2}\big]\big\}^{1/2}\le C{}_{\rho}$, under which we can establish an $\tildeO(T^{-1/9})$ convergence
rate.

We now provided the details. Consider the following distance metric between two policies $\pi,\pi'\in\Pi$:
\begin{equation}
W(\pi,\pi'):=\sqrt{\E_{s\sim\rho^{*}}\left[\left\Vert \pi(\cdot|s)-\pi'(\cdot|s)\right\Vert _{1}^{2}\right]}.\label{eq:def_policy_distance_W}
\end{equation}
Similarly as before, we assume certain Lipschitz properties for the
two mappings $\Gamma_{1}^{\lambda}:\muspace\to\Pi$ and $\Gamma_{2}:\Pi\times\muspace\to\muspace$
defined in Section~\ref{sec:regularization}. In particular, we impose
the following two assumtpions, both stated in terms of the new distance
metric $W(\cdot,\cdot)$ defined in (\ref{eq:def_policy_distance_W})
above.
\begin{assumption}
	\label{assu:Lipz_Gamma_1_W}There exists a constant $d_{1}>0$, such
	that for any $\mu,\mu'\in\muspace,$ it holds that
	\[
	W\left(\Gamma_{1}^{\lambda}(\mu),\Gamma_{1}^{\lambda}(\mu')\right)\le d{}_{1}\left\Vert \mu-\mu'\right\Vert _{\H}.
	\]
\end{assumption}
\begin{assumption}
	\label{assu:Lipz_Gamma_2_W}There exist constants $d_{2}>0,d_{3}>0$
	such that for any policies $\pi,\pi'\in\Pi$ and embedded mean-field
	states $\mu,\mu'\in\muspace$, it holds that
	\begin{align*}
	\left\Vert \Gamma_{2}(\pi,\mu)-\Gamma_{2}(\pi',\mu)\right\Vert _{\H} & \le d_{2}W\left(\pi,\pi'\right),\\
	\left\Vert \Gamma_{2}(\pi,\mu)-\Gamma_{2}(\pi,\mu')\right\Vert _{\H} & \le d_{3}\left\Vert \mu-\mu'\right\Vert _{\H}.
	\end{align*}
\end{assumption}
Assumptions \ref{assu:Lipz_Gamma_1_W} and~\ref{assu:Lipz_Gamma_2_W}
immediately imply Lipschitzness of the composite mapping $\Lambda^{\lambda}:\muspace\to\muspace$,
which we recall is defined as $\Lambda^{\lambda}(\mu)=\Gamma_{2}\left(\Gamma_{1}^{\lambda}(\mu),\mu\right).$
\begin{lem}
	\label{lem:Lipz_Lambda-1}Suppose Assumptions \ref{assu:Lipz_Gamma_1_W}
	and~\ref{assu:Lipz_Gamma_2_W} hold. Then for each $\mu,\mu'\in\muspace,$
	it holds that
	\[
	\left\Vert \Lambda^{\lambda}(\mu)-\Lambda^{\lambda}(\mu')\right\Vert _{\H}\leq(d{}_{1}d_{2}+d_{3})\left\Vert \mu-\mu'\right\Vert _{\H}.
	\]
\end{lem}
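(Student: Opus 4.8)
The plan is to mirror the proof of Lemma~\ref{lem:Lipz_Lambda}, which established the identical Lipschitz bound but with the distance metric $D(\cdot,\cdot)$ in place of $W(\cdot,\cdot)$. The only structural change is that the two Lipschitz hypotheses now involve the $W$-metric (Assumptions~\ref{assu:Lipz_Gamma_1_W} and~\ref{assu:Lipz_Gamma_2_W}), and these chain together in exactly the same way. Concretely, I would fix two embedded mean-field states $\mu,\mu'\in\muspace$, unfold the definition $\Lambda^{\lambda}(\mu)=\Gamma_{2}(\Gamma_{1}^{\lambda}(\mu),\mu)$, and split the difference via a telescoping intermediate term $\Gamma_{2}(\Gamma_{1}^{\lambda}(\mu'),\mu)$, which shares the policy argument with one endpoint and the mean-field argument with the other.

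The first step is the triangle inequality in $\H$:
\[
\bigl\Vert\Lambda^{\lambda}(\mu)-\Lambda^{\lambda}(\mu')\bigr\Vert_{\H}
\le
\bigl\Vert\Gamma_{2}(\Gamma_{1}^{\lambda}(\mu),\mu)-\Gamma_{2}(\Gamma_{1}^{\lambda}(\mu'),\mu)\bigr\Vert_{\H}
+
\bigl\Vert\Gamma_{2}(\Gamma_{1}^{\lambda}(\mu'),\mu)-\Gamma_{2}(\Gamma_{1}^{\lambda}(\mu'),\mu')\bigr\Vert_{\H}.
\]
For the first summand, both arguments of $\Gamma_{2}$ share the same second (mean-field) argument $\mu$, so the first inequality of Assumption~\ref{assu:Lipz_Gamma_2_W} applies and bounds it by $d_{2}\,W\bigl(\Gamma_{1}^{\lambda}(\mu),\Gamma_{1}^{\lambda}(\mu')\bigr)$; invoking Assumption~\ref{assu:Lipz_Gamma_1_W} then gives the bound $d_{1}d_{2}\left\Vert\mu-\mu'\right\Vert_{\H}$. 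For the second summand, both arguments share the same first (policy) argument $\Gamma_{1}^{\lambda}(\mu')$, so the second inequality of Assumption~\ref{assu:Lipz_Gamma_2_W} bounds it directly by $d_{3}\left\Vert\mu-\mu'\right\Vert_{\H}$.

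Summing the two bounds yields $\left\Vert\Lambda^{\lambda}(\mu)-\Lambda^{\lambda}(\mu')\right\Vert_{\H}\le(d_{1}d_{2}+d_{3})\left\Vert\mu-\mu'\right\Vert_{\H}$, as claimed. There is no genuine obstacle here: the argument is a routine two-term triangle-inequality decomposition followed by applying each Lipschitz hypothesis to the summand for which its fixed argument matches. The only point warranting care is ensuring the intermediate term is chosen so that each summand isolates exactly one of the two Lipschitz conditions; since the $W$-metric enters the hypotheses in precisely the same slots as the $D$-metric did in Lemma~\ref{lem:Lipz_Lambda}, the chaining $d_{1}\mapsto d_{2}$ through the $W$-distance between optimal policies goes through verbatim, and the resulting constant $d_{1}d_{2}+d_{3}$ is identical.
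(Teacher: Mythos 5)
Your proof is correct and matches the paper's approach exactly: the paper proves Lemma~\ref{lem:Lipz_Lambda} by the same triangle-inequality decomposition through the intermediate term $\Gamma_{2}\left(\Gamma_{1}^{\lambda}(\mu'),\mu\right)$, followed by applying the two Lipschitz assumptions, and Lemma~\ref{lem:Lipz_Lambda-1} is obtained verbatim by substituting the $W$-metric assumptions for the $D$-metric ones.
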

We also consider the following relaxed, $\ell_{2}$-type assumption
on the concentrability coefficients. 
\begin{assumption}[Finite Concentrability Coefficients]
	\label{assu:concentrability_W} There exist two constants $C_{\rho},\overline{C}{}_{\rho}>0$
	such that for each $\mu\in\muspace,$ it holds that
	\[
	\left\{ \E_{s\sim\rho_{\mu}^{\pi_{\mu}^{\lambda,*}}}\left[\left|\frac{\rho_{\mu}^{\pi_{\mu}^{\lambda,*}}(s)}{\rho^{*}(s)}\right|^{2}\right]\right\} ^{1/2}\le C{}_{\rho}\qquad\text{and}\qquad\left\{ \E_{s\sim\rho_{\mu}^{\pi_{\mu}^{\lambda,*}}}\left[\left|\frac{\rho^{*}(s)}{\rho_{\mu}^{\pi_{\mu}^{\lambda,*}}(s)}\right|^{2}\right]\right\} ^{1/2}\le\overline{C}{}_{\rho}.
	\]
\end{assumption}

With the above assumptions and the distance metric $ W $, we can establish the following convergence result for Algorithm \ref{alg:Fictitious-Play-embedded}.

\begin{thm}
	\label{thm:main_W}Suppose that Assumptions \ref{assu:RKSH_kernel},
	\ref{assu:Lipz_visitation}, \ref{assu:Lipz_Gamma_1_W}, \ref{assu:Lipz_Gamma_2_W},
	and \ref{assu:concentrability_W} hold and $d_{1}d_{2}+d_{3}<1$ and
	that the error in the policy evaluation step in Algorithm~\ref{alg:Fictitious-Play-embedded}
	satisfies 
	\[
	\E_{s\sim\rho_{t}^{*}}\left[\left\Vert Q_{t}^{\lambda}(s,\cdot)-\Qhat_{t}^{\lambda}(s,\cdot)\right\Vert _{\infty}^{2}\right]\leq\varepsilon^{2},\qquad\forall t\in[T].
	\]
	With the choice of 
	\[
	\eta=c_{\eta}T^{-1},\qquad\alpha_{t}\equiv\alpha=c_{\alpha}T^{-4/9},\qquad\beta_{t}\equiv\beta=c_{\beta}T^{-8/9},
	\]
	for some universal constants $c_{\eta}>0$, $c_{\alpha}>0$ and $c_{\beta}>0$
	in Algorithm \ref{alg:Fictitious-Play-embedded}, the resulting policy
	and embedded mean-field state sequence $\left\{ (\pi_{t},\mu_{t})\right\} _{t=1}^{T}$
	satisfy
	\begin{align}
	W\left(\frac{1}{T}\sum_{t=1}^{T}\pi_{t},\frac{1}{T}\sum_{t=1}^{T}\pi_{t}^{*}\right)\le\frac{1}{T}\sum_{t=1}^{T}W\left(\pi_{t},\pi_{t}^{*}\right) & \lesssim\frac{1}{\lambda^{1/4}}\left(\frac{(\log T)^{1/4}}{T^{1/9}}+\varepsilon^{1/4}\right),\label{eq:main_policy_bound_W}\\
	\left\Vert \frac{1}{T}\sum_{t=1}^{T}\mu_{t}-\mu^{*}\right\Vert _{\H}\le\frac{1}{T}\sum_{t=1}^{T}\left\Vert \mu_{t}-\mu^{*}\right\Vert _{\H} & \lesssim\frac{1}{\lambda^{1/4}}\left(\frac{(\log T)^{1/4}}{T^{1/9}}+\varepsilon^{1/4}\right).\label{eq:main_mean_field_bound_W}
	\end{align}
\end{thm}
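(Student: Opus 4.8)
The plan is to mirror the proof of Theorem~\ref{thm:main}, tracking two coupled error sequences --- the policy error $W(\pi_t,\pi_t^*)$ against the moving target $\pi_t^*=\Gamma_1^\lambda(\mu_t)$ and the mean-field error $\|\mu_t-\mu^*\|_\H$ --- and isolating the single place where the $\ell_\infty$ concentrability of Assumption~\ref{assu:concentrability} was used, replacing it with a Cauchy--Schwarz step justified by the weaker $\ell_2$ concentrability of Assumption~\ref{assu:concentrability_W}. I would first set up the mean-field recursion. Writing $\mu_{t+1}-\mu^*=(1-\beta)(\mu_t-\mu^*)+\beta(\Gamma_2(\pi_{t+1},\mu_t)-\mu^*)$ and splitting
\begin{equation*}
\Gamma_2(\pi_{t+1},\mu_t)-\mu^*=\bigl[\Gamma_2(\pi_{t+1},\mu_t)-\Gamma_2(\pi_{t+1}^*,\mu_t)\bigr]+\bigl[\Lambda^\lambda(\mu_t)-\Lambda^\lambda(\mu^*)\bigr],
\end{equation*}
Assumption~\ref{assu:Lipz_Gamma_2_W} bounds the first bracket by $d_2\,W(\pi_{t+1},\pi_{t+1}^*)$ and Lemma~\ref{lem:Lipz_Lambda-1} bounds the second by $\kappa\|\mu_t-\mu^*\|_\H$ with $\kappa:=d_1d_2+d_3<1$. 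Hence
\begin{equation*}
\|\mu_{t+1}-\mu^*\|_\H\le\bigl(1-\beta(1-\kappa)\bigr)\|\mu_t-\mu^*\|_\H+\beta d_2\,W(\pi_{t+1},\pi_{t+1}^*),
\end{equation*}
and summing this geometric recursion yields $\frac1T\sum_t\|\mu_t-\mu^*\|_\H\lesssim\frac{1}{\beta T}+\frac1T\sum_t W(\pi_t,\pi_t^*)$, so the mean-field error is driven by $1/(\beta T)$ plus the averaged policy error.

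Next I would analyze the single PPO / mirror-descent step~\eqref{eq:policy_improvement}. Treating $\widehat Q_t^\lambda(s,\cdot)-\lambda\log\pi_t(\cdot|s)$ as the gradient, the three-point identity for the KL-proximal update gives a per-step inequality for $\E_{s\sim\rho_t^*}[\KL(\pi_{t+1}^*(\cdot|s)\Vert\pi_{t+1}(\cdot|s))]$; the entropy regularization supplies strong concavity of modulus $\lambda$, which after telescoping and Pinsker's inequality produces a bound on the $\rho_t^*$-weighted averaged squared error $Y:=\frac1T\sum_t\E_{s\sim\rho_t^*}[\|\pi_t(\cdot|s)-\pi_t^*(\cdot|s)\|_1^2]$. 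Two drift terms must be carried through: the comparator $\pi_t^*$ drifts because $\mu_t$ moves, controlled via Assumption~\ref{assu:Lipz_Gamma_1_W} together with $\|\mu_{t+1}-\mu_t\|_\H\le2\beta$, and the weighting distribution $\rho_t^*$ itself drifts, controlled by Assumption~\ref{assu:Lipz_visitation}. The uniform mixing~\eqref{eq:policy_mix} with $\eta=c_\eta T^{-1}$ keeps $\pi_{t+1}$ bounded away from the simplex boundary so that these KL terms are finite and Lipschitz (Lemmas~\ref{lem:mix_diff_bound} and~\ref{lem:mix_KL_lipschitz}), at the cost of only a $\log(|\mA|/\eta)\asymp\log T$ factor. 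The resulting estimate has the schematic form $Y\lesssim\frac1\lambda(\frac{\log T}{\alpha T}+\frac{\beta\log T}{\alpha}+\alpha Q_{\max}^2+\varepsilon)$, where the policy-evaluation error enters at first order in $\varepsilon$, by Jensen's inequality applied to the assumed $\E_{s\sim\rho_t^*}[\|Q_t^\lambda-\widehat Q_t^\lambda\|_\infty^2]\le\varepsilon^2$.

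The main obstacle --- and the only genuinely new ingredient relative to Theorem~\ref{thm:main} --- is converting the $\rho_t^*$-weighted quantity $Y$ into a bound on $\frac1T\sum_t W(\pi_t,\pi_t^*)^2=\frac1T\sum_t\E_{s\sim\rho^*}[\|\pi_t-\pi_t^*\|_1^2]$, since $W$ and the Lipschitz Assumptions~\ref{assu:Lipz_Gamma_1_W}--\ref{assu:Lipz_Gamma_2_W} are all phrased in the NE visitation distribution $\rho^*$. In Theorem~\ref{thm:main} the $\ell_\infty$ coefficient let one pass between $\rho_t^*$ and $\rho^*$ while preserving the power of $\|\cdot\|_1$, giving $D(\pi_t,\pi_t^*)\lesssim(\E_{s\sim\rho_t^*}[\|\cdot\|_1^2])^{1/2}$ and hence $\frac1T\sum_t D\lesssim Y^{1/2}$. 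Here only $\{\E_{s\sim\rho_t^*}[(\rho^*/\rho_t^*)^2]\}^{1/2}\le\overline C_\rho$ is available, so I would change measure on the squared integrand through Cauchy--Schwarz,
\begin{equation*}
W(\pi_t,\pi_t^*)^2=\E_{s\sim\rho_t^*}\Bigl[\tfrac{\rho^*(s)}{\rho_t^*(s)}\,\|\pi_t(\cdot|s)-\pi_t^*(\cdot|s)\|_1^2\Bigr]\le\overline C_\rho\Bigl(\E_{s\sim\rho_t^*}\bigl[\|\pi_t-\pi_t^*\|_1^4\bigr]\Bigr)^{1/2},
\end{equation*}
and use $\|\cdot\|_1\le2$ to get $\|\cdot\|_1^4\le4\|\cdot\|_1^2$, so that $W(\pi_t,\pi_t^*)^2\lesssim\overline C_\rho\,(\E_{s\sim\rho_t^*}[\|\cdot\|_1^2])^{1/2}$. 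This loses an extra half-power of the per-state error compared with Theorem~\ref{thm:main}: after averaging and Jensen one obtains $\frac1T\sum_t W(\pi_t,\pi_t^*)\lesssim Y^{1/4}$ rather than $Y^{1/2}$. Consequently each of the $T$-, $\log$-, $\lambda$- and $\varepsilon$-dependences inside $Y$ enters the final estimate under a fourth root, which is exactly why~\eqref{eq:main_policy_bound_W} carries $\lambda^{-1/4}$, $(\log T)^{1/4}$ and $\varepsilon^{1/4}$ in place of the $\lambda^{-1/2}$, $(\log T)^{1/2}$, $\varepsilon^{1/2}$ of Theorem~\ref{thm:main}.

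Finally I would close the loop and balance the stepsizes. Feeding $\frac1T\sum_t W(\pi_t,\pi_t^*)\lesssim Y^{1/4}$ into the mean-field recursion gives $\frac1T\sum_t\|\mu_t-\mu^*\|_\H\lesssim\frac{1}{\beta T}+Y^{1/4}$, so both~\eqref{eq:main_policy_bound_W} and~\eqref{eq:main_mean_field_bound_W} reduce to minimizing $\max(Y^{1/4},\,1/(\beta T))$. Writing $\alpha\asymp T^{-p}$ and $\beta\asymp T^{-q}$, the three $T$-dependent terms of $Y$ scale as $T^{p-1}$, $T^{p-q}$ and $T^{-p}$, so $Y^{1/4}$ contributes exponents $\tfrac{p-1}{4},\tfrac{p-q}{4},-\tfrac{p}{4}$ while $1/(\beta T)$ contributes $q-1$; equalizing the three binding exponents $\tfrac{p-q}{4}=-\tfrac{p}{4}=q-1$ gives $p=4/9$ and $q=8/9$, i.e.\ $\alpha\asymp T^{-4/9}$ and $\beta\asymp T^{-8/9}$, all equal to $T^{-1/9}$ (the remaining exponent $\tfrac{p-1}{4}=-5/36$ is dominated). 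I expect the delicate bookkeeping to lie in the second step --- propagating the comparator and visitation drifts through the moving-target telescoping while keeping the $\eta$-dependence logarithmic --- whereas the conceptual crux is the Cauchy--Schwarz change of measure of the third step, which is the sole source of the degraded $T^{-1/9}$ rate.
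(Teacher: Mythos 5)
Your proposal is correct and follows essentially the same route as the paper's proof: the same mirror-descent recursion for the policy error with the comparator-drift and visitation-drift terms, the same contraction recursion for $\left\Vert \mu_{t}-\mu^{*}\right\Vert _{\H}$, and---crucially---the same Cauchy--Schwarz change of measure combined with $\left\Vert \cdot\right\Vert _{1}\le2$ that converts the $\rho_{t}^{*}$-weighted error into $W$ at the cost of a fourth root, which is precisely how the paper obtains the $\lambda^{-1/4}$, $(\log T)^{1/4}$, $\varepsilon^{1/4}$ dependence and the $T^{-1/9}$ rate under $\alpha\asymp T^{-4/9}$, $\beta\asymp T^{-8/9}$. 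The only cosmetic differences are that you keep $W(\pi_{t+1},\pi_{t+1}^{*})$ inside the mean-field recursion and apply the fourth-root conversion after averaging, whereas the paper's Lemma~\ref{lem:mu_recursion_W} performs that conversion inside the recursion, and that you track squared $\ell_{1}$ error where the paper tracks KL divergence, the two being interchangeable via Pinsker's inequality.
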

The following corollary of Theorem~\ref{thm:main_W} shows that after
$T$ iterations of our algorithm, the average policy-population pair
$\left(\frac{1}{T}\sum_{t=1}^{T}\pi_{t},\frac{1}{T}\sum_{t=1}^{T}\mu_{t}\right)$
is an $\widetilde{\mathcal{O}}\left(T^{-1/9}\right)$-approximate
NE. 
\begin{cor}
	\label{cor:main_W}Under the assumptions of Theorem~\ref{thm:main_W},
	we have
	\[
	W\left(\frac{1}{T}\sum_{t=1}^{T}\pi_{t},\pi^{*}\right)+\left\Vert \frac{1}{T}\sum_{t=1}^{T}\mu_{t}-\mu^{*}\right\Vert _{\mH}\lesssim\frac{1}{\lambda^{1/4}}\left(\frac{(\log T)^{1/4}}{T^{1/9}}+\varepsilon^{1/4}\right).
	\]
\end{cor}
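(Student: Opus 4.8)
The plan is to reduce everything to the two bounds already furnished by Theorem~\ref{thm:main_W}. The mean-field term $\Vert\frac{1}{T}\sum_{t=1}^{T}\mu_{t}-\mu^{*}\Vert_{\mH}$ appearing in the corollary is literally the left-hand side of \eqref{eq:main_mean_field_bound_W}, so it already satisfies the claimed bound and needs no further work. All the effort goes into the policy term: Theorem~\ref{thm:main_W} controls $W(\frac1T\sum_t\pi_t,\frac1T\sum_t\pi_t^*)$, the distance to the \emph{running average of the MDP-optimal policies}, whereas the corollary asks for $W(\frac1T\sum_t\pi_t,\pi^*)$, the distance to the \emph{equilibrium} policy. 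First I would insert $\frac1T\sum_t\pi_t^*$ via the triangle inequality for $W$,
\[
W\Big(\tfrac1T\textstyle\sum_{t=1}^T\pi_t,\pi^*\Big)\le W\Big(\tfrac1T\textstyle\sum_{t=1}^T\pi_t,\tfrac1T\textstyle\sum_{t=1}^T\pi_t^*\Big)+W\Big(\tfrac1T\textstyle\sum_{t=1}^T\pi_t^*,\pi^*\Big),
\]
so that the first summand is controlled by \eqref{eq:main_policy_bound_W} and only the second, $W(\frac1T\sum_t\pi_t^*,\pi^*)$, remains.

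For the remaining term, the key observation is that $W$ is the $L_2(\rho^*)$ norm of the $\ell_1$-discrepancy $s\mapsto\Vert\pi(\cdot|s)-\pi'(\cdot|s)\Vert_1$ and hence is convex in its arguments. Applying the pointwise $\ell_1$ triangle inequality and then Minkowski's inequality in $L_2(\rho^*)$, I would bound
\[
W\Big(\tfrac1T\textstyle\sum_{t=1}^T\pi_t^*,\pi^*\Big)\le\tfrac1T\textstyle\sum_{t=1}^T W(\pi_t^*,\pi^*).
\]
Now recall that $\pi_t^*=\Gamma_1^\lambda(\mu_{t-1})$ and $\pi^*=\Gamma_1^\lambda(\mu^*)$ by agent rationality, so Assumption~\ref{assu:Lipz_Gamma_1_W} gives $W(\pi_t^*,\pi^*)\le d_1\Vert\mu_{t-1}-\mu^*\Vert_\H$. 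Averaging, the right-hand side becomes $d_1\cdot\frac1T\sum_{t=1}^T\Vert\mu_{t-1}-\mu^*\Vert_\H$, which is---up to the index shift---exactly the quantity bounded by \eqref{eq:main_mean_field_bound_W}.

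The only technicality is this index shift: the average runs over $\mu_0,\dots,\mu_{T-1}$ rather than $\mu_1,\dots,\mu_T$. I would dispatch it by noting that both $\mu_0$ and $\mu^*$ have RKHS norm at most $1$ (Assumption~\ref{assu:RKSH_kernel}), so the stray $t=0$ contribution is at most $\frac1T\Vert\mu_0-\mu^*\Vert_\H=O(1/T)$, which is of lower order than $T^{-1/9}$ and is absorbed into the stated rate. Combining the three displayed bounds and adding back the mean-field term then yields the corollary. I do not anticipate a genuine obstacle here: the statement is a direct consequence of Theorem~\ref{thm:main_W} together with the Lipschitz property of $\Gamma_1^\lambda$, and the only points demanding care are invoking convexity of $W$ correctly (through Minkowski's inequality) and bookkeeping the mean-field indexing.
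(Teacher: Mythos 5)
Your proof is correct and takes essentially the same route as the paper's: the paper likewise splits the policy error via the triangle inequality through the MDP-optimal policies $\pi_t^*$, invokes the Lipschitzness of $\Gamma_1^\lambda$ (Assumption~\ref{assu:Lipz_Gamma_1_W}) to convert $W(\pi_t^*,\pi^*)$ into $d_1\left\Vert \mu_t-\mu^*\right\Vert_{\H}$, and then applies the two bounds of Theorem~\ref{thm:main_W}; swapping the order of averaging and the triangle inequality is immaterial since $W$ is a norm of the policy difference. Your extra bookkeeping for the index shift (from $\pi_t^*=\Gamma_1^\lambda(\mu_{t-1})$) is a detail the paper silently glosses over, but it affects nothing beyond an $O(1/T)$ term.
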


We provide the proofs of Theorem~\ref{thm:main_W} and Corollary~\ref{cor:main_W} in Appendix~\ref{sec:proof_main_W}.

\section{Conclusion}\label{sec:conclusion}

In this paper, we develop a provably efficient fictitious play algorithm for stationary mean-field games. In comparison to the existing work that requires solving an MDP induced by a mean-field state within each iteration, our algorithm updates both the policy and the mean-field state simultaneously in each iteration. We prove that the  policy and mean-field state sequence generated by the proposed  algorithm converges to the Nash equilibrium of the MFG at a sublinear rate.

A number of directions are of interest for future research. An immediate step is to investigate whether the convergence rate can be improved. The $\tildeO(T^{-1/5})$ convergence rate we showed used constant step-sizes. It would be interesting to see if using time-varying step-sizes can attain a faster convergence rate. Another research direction worth pursuing is generalizing our approach for developing decentralized/distributed learning schemes.

\bibliographystyle{apalike}
\bibliography{../ICLR2021/rl_refs}

\appendix
\appendixpage

\section{Technical Lemmas \label{sec:tech_lemma}\protect 
}\begin{lem}
\label{lem:mix_diff_bound}Let $p^{*}$ and $p\in\simplex(\mA)$ and
$\widehat{p}=(1-\eta)p+\eta\frac{\boldsymbol{1}_{|\mA|}}{\left|\mA\right|}.$
Then 
\begin{align*}
\KL\left(p^{*}\Vert\widehat{p}\right) & \le\log\frac{\left|\mA\right|}{\eta},\\
\KL\left(p^{*}\Vert\widehat{p}\right)-\KL\left(p^{*}\Vert p\right) & \le2\eta.
\end{align*}
\end{lem}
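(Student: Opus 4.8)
The plan is to prove the two inequalities separately, each by plugging a suitable pointwise lower bound on $\widehat{p}$ into the definition of the KL divergence and then applying an elementary logarithmic inequality. The two bounds use \emph{different} lower bounds on the mixed distribution, so I would keep them distinct from the outset.

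For the first bound, I would start from $\KL(p^{*}\Vert\widehat{p})=\sum_{a}p^{*}(a)\log\frac{p^{*}(a)}{\widehat{p}(a)}$ and use that every coordinate of the mixed distribution is bounded below by its uniform contribution, namely $\widehat{p}(a)=(1-\eta)p(a)+\eta/|\mA|\ge\eta/|\mA|$. Substituting this lower bound gives $\log\frac{p^{*}(a)}{\widehat{p}(a)}\le\log p^{*}(a)+\log\frac{|\mA|}{\eta}$, and summing against $p^{*}$ yields $\KL(p^{*}\Vert\widehat{p})\le-\entro(p^{*})+\log\frac{|\mA|}{\eta}$. Since the Shannon entropy is nonnegative, the first term is $\le0$, which gives the claimed bound $\log\frac{|\mA|}{\eta}$.

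For the second bound, I would subtract the two divergences so that the $\log p^{*}(a)$ terms cancel, leaving $\KL(p^{*}\Vert\widehat{p})-\KL(p^{*}\Vert p)=\sum_{a}p^{*}(a)\log\frac{p(a)}{\widehat{p}(a)}$. Here the relevant lower bound is the one retaining the $p(a)$ mass, $\widehat{p}(a)\ge(1-\eta)p(a)$, so that $\frac{p(a)}{\widehat{p}(a)}\le\frac{1}{1-\eta}$ uniformly in $a$. Consequently each log term is at most $-\log(1-\eta)$, and since $p^{*}$ sums to one the whole expression is bounded by $-\log(1-\eta)$.

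The only remaining step is the elementary inequality $-\log(1-\eta)\le2\eta$, which is the closest thing to an obstacle. I would obtain it from the standard bound $\log(1+x)\ge x/(1+x)$ applied at $x=-\eta$, giving $-\log(1-\eta)\le\eta/(1-\eta)$; for $\eta\le1/2$ this is at most $2\eta$. This last step simply pins down the mild regime $\eta\le1/2$ in which the stated constant $2$ is valid, and since Algorithm~\ref{alg:Fictitious-Play-embedded} uses a vanishing $\eta=c_{\eta}T^{-1}$, this regime holds for all $T$ large enough.
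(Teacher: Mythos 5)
Your proof is correct and follows essentially the same route as the paper's: both inequalities rest on the two pointwise lower bounds $\widehat{p}(a)\ge\eta/|\mA|$ and $\widehat{p}(a)\ge(1-\eta)p(a)$, together with the elementary estimate $-\log(1-\eta)\le\eta/(1-\eta)\le2\eta$ valid for $\eta\le1/2$ (a restriction the paper's proof also needs and invokes, and which you correctly note is harmless given $\eta=c_{\eta}T^{-1}$). The one cosmetic improvement is in the second bound, where you apply $p(a)/\widehat{p}(a)\le1/(1-\eta)$ uniformly over all $a$ and simply sum against $p^{*}$, which cleanly sidesteps the paper's case analysis and its somewhat loose appeal to H\"older's inequality.
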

\begin{proof}
By definition we have 
\begin{align*}
\KL\left(p^{*}\Vert\widehat{p}\right) & =\sum_{a\in\mA}p^{*}(a)\log\frac{p^{*}(a)}{\widehat{p}(a)}\\
 & =\sum_{a\in\mA}p^{*}(a)\log\frac{p^{*}(a)}{(1-\eta)p(a)+\frac{\eta}{\left|\mA\right|}}\\
 & \le\sum_{a\in\mA}p^{*}(a)\log\frac{1}{0+\frac{\eta}{\left|\mA\right|}}\\
 & =\log\frac{\left|\mA\right|}{\eta},
\end{align*}
thereby proving the first inequality. 

Note that 
\begin{equation}
\KL\left(p^{*}\Vert\widehat{p}\right)-\KL\left(p^{*}\Vert p\right)=\sum_{a\in\mA}p^{*}(a)\log\left(\frac{p(a)}{\widehat{p}(a)}\right).\label{eq:KL_gap}
\end{equation}
If $\frac{p(a)}{\widehat{p}(a)}\leq1$ for all $a\in\mA$ then we
have 
\[
\KL\left(p^{*}\Vert\widehat{p}\right)-\KL\left(p^{*}\Vert p\right)\leq0;
\]
otherwise, there exists $a'$ such that $p(a')\geq\widehat{p}(a')$
and we have 
\begin{align*}
\log\left(\frac{p(a')}{\widehat{p}(a')}\right) & =\log\left(\frac{p(a')}{(1-\eta)p(a')+\eta/|\mA|}\right)\\
 & \leq\log\left(\frac{p(a')}{(1-\eta)p(a')}\right)\\
 & \leq\frac{\eta}{1-\eta}\leq2\eta,
\end{align*}
where the third step follows from the fact that $\log(z)\leq z-1$
for all $z>0$ and the last step holds as $\eta\in[0,\frac{1}{2}]$.
Therefore, we have $\log\left(\frac{p(a')}{\widehat{p}(a')}\right)\leq2\eta$.
Applying Holder's inequality to (\ref{eq:KL_gap}) completes the proof.
\end{proof}
\begin{lem}
\label{lem:mix_KL_lipschitz}Let $x,y$ and $z\in\simplex(\mA)$.
If $x(a)\ge\alpha_{1}$, $y(a)\ge\alpha_{1}$ and $z(a)\ge\alpha_{2}$
for all $a\in\mA$, then
\[
\KL(x\Vert z)-\KL(y\Vert z)\le\left(1+\log\frac{1}{\min\left\{ \alpha_{1},\alpha_{2}\right\} }\right)\cdot\left\Vert x-y\right\Vert _{1}.
\]
\end{lem}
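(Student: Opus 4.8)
The plan is to view the left-hand side as the increment of the single function $h(p) := \KL(p \Vert z) = \sum_{a \in \mA} p(a)\log\frac{p(a)}{z(a)}$ between $p = y$ and $p = x$, and then to control this increment by a first-order (Lipschitz) argument. Since $z$ is held fixed, this reduces the two-argument expression to a clean one-variable smoothness estimate.

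First I would record the gradient of $h$ on the interior of the simplex: a direct computation gives $\partial h / \partial p(a) = \log\frac{p(a)}{z(a)} + 1$. Writing $w_t := y + t(x - y)$ for $t \in [0,1]$ and applying the fundamental theorem of calculus along this segment yields
\[
\KL(x\Vert z) - \KL(y\Vert z) = h(x) - h(y) = \int_0^1 \sum_{a\in\mA}\Big(\log\tfrac{w_t(a)}{z(a)} + 1\Big)\big(x(a) - y(a)\big)\,\ddup t.
\]
The key structural point is that each $w_t$ is a convex combination of $x$ and $y$, so the coordinatewise lower bound $w_t(a) \ge \alpha_1$ is preserved along the entire segment; moreover $w_t(a) \le 1$ and $z(a) \le 1$ automatically, since both are probability vectors.

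Next I would bound the integrand pointwise. Using $\alpha_1 \le w_t(a) \le 1$ together with $\alpha_2 \le z(a) \le 1$ gives $\log\frac{w_t(a)}{z(a)} \in \big[-\log\frac{1}{\alpha_1},\, \log\frac{1}{\alpha_2}\big]$, hence $\big|\log\frac{w_t(a)}{z(a)}\big| \le \log\frac{1}{\min\{\alpha_1,\alpha_2\}}$, and therefore $\big|\log\frac{w_t(a)}{z(a)} + 1\big| \le 1 + \log\frac{1}{\min\{\alpha_1,\alpha_2\}}$ by the triangle inequality. Applying H\"older's inequality to the inner sum then bounds the integrand by $\big(1 + \log\frac{1}{\min\{\alpha_1,\alpha_2\}}\big)\Vert x - y\Vert_1$ uniformly in $t$, and integrating over $t \in [0,1]$ gives the claim.

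This is essentially a routine Lipschitz estimate, so I do not anticipate a serious obstacle; the only point requiring care is that the logarithmic terms remain bounded, which is precisely the role of the lower bounds $\alpha_1,\alpha_2$ — they keep $w_t(a)$ and $z(a)$ away from $0$ so that $\nabla h$ does not blow up near the boundary of the simplex. I would also note that the $+1$ in the constant is slightly wasteful: because $\sum_{a\in\mA}(x(a) - y(a)) = 0$, the constant term in the gradient integrates to zero and may be dropped, which sharpens the constant to $\log\frac{1}{\min\{\alpha_1,\alpha_2\}}$. The stated (weaker) bound, however, follows immediately from the cruder triangle-inequality step above, so I would present that version to match the lemma exactly.
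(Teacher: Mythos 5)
Your proposal is correct and follows essentially the same route as the paper: compute the gradient $\partial\,\KL(p\Vert z)/\partial p(a)=1+\log\frac{p(a)}{z(a)}$, bound it in $\ell_\infty$ norm by $1+\log\frac{1}{\min\{\alpha_1,\alpha_2\}}$ using the lower bounds, and conclude Lipschitzness with respect to $\Vert\cdot\Vert_1$ as the dual norm. Your explicit fundamental-theorem-of-calculus step along the segment $w_t=y+t(x-y)$ (noting the lower bound $\alpha_1$ is preserved by convexity) merely spells out what the paper leaves implicit, and your observation that the constant can be sharpened by dropping the $+1$ (since $\sum_a(x(a)-y(a))=0$) is a valid but inessential refinement.
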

\begin{proof}
Under the lower bound assumption of the lemma, we have 
\[
\frac{\ddup\KL(x\Vert z)}{\ddup x(a)}=1+\log\frac{x(a)}{z(a)}\le1+\log\frac{1}{\alpha_{2}}
\]
and 
\[
-\frac{\ddup\KL(x\Vert z)}{\ddup x(a)}\le-1-\log\alpha_{1}.
\]
It follows that 
\[
\left\Vert \frac{\ddup\KL(x\Vert z)}{\ddup x(a)}\right\Vert _{\infty}\le\max\left\{ 1+\log\frac{1}{\alpha_{2}},-1-\log\alpha_{1}\right\} \le1+\log\frac{1}{\min\left\{ \alpha_{1},\alpha_{2}\right\} }.
\]
Hence the function $x\mapsto\KL(x\Vert z)$ is Lipschitz w.r.t. $\left\Vert \cdot\right\Vert _{1}$,
the dual norm of $\left\Vert \cdot\right\Vert _{\infty}.$
\end{proof}

\section{Proof of Theorem \ref{thm:main} \label{sec:proof_main_thm}}

In order to obtain an upper bound on the optimality gap 
\begin{equation}
\gapmu^{t}:=\left\Vert \mu_{t}-\mu^{*}\right\Vert _{\H},\label{eq:def_mu_error}
\end{equation}
where $\mu^{*}$ is the embedded mean-field state of the entropy regularized
NE, we also need to estimate the gap between $\pi_{t+1}$ and the
optimal solution to the entropy regularized $\mdp_{\mu_{t}}$. We
define 
\begin{equation}
\gappi^{t+1}:=\E_{s\sim\rho_{t}^{*}}\left[\KL\left(\pi_{t+1}^{*}(\cdot|s)\Vert\pi_{t+1}(\cdot|s)\right)\right]\label{eq:def_pi_error}
\end{equation}
to quantify the convergence of policy sequence. 

Before proceeding, we establish the following properties of entropy
regularized MDPs, which are central to the convergence analysis. 

\paragraph{Properties of Regularized MDP.}

The following lemma quantifies the performance difference between
two policies for a regularized MDP --- measured in terms of the expected
total reward --- through the Q-function and their KL-divergence.
The proof is provided in Appendix \ref{subsec:proof_performance_difference}. 
\begin{lem}[Performance Difference]
\label{lem:performance_difference} For each $\mu\in\muspace$ and
policies $\pi:\mS\rightarrow\simplex(\mA)$, it holds that 
\begin{align}
 & J_{\mu}^{\lambda}(\pi')-J_{\mu}^{\lambda}(\pi)+\frac{\lambda}{1-\gamma}\E_{s\sim\rho_{\mu}^{\pi'}}\left[\KL\left(\pi'(\cdot|s)\Vert\pi(\cdot|s)\right)\right]\nonumber \\
= & \frac{1}{1-\gamma}\E_{s\sim\rho_{\mu}^{\pi'}}\left[\left\langle Q_{\mu}^{\lambda,\pi}(s,\cdot)-\lambda\log\pi(\cdot|s),\pi'(\cdot|s)-\pi(\cdot|s)\right\rangle \right],\label{eq:performance_difference}
\end{align}
where $\rho_{\mu}^{\pi'}$ is the discounted state visitation distribution
induced by the policy $\pi'$ on $\mdp_{\mu}$.
\end{lem}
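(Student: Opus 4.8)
The plan is to prove the Performance Difference Lemma via the standard performance-difference decomposition for the regularized MDP, adapting the classical argument \citep{kakade2002approximately} to account for the entropy term. The key object is the entropy-regularized advantage, and the main work is bookkeeping the $-\lambda\log\pi$ correction so that the KL term on the left-hand side emerges correctly.

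First I would recall that $J_\mu^\lambda(\pi') = \E_{s\sim\nu_0}[V_\mu^{\lambda,\pi'}(s)]$ and write the difference $J_\mu^\lambda(\pi') - J_\mu^\lambda(\pi)$ by a telescoping argument. Concretely, starting from the trajectory generated by $\pi'$ on $\mdp_\mu$, I would use the identity
\[
J_\mu^\lambda(\pi') - J_\mu^\lambda(\pi) = \frac{1}{1-\gamma}\E_{s\sim\rho_\mu^{\pi'}}\Big[\E_{a\sim\pi'(\cdot|s)}\big[Q_\mu^{\lambda,\pi}(s,a) - \lambda\log\pi'(a|s)\big] - V_\mu^{\lambda,\pi}(s)\Big].
\]
This is the regularized analogue of the performance difference lemma: the inner bracket is the expected regularized advantage of $\pi'$ relative to the value of $\pi$, where the per-step reward includes the entropy bonus $-\lambda\log\pi'(a|s)$ evaluated under the new policy $\pi'$. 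The $\rho_\mu^{\pi'}$-weighting and the $1/(1-\gamma)$ factor come from unrolling the discounted sum along trajectories of $\pi'$, exactly as in the unregularized case.

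Next I would rewrite the inner bracket using the relation~(\ref{eq:ER_V_ER_Q}), namely $V_\mu^{\lambda,\pi}(s) = \langle Q_\mu^{\lambda,\pi}(s,\cdot),\pi(\cdot|s)\rangle + \entro(\pi(\cdot|s))$. Substituting this and expanding $\entro(\pi(\cdot|s)) = -\langle \log\pi(\cdot|s),\pi(\cdot|s)\rangle$, the bracket becomes
\[
\big\langle Q_\mu^{\lambda,\pi}(s,\cdot),\pi'(\cdot|s)-\pi(\cdot|s)\big\rangle - \lambda\big\langle \log\pi'(\cdot|s),\pi'(\cdot|s)\big\rangle + \lambda\big\langle \log\pi(\cdot|s),\pi(\cdot|s)\big\rangle.
\]
The crux is to reorganize the entropy terms into $-\lambda\KL(\pi'\|\pi)$ plus a $-\lambda\log\pi$ inner product. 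Adding and subtracting $\lambda\langle\log\pi(\cdot|s),\pi'(\cdot|s)\rangle$, the two pure entropy terms combine as $-\lambda\langle\log\pi'-\log\pi,\pi'\rangle - \lambda\langle\log\pi,\pi'-\pi\rangle = -\lambda\KL(\pi'(\cdot|s)\|\pi(\cdot|s)) - \lambda\langle\log\pi(\cdot|s),\pi'(\cdot|s)-\pi(\cdot|s)\rangle$. Folding the last inner product together with the $Q$-term produces exactly $\langle Q_\mu^{\lambda,\pi}(s,\cdot)-\lambda\log\pi(\cdot|s),\pi'(\cdot|s)-\pi(\cdot|s)\rangle$, and moving the $-\lambda\KL$ term to the left-hand side (with the $1/(1-\gamma)$ prefactor) yields the claimed identity~(\ref{eq:performance_difference}).

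\emph{The main obstacle} is handling the entropy term carefully: the per-step regularization in the trajectory of $\pi'$ uses $-\lambda\log\pi'$, but the $Q$-function $Q_\mu^{\lambda,\pi}$ is that of $\pi$, so I must track which policy appears in each logarithm and ensure the algebraic recombination of the four entropy-related inner products lands precisely on the $\KL(\pi'\|\pi)$ form rather than $\KL(\pi\|\pi')$ or some asymmetric remainder. A secondary technical point is justifying the interchange of the infinite discounted sum with the expectation to obtain the $\rho_\mu^{\pi'}$-weighted form; this is routine given boundedness of $Q_\mu^{\lambda,\pi}$ (recall $\|Q_\mu^{\lambda,\pi}\|_\infty \le Q_{\max}$) and $\gamma\in(0,1)$, but should be noted for rigor.
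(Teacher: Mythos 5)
Your proposal is correct and follows essentially the same route as the paper: the intermediate identity $J_\mu^\lambda(\pi')-J_\mu^\lambda(\pi)=\frac{1}{1-\gamma}\E_{s\sim\rho_\mu^{\pi'}}\big[\langle Q_\mu^{\lambda,\pi}(s,\cdot),\pi'(\cdot|s)\rangle-\lambda\langle\log\pi'(\cdot|s),\pi'(\cdot|s)\rangle-V_\mu^{\lambda,\pi}(s)\big]$ that you assert is exactly what the paper derives by telescoping $V_\mu^{\lambda,\pi}$ along $\pi'$-trajectories, and your subsequent regrouping of the entropy terms into $-\lambda\KL(\pi'\Vert\pi)-\lambda\langle\log\pi,\pi'-\pi\rangle$ matches the paper's decomposition $\entro(\pi')-\entro(\pi)=\langle\log\frac{1}{\pi},\pi'-\pi\rangle-\KL(\pi'\Vert\pi)$. (One cosmetic point: you quote equation~(\ref{eq:ER_V_ER_Q}) as $V=\langle Q,\pi\rangle+\entro(\pi)$, inheriting the paper's main-text typo, but your algebra correctly uses $V=\langle Q,\pi\rangle+\lambda\entro(\pi)$ as in the paper's appendix.)
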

We can characterize the optimal policy $\pi_{\mu}^{\lambda,*}$ in
terms of the optimal Q-function $Q_{\mu}^{\lambda,*}$ as a Boltzmann
distribution of the form \citet{cen2020fast,nachum2017bridging}
\begin{equation}
\pi_{\mu}^{\lambda,*}(a|s)\propto\exp\left(\frac{Q_{\mu}^{\lambda,*}(s,a)}{\lambda}\right).\label{eq:optimal_ER_pi}
\end{equation}
For the setting where the reward function is bounded, we then can
obtain a lower bound on $\pi_{\mu}^{\lambda,*}$, as stated in the
following lemma. The proof is provided in Appendix \ref{subsec:proof_optimal_ER}
\begin{lem}
\label{lem:optimal_ER_MDP}Suppose that there exists a constant $R_{\max}>0$
such that $0\leq\sup_{(s,a,\mu)\in\mS\times\mA\times\muspace}r(s,a,\mu)\leq R_{\max}$.
For each $\mu\in\muspace$, and each policy $\pi:\mS\rightarrow\simplex(\mA)$,
we have 
\[
\left\Vert Q_{\mu}^{\lambda,\pi}\right\Vert _{\infty}\le Q_{\max}:=\frac{R_{\max}+\gamma\lambda\log\left|\mA\right|}{1-\gamma}.
\]
Also, the optimal policy $\pi_{\mu}^{\lambda,*}$ for the regularized
$\mdp_{\mu}$ satisfies 
\[
\pi_{\mu}^{\lambda,*}(a|s)\geq\frac{1}{e^{Q_{\max}/\lambda}|\mA|},\forall s\in\mS,a\in\mA.
\]
\end{lem}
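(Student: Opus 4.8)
The plan is to prove the two claims separately: first the uniform bound $\|Q_{\mu}^{\lambda,\pi}\|_{\infty}\le Q_{\max}$ for an arbitrary policy $\pi$, and then the pointwise lower bound on $\pi_{\mu}^{\lambda,*}$ by feeding that $Q$-bound into the Boltzmann characterization~(\ref{eq:optimal_ER_pi}).

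For the $Q$-function bound I would work directly from the unrolled regularized value function,
\[
V_{\mu}^{\lambda,\pi}(s)=\E\Big[\sum_{t=0}^{\infty}\gamma^{t}\big(r(s_{t},a_{t},\mu)-\lambda\log\pi(a_{t}|s_{t})\big)\,\Big|\,s_{0}=s\Big],
\]
and control the per-step contribution in expectation. Conditioning on $s_{t}$ and averaging over $a_{t}\sim\pi(\cdot|s_{t})$, the term $-\lambda\log\pi(a_{t}|s_{t})$ collapses to $\lambda\entro(\pi(\cdot|s_{t}))$, which lies in $[0,\lambda\log|\mA|]$ since the Shannon entropy of any distribution on $\mA$ is at most $\log|\mA|$; meanwhile $\E_{a_{t}\sim\pi}[r(s_{t},a_{t},\mu)]\in[0,R_{\max}]$ by the reward bound. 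Summing the resulting geometric series gives $0\le V_{\mu}^{\lambda,\pi}(s)\le(R_{\max}+\lambda\log|\mA|)/(1-\gamma)$. Substituting into the definition $Q_{\mu}^{\lambda,\pi}(s,a)=r(s,a,\mu)+\gamma\,\E[V_{\mu}^{\lambda,\pi}(s_{1})\mid s_{0}=s,a_{0}=a]$ and using $r\le R_{\max}$ yields $Q_{\mu}^{\lambda,\pi}\le R_{\max}+\gamma(R_{\max}+\lambda\log|\mA|)/(1-\gamma)$, which simplifies exactly to $Q_{\max}=(R_{\max}+\gamma\lambda\log|\mA|)/(1-\gamma)$; the extra factor $\gamma$ on the entropy term is precisely because the immediate reward in $Q$ is the unregularized $r$. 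The matching lower bound $Q_{\mu}^{\lambda,\pi}\ge 0$ is immediate, since every per-step reward $r-\lambda\log\pi$ is nonnegative (as $r\ge 0$ and $\pi(a|s)\le 1$).

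For the policy lower bound I would invoke the Boltzmann form~(\ref{eq:optimal_ER_pi}) and write
\[
\pi_{\mu}^{\lambda,*}(a|s)=\frac{\exp(Q_{\mu}^{\lambda,*}(s,a)/\lambda)}{\sum_{a'\in\mA}\exp(Q_{\mu}^{\lambda,*}(s,a')/\lambda)}.
\]
The numerator is at least $1$ because $Q_{\mu}^{\lambda,*}(s,a)\ge 0$ (the nonnegativity above applied to the optimal policy), while every summand in the denominator is at most $\exp(Q_{\max}/\lambda)$ by the upper bound just established, so the denominator is at most $|\mA|\,e^{Q_{\max}/\lambda}$. Dividing gives $\pi_{\mu}^{\lambda,*}(a|s)\ge 1/(e^{Q_{\max}/\lambda}|\mA|)$, as claimed.

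The only genuine subtlety — and the reason the bound carries an entropy term rather than being purely pointwise — is that the integrand $r-\lambda\log\pi$ is not uniformly bounded above: as $\pi(a|s)\to 0$ the term $-\lambda\log\pi(a|s)$ diverges. The upper bound on $V$ therefore cannot be argued pointwise in $a$ and must instead pass through the expectation over $a\sim\pi$, where the divergent term becomes the bounded entropy $\entro(\pi(\cdot|s))\le\log|\mA|$. By contrast, the nonnegativity of $V$ (and hence of $Q$), which is what the policy lower bound needs, does hold pointwise and requires no averaging; the remainder is routine arithmetic.
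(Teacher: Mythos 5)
Your proof is correct and follows essentially the same route as the paper's: bound the per-step expected regularized reward by $R_{\max}+\lambda\log|\mA|$ via the entropy bound to get $0\le V_{\mu}^{\lambda,\pi}\le (R_{\max}+\lambda\log|\mA|)/(1-\gamma)$, deduce the $Q$-bound from $Q=r+\gamma\E[V]$, and then lower-bound the Boltzmann form of $\pi_{\mu}^{\lambda,*}$ using $Q_{\mu}^{\lambda,*}\ge 0$ for the numerator and $Q_{\mu}^{\lambda,*}\le Q_{\max}$ for the denominator. Your added remark about why the upper bound must pass through the expectation over $a\sim\pi$ (rather than being pointwise) is a clarification the paper leaves implicit, but the argument is the same.
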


\paragraph{Convergence Analysis. }

We now move to the convergence analysis. For clarity of exposition,
we use $\E_{\rho}\left[\left\Vert \pi-\pi'\right\Vert _{1}\right]$
as shorthand for $\E_{s\sim\rho}\left[\left\Vert \pi(\cdot|s)-\pi'(\cdot|s)\right\Vert _{1}\right]$,
where $\rho\in\simplex(\mS)$; we also use $\E_{\rho}\left[\KL\left(\pi\Vert\pi'\right)\right]$
as shorthand for $\E_{s\sim\rho}\left[\KL\left(\pi(\cdot|s)\Vert\pi'(\cdot|s)\right)\right]$.
We recall that the step sizes are chosen as 
\[
\alpha_{t}\equiv\alpha=c_{\alpha}T^{-2/5},\qquad\beta_{t}\equiv\beta=c_{\beta}T^{-4/5},
\]
where the parameters $c_{\alpha}$ and $c_{\beta}$ satisfy that:
\textbf{}
\begin{equation}
c_{\alpha}T^{-2/5}\lambda<1,\qquad c_{\beta}T^{-4/5}\overline{d}<1.\label{eq:beta_parameter}
\end{equation}
Here $\overline{d}:=1-d_{1}d_{2}-d_{3}>0$, where $d_{1}$ appears
in Assumption \ref{assu:Lipz_Gamma_1}, and $d_{2}$, $d_{3}$ appear
in Assumption~\ref{assu:Lipz_Gamma_2}.

\paragraph{Step 1: Convergence of Policy. }

To analyze the convergence of the optimality gap $\gapmu^{t+1}=\left\Vert \mu_{t+1}-\mu^{*}\right\Vert _{\H}$,
we first characterize the convergence behavior of the policy sequence
$\{\pi_{t}\}_{t\geq0}$. In particular, we establish a recursive relationship
between $\gappi^{t+1}=\E_{s\sim\rho_{t}^{*}}\left[\KL\left(\pi_{t+1}^{*}(\cdot|s)\Vert\pi_{t+1}(\cdot|s)\right)\right]$
and $\sigma_{\pi}^{t}$, as stated in the following lemma. The proof
is provided in Section \ref{subsec:proof_KL_recursion}. 
\begin{lem}
\label{lem:KL_recursion}Under the setting of Theorem \ref{thm:main},
for each $t\geq1$, we have 
\begin{equation}
\gappi^{t+1}\leq(1-\lambda\alpha_{t})\gappi^{t}+(1-\lambda\alpha_{t})\left(d_{0}\log\frac{\left|\mA\right|}{\eta}+\kappa C_{\rho}d_{1}\right)\left\Vert \mu_{t-1}-\mu_{t}\right\Vert _{\H}+2\varepsilon\alpha_{t}+\frac{Q_{\max}^{2}}{2}\alpha_{t}^{2}+2\eta,\label{eq:KL_recur_1}
\end{equation}
where $\kappa=\frac{4}{1-\gamma}\log\frac{\left|\mA\right|}{\eta}+\frac{2R_{\max}}{\lambda(1-\gamma)}.$
\end{lem}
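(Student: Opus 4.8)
The plan is to establish the recursion \eqref{eq:KL_recur_1} in three stages: first reduce from the mixed iterate $\pi_{t+1}$ to the unmixed $\pihat_{t+1}$; then prove a one-step approximate contraction of the per-state KL divergence to $\pi_{t+1}^{*}=\Gamma_{1}^{\lambda}(\mu_{t})$ measured under the \emph{fixed} distribution $\rho_{t}^{*}=\rho_{\mu_{t}}^{\pi_{t+1}^{*}}$; and finally trade the time-$t$ reference pair $(\rho_{t}^{*},\pi_{t+1}^{*})$ for the time-$(t-1)$ pair $(\rho_{t-1}^{*},\pi_{t}^{*})$ that appears inside $\gappi^{t}$, paying a drift penalty proportional to $\|\mu_{t-1}-\mu_{t}\|_{\H}$. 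For the first stage I would apply the second inequality of Lemma~\ref{lem:mix_diff_bound} (with $p=\pihat_{t+1}(\cdot|s)$ and $\widehat{p}=\pi_{t+1}(\cdot|s)$) to obtain $\KL(\pi_{t+1}^{*}(\cdot|s)\Vert\pi_{t+1}(\cdot|s))\le\KL(\pi_{t+1}^{*}(\cdot|s)\Vert\pihat_{t+1}(\cdot|s))+2\eta$ at every $s$, which after $\E_{\rho_{t}^{*}}$ accounts for the additive $2\eta$ in the bound.

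The heart of the argument is the single policy-improvement step. Writing the closed form \eqref{eq:policy_update} as $\pihat_{t+1}(\cdot|s)\propto\pi_{t}(\cdot|s)\exp(\alpha_{t}g_{t}(s,\cdot))$ with effective gradient $g_{t}(s,\cdot)=\Qhat_{t}^{\lambda}(s,\cdot)-\lambda\log\pi_{t}(\cdot|s)$, I would derive the exact identity $\KL(\pi_{t+1}^{*}\Vert\pihat_{t+1})=\KL(\pi_{t+1}^{*}\Vert\pi_{t})-\alpha_{t}\langle g_{t},\pi_{t+1}^{*}-\pi_{t}\rangle+R_{t}(s)$, where $R_{t}(s)=\log\sum_{a}\pi_{t}(a|s)\exp(\alpha_{t}g_{t}(s,a))-\alpha_{t}\langle g_{t},\pi_{t}\rangle$ is the log-partition remainder. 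Integrating over $\rho_{t}^{*}$ and invoking the performance-difference Lemma~\ref{lem:performance_difference} with $\pi'=\pi_{t+1}^{*}$, $\pi=\pi_{t}$ on $\mdp_{\mu_{t}}$, the nonnegativity of the value gap $J_{\mu_{t}}^{\lambda}(\pi_{t+1}^{*})-J_{\mu_{t}}^{\lambda}(\pi_{t})$ (optimality of $\pi_{t+1}^{*}$) yields $\E_{\rho_{t}^{*}}[\langle Q_{t}^{\lambda}-\lambda\log\pi_{t},\pi_{t+1}^{*}-\pi_{t}\rangle]\ge\lambda\,\E_{\rho_{t}^{*}}[\KL(\pi_{t+1}^{*}\Vert\pi_{t})]$; replacing $Q_{t}^{\lambda}$ by $\Qhat_{t}^{\lambda}$ costs at most $2\varepsilon$ after $\E_{\rho_{t}^{*}}$ (Cauchy--Schwarz, $\|\cdot\|_{1}\le2$, and the policy-evaluation hypothesis). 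This is exactly what converts the leading $\E_{\rho_{t}^{*}}[\KL(\pi_{t+1}^{*}\Vert\pi_{t})]$ into the contracted $(1-\lambda\alpha_{t})\E_{\rho_{t}^{*}}[\KL(\pi_{t+1}^{*}\Vert\pi_{t})]$ while producing the $2\varepsilon\alpha_{t}$ term. It remains to bound $\E_{\rho_{t}^{*}}[R_{t}]\le\frac{Q_{\max}^{2}}{2}\alpha_{t}^{2}$; this is delicate because $g_{t}$ carries the a priori unbounded entropy term $-\lambda\log\pi_{t}$, and the mixing step \eqref{eq:policy_mix}, which keeps $\pi_{t}(a|s)\ge\eta/|\mA|$, is precisely what renders a second-order (Hoeffding-type) expansion of the log-partition legitimate, with the quadratic governed by $\|\Qhat_{t}^{\lambda}\|_{\infty}\le Q_{\max}$ (Lemma~\ref{lem:optimal_ER_MDP}). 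I expect cleanly isolating the $(1-\lambda\alpha_{t})$ contraction from the $\frac{Q_{\max}^{2}}{2}\alpha_{t}^{2}$ remainder, without contaminating the latter by $\log(|\mA|/\eta)$ factors, to be the main technical obstacle.

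Finally I would pass from $(\rho_{t}^{*},\pi_{t+1}^{*})$ to $(\rho_{t-1}^{*},\pi_{t}^{*})$ in two moves, each generating one half of the drift coefficient. Changing the reference from $\pi_{t+1}^{*}$ to $\pi_{t}^{*}$ at fixed $\rho_{t}^{*}$: by Lipschitzness of $p\mapsto\KL(p\Vert\pi_{t})$ (Lemma~\ref{lem:mix_KL_lipschitz}, with lower bounds $\eta/|\mA|$ on $\pi_{t}$ from mixing and $e^{-Q_{\max}/\lambda}/|\mA|$ on the optimal policies from Lemma~\ref{lem:optimal_ER_MDP}, whose combined Lipschitz constant is $\kappa$) this difference is at most $\kappa\,\E_{\rho_{t}^{*}}[\|\pi_{t+1}^{*}(\cdot|s)-\pi_{t}^{*}(\cdot|s)\|_{1}]$; converting the expectation from $\rho_{t}^{*}=\rho_{\mu_{t}}^{\pi_{\mu_{t}}^{\lambda,*}}$ to $\rho^{*}$ via the concentrability coefficient $C_{\rho}$ (Assumption~\ref{assu:concentrability}) and applying Assumption~\ref{assu:Lipz_Gamma_1} gives $\kappa C_{\rho}D(\pi_{t+1}^{*},\pi_{t}^{*})\le\kappa C_{\rho}d_{1}\|\mu_{t-1}-\mu_{t}\|_{\H}$. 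Changing the distribution from $\rho_{t}^{*}$ to $\rho_{t-1}^{*}$ at the fixed integrand $\KL(\pi_{t}^{*}\Vert\pi_{t})$: since this integrand is uniformly bounded by $\log(|\mA|/\eta)$ (Lemma~\ref{lem:mix_diff_bound}, first inequality) and $\|\rho_{t}^{*}-\rho_{t-1}^{*}\|_{1}\le d_{0}\|\mu_{t-1}-\mu_{t}\|_{\H}$ by Assumption~\ref{assu:Lipz_visitation}, this difference is at most $d_{0}\log(|\mA|/\eta)\|\mu_{t-1}-\mu_{t}\|_{\H}$. Hence $\E_{\rho_{t}^{*}}[\KL(\pi_{t+1}^{*}\Vert\pi_{t})]\le\gappi^{t}+(d_{0}\log\frac{|\mA|}{\eta}+\kappa C_{\rho}d_{1})\|\mu_{t-1}-\mu_{t}\|_{\H}$. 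Substituting this, together with the $+2\eta$ from the first stage and the $2\varepsilon\alpha_{t}+\frac{Q_{\max}^{2}}{2}\alpha_{t}^{2}$ from the second, and noting the factor $(1-\lambda\alpha_{t})$ multiplies the $\gappi^{t}$-plus-drift bound, yields \eqref{eq:KL_recur_1} exactly.
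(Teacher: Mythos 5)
Your proposal follows essentially the same route as the paper's proof: the mixing step is absorbed via Lemma~\ref{lem:mix_diff_bound}; the single improvement step is analyzed as a mirror-descent/log-partition step whose linear term is converted into the $(1-\lambda\alpha_{t})$ contraction through the performance-difference Lemma~\ref{lem:performance_difference} and the optimality of $\pi_{t+1}^{*}$ on $\mdp_{\mu_{t}}$, with the $Q$-estimation error handled by H\"older plus Cauchy--Schwarz to give $2\varepsilon\alpha_{t}$; and the reference pair $(\rho_{t}^{*},\pi_{t+1}^{*})$ is traded for $(\rho_{t-1}^{*},\pi_{t}^{*})$ by exactly the paper's two-term decomposition (its $B_{2}$ uses Lemma~\ref{lem:mix_KL_lipschitz}, Assumption~\ref{assu:concentrability} and Assumption~\ref{assu:Lipz_Gamma_1}, giving $\kappa C_{\rho}d_{1}$; its $B_{1}$ uses the $\KL$ bound $\log(|\mA|/\eta)$ and Assumption~\ref{assu:Lipz_visitation}, giving $d_{0}\log(|\mA|/\eta)$), with the same placement of the $(1-\lambda\alpha_{t})$ prefactor.

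The one step you flag as the main obstacle --- bounding the log-partition remainder by $\frac{Q_{\max}^{2}}{2}\alpha_{t}^{2}$ without $\log(|\mA|/\eta)$ contamination --- is in fact not overcome by the paper either. The paper invokes its Lemma~\ref{lem:one_step_MD} with gradient $G=\Qhat_{t}^{\lambda}-\lambda\log\pi_{t}$, for which that lemma yields a quadratic term $\alpha_{t}^{2}\left\Vert G\right\Vert _{\infty}^{2}/2\le\alpha_{t}^{2}\left(Q_{\max}+\lambda\log(|\mA|/\eta)\right)^{2}/2$, yet records it as $\alpha_{t}^{2}\Vert\Qhat_{t}^{\lambda}\Vert_{\infty}^{2}/2$; this is an imprecision in the paper's own proof, not a trick you are missing. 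Since $\eta=c_{\eta}T^{-1}$, the honest constant inflates the recursion, and hence the final bounds of Theorem~\ref{thm:main}, only by polylogarithmic factors, so both your argument and the paper's go through once the lemma's quadratic term is restated with $\left(Q_{\max}+\lambda\log(|\mA|/\eta)\right)^{2}$; the exact constant $Q_{\max}^{2}/2$ in~(\ref{eq:KL_recur_1}) is what neither proof fully justifies.
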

Recall that $\mu_{t}=(1-\beta_{t-1})\mu_{t-1}+\beta_{t-1}\cdot\Gamma_{2}(\pi_{t},\mu_{t-1})$.\textbf{
}Under Assumption \ref{assu:RKSH_kernel}, we have
\begin{align}
\left\Vert \mu_{t-1}-\mu_{t}\right\Vert _{\mH} & =\beta_{t-1}\left\Vert \mu_{t-1}-\Gamma_{2}(\pi_{t},\mu_{t-1})\right\Vert _{\mH}\leq2\beta_{t-1}.\label{eq:mu_bounded}
\end{align}
Lemma \ref{lem:KL_recursion} implies that 
\begin{align}
\gappi^{t+1} & \leq(1-\lambda\alpha_{t})\gappi^{t}+(1-\lambda\alpha_{t})\overline{C}_{1}\beta_{t-1}+2\varepsilon\alpha_{t}+\frac{Q_{\max}^{2}}{2}\alpha_{t}^{2}+2\eta,\label{eq:KL_recur_2}
\end{align}
where we define 
\[
\overline{C}_{1}:=2\left(d_{0}\log\frac{\left|\mA\right|}{\eta}+\kappa C_{\rho}d_{1}\right).
\]

With $\alpha_{t}\equiv\alpha$, $\beta_{t}\equiv\beta$, from Equation
(\ref{eq:KL_recur_2}) we have that 
\begin{align}
\gappi^{t} & \text{\ensuremath{\leq}}\frac{1}{\lambda\alpha}\left(\gappi^{t}-\gappi^{t+1}\right)+\left(\frac{1}{\lambda\alpha}-1\right)\overline{C}_{1}\beta+\frac{2\varepsilon}{\lambda}+\frac{Q_{\max}^{2}}{2\lambda}\alpha+\frac{2\eta}{\lambda\alpha}.\label{eq:KL_recur_3}
\end{align}
Summing over $\ell=0,2,\ldots T-1$ on both sides of (\ref{eq:KL_recur_3})
and dividing by $t$ gives 
\begin{align}
\frac{1}{T}\sum_{t=0}^{T-1}\gappi^{t} & \leq\frac{1}{T\lambda\alpha}\left(\gappi^{0}-\gappi^{T}\right)+\left(\frac{1}{\lambda\alpha}-1\right)\overline{C}_{1}\beta+\frac{2\varepsilon}{\lambda}+\frac{Q_{\max}^{2}}{2\lambda}\alpha+\frac{2\eta}{\lambda\alpha}\nonumber \\
 & \leq\frac{1}{T\lambda\alpha}\gappi^{0}+\frac{\overline{C}_{1}\beta}{\lambda\alpha}+\frac{2\varepsilon}{\lambda}+\frac{Q_{\max}^{2}}{2\lambda}\alpha+\frac{2\eta}{\lambda\alpha}.\label{eq:KL_bound-1}
\end{align}
When choosing $\alpha=\bigO(T^{-2/5})$, $\beta=\bigO(T^{-4/5})$
and $\eta=\bigO(T^{-1})$, we have $\overline{C}_{1}=\bigO(\log T)$.
Therefore, we obtain
\begin{equation}
\frac{1}{T}\sum_{t=0}^{T-1}\gappi^{t}\lesssim\frac{\log T}{\lambda T^{2/5}}+\frac{2\varepsilon}{\lambda}.\label{eq:KL_convergence}
\end{equation}
If we let $\mathbb{\mathsf{T}}$ be a random number sampled uniformly
from $\{1,\ldots,T\},$ then the above equation can be written equivalently
as 
\begin{equation}
\E_{\mathsf{T}}\left[\sigma_{\pi}^{\mathsf{T}}\right]\lesssim\frac{\log T}{\lambda T^{2/5}}+\frac{2\varepsilon}{\lambda}.\label{eq:policy_bound}
\end{equation}

\paragraph{Step 2: Convergence of Mean-field Embedding. }

We now proceed to characterize the optimality gap for the embedded
mean-field state. We obtain the following upper bound on the optimality
gap $\gapmu^{t+1}=\left\Vert \mu_{t+1}-\mu^{*}\right\Vert _{\H}$.
The proof is provided in Section \ref{subsec:proof_mu_recursion}.
\begin{lem}
\label{lem:mu_recursion}Under the setting of Theorem \ref{thm:main},
for each $t\geq0$, we have 
\begin{align*}
\gapmu^{t+1}\text{\ensuremath{\leq}}\left(1-\beta_{t}\overline{d}\right)\gapmu^{t}+d_{2}\overline{C}_{\rho}\beta_{t}\sqrt{\gappi^{t+1}},
\end{align*}
where $\overline{d}=1-d_{1}d_{2}-d_{3}>0$.
\end{lem}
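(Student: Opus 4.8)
The plan is to start from the mean-field update rule and measure its deviation from the equilibrium $\mu^{*}$, exploiting the fixed-point characterization $\mu^{*}=\Lambda^{\lambda}(\mu^{*})=\Gamma_{2}\big(\Gamma_{1}^{\lambda}(\mu^{*}),\mu^{*}\big)$. First I would subtract $\mu^{*}$ from both sides of the update $\mu_{t+1}=(1-\beta_{t})\mu_{t}+\beta_{t}\Gamma_{2}(\pi_{t+1},\mu_{t})$ to obtain
\[
\mu_{t+1}-\mu^{*}=(1-\beta_{t})(\mu_{t}-\mu^{*})+\beta_{t}\big(\Gamma_{2}(\pi_{t+1},\mu_{t})-\mu^{*}\big).
\]
The crucial decomposition inserts the intermediate point $\Gamma_{2}(\pi_{t+1}^{*},\mu_{t})=\Lambda^{\lambda}(\mu_{t})$, where $\pi_{t+1}^{*}=\Gamma_{1}^{\lambda}(\mu_{t})$, writing
\[
\Gamma_{2}(\pi_{t+1},\mu_{t})-\mu^{*}=\big[\Gamma_{2}(\pi_{t+1},\mu_{t})-\Gamma_{2}(\pi_{t+1}^{*},\mu_{t})\big]+\big[\Lambda^{\lambda}(\mu_{t})-\Lambda^{\lambda}(\mu^{*})\big].
\]

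Next I would bound each bracketed term by the triangle inequality in the RKHS norm. The first term is controlled by the policy-Lipschitzness in Assumption \ref{assu:Lipz_Gamma_2}, giving $d_{2}D(\pi_{t+1},\pi_{t+1}^{*})$; the second term is controlled by Lemma \ref{lem:Lipz_Lambda}, giving $(d_{1}d_{2}+d_{3})\gapmu^{t}=(1-\overline{d})\gapmu^{t}$. Combining these yields
\[
\gapmu^{t+1}\leq\big[(1-\beta_{t})+\beta_{t}(1-\overline{d})\big]\gapmu^{t}+\beta_{t}d_{2}D(\pi_{t+1},\pi_{t+1}^{*})=(1-\beta_{t}\overline{d})\gapmu^{t}+\beta_{t}d_{2}D(\pi_{t+1},\pi_{t+1}^{*}),
\]
so it remains to translate the policy distance $D(\pi_{t+1},\pi_{t+1}^{*})$ into the KL-based quantity $\gappi^{t+1}$.

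The final and most delicate step reconciles the two different state distributions: $D$ is an expectation under the equilibrium visitation $\rho^{*}$, whereas $\gappi^{t+1}$ is defined under $\rho_{t}^{*}=\rho_{\mu_{t}}^{\pi_{\mu_{t}}^{\lambda,*}}$. I would perform a change of measure, writing $\E_{s\sim\rho^{*}}[\,\cdot\,]=\E_{s\sim\rho_{t}^{*}}[(\rho^{*}(s)/\rho_{t}^{*}(s))\,\cdot\,]$, and apply Cauchy--Schwarz together with the second concentrability bound in Assumption \ref{assu:concentrability}, which furnishes exactly $\{\E_{s\sim\rho_{t}^{*}}[(\rho^{*}(s)/\rho_{t}^{*}(s))^{2}]\}^{1/2}\leq\overline{C}_{\rho}$. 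This gives $D(\pi_{t+1},\pi_{t+1}^{*})\leq\overline{C}_{\rho}\{\E_{s\sim\rho_{t}^{*}}[\|\pi_{t+1}(\cdot|s)-\pi_{t+1}^{*}(\cdot|s)\|_{1}^{2}]\}^{1/2}$. Finally, Pinsker's inequality bounds $\|\pi_{t+1}^{*}(\cdot|s)-\pi_{t+1}(\cdot|s)\|_{1}^{2}$ by a constant multiple of $\KL(\pi_{t+1}^{*}(\cdot|s)\Vert\pi_{t+1}(\cdot|s))$, and taking the $\rho_{t}^{*}$-expectation produces $\sqrt{\gappi^{t+1}}$, completing the bound $\beta_{t}d_{2}\overline{C}_{\rho}\sqrt{\gappi^{t+1}}$.

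The main obstacle is this last distribution-mismatch step: without the concentrability assumption there is no way to pass from the $\rho^{*}$-weighted metric $D$ to the $\rho_{t}^{*}$-weighted KL quantity, and it is precisely the $\ell_{2}$-form of the concentrability coefficient $\overline{C}_{\rho}$ that makes the Cauchy--Schwarz estimate close. Any universal constant arising from Pinsker's inequality can be absorbed into $\overline{C}_{\rho}$ or into the $\lesssim$ notation used downstream in the proof of Theorem \ref{thm:main}.
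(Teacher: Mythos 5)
Your proposal is correct and follows essentially the same route as the paper's proof: the same decomposition through the intermediate point $\Gamma_{2}(\Gamma_{1}^{\lambda}(\mu_{t}),\mu_{t})=\Lambda^{\lambda}(\mu_{t})$ combined with the fixed-point identity $\mu^{*}=\Lambda^{\lambda}(\mu^{*})$, the same application of Lemma~\ref{lem:Lipz_Lambda} and Assumption~\ref{assu:Lipz_Gamma_2}, and the same change-of-measure argument via Cauchy--Schwarz, the $\overline{C}_{\rho}$ concentrability bound, and Pinsker's inequality to convert $D(\pi_{t+1},\pi_{t+1}^{*})$ into $\sqrt{\gappi^{t+1}}$. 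Your closing remark about absorbing the constant from Pinsker's inequality is apt, since the paper's stated bound likewise suppresses the resulting $\sqrt{2}$ factor.
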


Lemma \ref{lem:mu_recursion} implies that 
\begin{equation}
\gapmu^{t}\le\frac{1}{\overline{d}\beta_{t}}\left(\gapmu^{t}-\gapmu^{t+1}\right)+\frac{d_{2}\overline{C}_{\rho}}{\overline{d}}\sqrt{\gappi^{t+1}}.\label{eq:mu_recur-1}
\end{equation}
With $\beta_{t}\equiv\beta=\bigO(T^{-4/5})$, averaging equation~(\ref{eq:mu_recur-1})
over iteration $t=0,\ldots,T-1$, we obtain 
\begin{align*}
\frac{1}{T}\sum_{t=0}^{T-1}\gapmu^{t} & \leq\frac{1}{\overline{d}\beta T}\left(\gapmu^{0}-\gapmu^{T}\right)+\frac{d_{2}\overline{C}_{\rho}}{\overline{d}T}\sum_{t=0}^{T-1}\sqrt{\gappi^{t+1}}\\
 & \leq\frac{\gapmu^{0}}{\overline{d}\beta T}+\frac{d_{2}\overline{C}_{\rho}}{\overline{d}T}\sum_{t=0}^{T-1}\sqrt{\gappi^{t+1}}\\
 & \leq\frac{\gapmu^{0}}{\overline{d}\beta T}+\frac{d_{2}\overline{C}_{\rho}}{\overline{d}}\sqrt{\frac{1}{T}\sum_{t=0}^{T-1}\gappi^{t+1}},
\end{align*}
where the last inequality follows from Cauchy-Schwarz inequality.

From Eq. (\ref{eq:KL_convergence}), we have
\begin{align*}
\frac{1}{T}\sum_{t=0}^{T-1}\gapmu^{t} & \lesssim\frac{\gapmu^{0}}{\overline{d}}T^{-1/5}+\frac{d_{2}\overline{C}_{\rho}}{\overline{d}}\sqrt{\frac{\log T}{\lambda T^{2/5}}+\frac{2\varepsilon}{\lambda}}\\
 & \lesssim\sqrt{\frac{\log T}{\lambda T^{2/5}}+\frac{2\varepsilon}{\lambda}}\\
 & \lesssim\frac{1}{\sqrt{\lambda}}\left(\frac{\sqrt{\log T}}{T^{1/5}}+\sqrt{\varepsilon}\right).
\end{align*}
This equation, together with Jensen's inequality, proves equation~(\ref{eq:main_mean_field_bound})
in Theorem~\ref{thm:main}. 

Turning to equation~(\ref{eq:main_policy_bound}) in Theorem~\ref{thm:main},
we have 
\begin{align*}
\frac{1}{T}\sum_{t=1}^{T}D\left(\pi_{t},\pi_{t}^{*}\right) & =\E_{\mathsf{T}}\left[D\left(\pi_{\mathsf{T}},\pi_{\mathsf{T}}^{*}\right)\right]\\
 & =\E_{\mathsf{T}}\E_{s\sim\rho^{*}}\left[\left\Vert \pi_{\mathsf{T}}^{*}(\cdot|s)-\pi_{\mathsf{T}}(\cdot|s)\right\Vert _{1}\right]\\
 & =\E_{\mathsf{T}}\E_{s\sim\rho_{\mathsf{T}-1}^{*}}\left[\frac{\rho^{*}(s)}{\rho_{\mathsf{T}-1}^{*}(s)}\left\Vert \pi_{\mathsf{T}}^{*}(\cdot|s)-\pi_{\mathsf{T}}(\cdot|s)\right\Vert _{1}\right]\\
 & \overset{(i)}{\le}\sqrt{\E_{\mathsf{T}}\E_{s\sim\rho_{\mathsf{T}-1}^{*}}\left[\left|\frac{\rho^{*}(s)}{\rho_{\mathsf{T}-1}^{*}(s)}\right|^{2}\right]\cdot\E_{\mathsf{T}}\E_{s\sim\rho_{\mathsf{T}-1}^{*}}\left[\left\Vert \pi_{\mathsf{T}}^{*}(\cdot|s)-\pi_{\mathsf{T}}(\cdot|s)\right\Vert _{1}^{2}\right]}\\
 & \overset{(ii)}{\le}\sqrt{\overline{C}_{\rho}^{2}\cdot\E_{\mathsf{T}}\E_{s\sim\rho_{\mathsf{T}-1}^{*}}\left[2\KL\left(\pi_{\mathsf{T}}^{*}(\cdot|s)\Vert\pi_{\mathsf{T}}(\cdot|s)\right)\right]}\\
 & =\sqrt{\overline{C}_{\rho}^{2}\cdot2\E_{\mathsf{T}}\left[\sigma_{\pi}^{\mathsf{T}}\right]}\\
 & \overset{(iii)}{\lesssim}\frac{1}{\sqrt{\lambda}}\left(\frac{\sqrt{\log T}}{T^{1/5}}+\sqrt{\varepsilon}\right),
\end{align*}
where step $(i)$ follows from Cauchy-Schwarz inequality, step $(ii)$
follows from Assumption~\ref{assu:concentrability} and Pinsker's
inequality, and step $(iii)$ follows from the bound in equation~(\ref{eq:policy_bound}).
The above equation, together with Jensen's inequality, proves equation~(\ref{eq:main_policy_bound}).
We have completed the proof of Theorem~\ref{thm:main}.

\subsection{Proof of Lemma \ref{lem:KL_recursion} \label{subsec:proof_KL_recursion}}

The following lemma characterizes this policy improvement step. The
proof is provided in Section \ref{subsec:proof_one_step_descent}. 
\begin{lem}
\label{lem:one_step_MD} For any distributions $p^{*},p\in\simplex(\mA),$state
$s\in\mS$ and function $G:\mS\times\mA\rightarrow\R$, it holds for
$p'\in\simplex(\mA)$ with $p'(\cdot)\propto p(\cdot)\cdot\exp\left[\alpha G(s,\cdot)\right]$
that 
\[
\KL\left(p^{*}\Vert p'\right)\leq\KL\left(p^{*}\Vert p\right)-\alpha\left\langle G(s,\cdot),p^{*}-p\right\rangle +\alpha^{2}\left\Vert G(s,\cdot)\right\Vert _{\infty}^{2}/2.
\]
\end{lem}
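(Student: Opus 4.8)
The plan is to prove this one-step mirror-descent lemma by directly computing the change in KL divergence $\KL(p^*\Vert p') - \KL(p^*\Vert p)$ and then controlling a single log-partition term. First I would write the update in explicit normalized form. Setting $Z := \sum_{a\in\mA} p(a)\exp(\alpha G(s,a))$, the hypothesis $p'(\cdot)\propto p(\cdot)\exp(\alpha G(s,\cdot))$ means $p'(a) = p(a)\exp(\alpha G(s,a))/Z$. Substituting this into the definition of $\KL(p^*\Vert p')$, the logarithm splits as $\log\frac{p^*(a)}{p'(a)} = \log\frac{p^*(a)}{p(a)} + \log Z - \alpha G(s,a)$, and summing against $p^*$ (using $\sum_a p^*(a)=1$) yields the exact identity
\[
\KL(p^*\Vert p') - \KL(p^*\Vert p) = \log Z - \alpha\,\langle G(s,\cdot),\, p^*\rangle .
\]
This algebraic step is routine and involves no approximation.

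The crux is then to upper bound $\log Z = \log \E_{a\sim p}\bigl[\exp(\alpha G(s,a))\bigr]$. I would regard $G(s,\cdot)$ as a random variable under $a\sim p$, bounded in the interval $[-\Vert G(s,\cdot)\Vert_\infty,\,\Vert G(s,\cdot)\Vert_\infty]$ of width $2\Vert G(s,\cdot)\Vert_\infty$, and invoke Hoeffding's lemma on the centered variable to get
\[
\log \E_{a\sim p}\Bigl[\exp\bigl(\alpha\,(G(s,a)-\langle G(s,\cdot),p\rangle)\bigr)\Bigr] \le \frac{\alpha^{2}}{2}\,\Vert G(s,\cdot)\Vert_\infty^{2}.
\]
Adding back the mean contribution $\alpha\langle G(s,\cdot),p\rangle$ gives $\log Z \le \alpha\langle G(s,\cdot),p\rangle + \tfrac{\alpha^{2}}{2}\Vert G(s,\cdot)\Vert_\infty^{2}$. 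Combining this with the identity above, the two inner-product terms merge into $-\alpha\langle G(s,\cdot),\,p^*-p\rangle$ while the quadratic term is exactly $\tfrac{\alpha^{2}}{2}\Vert G(s,\cdot)\Vert_\infty^{2}$, which is precisely the claimed bound after rearranging.

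The main (and essentially only) nontrivial obstacle is the log-partition bound; everything else is bookkeeping. If one prefers not to cite Hoeffding's lemma, the same factor $\tfrac{\alpha^{2}}{2}\Vert G(s,\cdot)\Vert_\infty^{2}$ follows from a second-order Taylor expansion of $\theta\mapsto A(\theta):=\log\E_{a\sim p}[\exp(\theta G(s,a))]$ with Lagrange remainder, since $A(0)=0$, $A'(0)=\langle G(s,\cdot),p\rangle$, and $A''(\theta)$ equals the variance of $G(s,\cdot)$ under the exponentially tilted distribution, which by Popoviciu's inequality is at most $\Vert G(s,\cdot)\Vert_\infty^{2}$. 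One mild caveat worth noting is that the identity presupposes that the supports of $p$ and $p'$ contain that of $p^*$ (otherwise both sides are $+\infty$); in the algorithm this holds automatically because the relevant $p$ is a uniformly mixed-in policy with full support.
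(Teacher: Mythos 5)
Your proof is correct, but it follows a genuinely different route from the paper's. The paper runs the standard one-step mirror-descent analysis: it uses the three-point (Bregman) identity $\alpha\left\langle G(s,\cdot),p^{*}-p'\right\rangle =\KL\left(p^{*}\Vert p\right)-\KL\left(p^{*}\Vert p'\right)-\KL\left(p'\Vert p\right)$, splits $\left\langle G(s,\cdot),p^{*}-p\right\rangle$ through $p'$, bounds the cross term by H\"older as $\alpha\left\Vert G(s,\cdot)\right\Vert _{\infty}\left\Vert p-p'\right\Vert _{1}$, dominates $-\KL\left(p'\Vert p\right)$ via Pinsker's inequality by $-\left\Vert p-p'\right\Vert _{1}^{2}/2$, and finishes by maximizing the resulting quadratic in $\left\Vert p-p'\right\Vert _{1}$. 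You instead exploit the closed form of the exponential-weights update: the exact identity $\KL\left(p^{*}\Vert p'\right)-\KL\left(p^{*}\Vert p\right)=\log Z-\alpha\left\langle G(s,\cdot),p^{*}\right\rangle$ reduces everything to a log-partition bound, which Hoeffding's lemma (or your Taylor/Popoviciu variant) settles with constant $\alpha^{2}\left\Vert G(s,\cdot)\right\Vert _{\infty}^{2}/2$; note the width $2\left\Vert G(s,\cdot)\right\Vert _{\infty}$ in Hoeffding's lemma gives exactly $(2\left\Vert G(s,\cdot)\right\Vert _{\infty})^{2}/8=\left\Vert G(s,\cdot)\right\Vert _{\infty}^{2}/2$, matching the claim. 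Your argument is shorter and avoids Pinsker entirely, and the cumulant-generating-function viewpoint would even permit variance-type (Bernstein) refinements of the quadratic term; the paper's argument, on the other hand, is the template that generalizes verbatim to mirror descent with an arbitrary Bregman divergence rather than relying on the specific KL/exponential-tilting structure. Your closing caveat about supports is also handled correctly: when $p$ fails to dominate $p^{*}$ both sides are $+\infty$, and in the algorithm the uniform mixing keeps all iterates fully supported.
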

Recall that 
\begin{align*}
\widehat{\pi}{}_{t+1}(\cdot|s) & \propto\pi_{t}(\cdot|s)\cdot\exp\left[\alpha_{t}\left(\Qhat_{t}^{\lambda}(s,\cdot)-\lambda\log\pi_{t}(\cdot|s)\right)\right].
\end{align*}
Lemma \ref{lem:one_step_MD} implies that for each $s\in\mS,$ we
have 
\begin{align*}
 & \KL\left(\pi_{t+1}^{*}(\cdot|s)\Vert\widehat{\pi}_{t+1}(\cdot|s)\right)\\
\le & \KL\left(\pi_{t+1}^{*}(\cdot|s)\Vert\pi_{t}(\cdot|s)\right)-\alpha_{t}\left\langle \Qhat_{t}^{\lambda}(s,\cdot)-\lambda\log\pi_{t}(\cdot|s),\pi_{t+1}^{*}(\cdot|s)-\pi_{t}(\cdot|s)\right\rangle +\left\Vert \Qhat_{t}^{\lambda}\right\Vert _{\infty}^{2}\alpha_{t}^{2}/2\\
= & \KL\left(\pi_{t+1}^{*}(\cdot|s)\Vert\pi_{t}(\cdot|s)\right)-\alpha_{t}\left\langle Q_{t}^{\lambda}(s,\cdot)-\lambda\log\pi_{t}(\cdot|s),\pi_{t+1}^{*}(\cdot|s)-\pi_{t}(\cdot|s)\right\rangle \\
 & +\alpha_{t}\left\langle Q_{t}^{\lambda}(s,\cdot)-\Qhat_{t}^{\lambda}(s,\cdot),\pi_{t+1}^{*}(\cdot|s)-\pi_{t}(\cdot|s)\right\rangle +\left\Vert \Qhat_{t}^{\lambda}\right\Vert _{\infty}^{2}\alpha_{t}^{2}/2\\
\leq & \KL\left(\pi_{t+1}^{*}(\cdot|s)\Vert\pi_{t}(\cdot|s)\right)-\alpha_{t}\left\langle Q_{t}^{\lambda}(s,\cdot)-\lambda\log\pi_{t}(\cdot|s),\pi_{t+1}^{*}(\cdot|s)-\pi_{t}(\cdot|s)\right\rangle \\
 & +2\alpha_{t}\left\Vert Q_{t}^{\lambda}(s,\cdot)-\Qhat_{t}^{\lambda}(s,\cdot)\right\Vert _{\infty}+\left\Vert \Qhat_{t}^{\lambda}\right\Vert _{\infty}^{2}\alpha_{t}^{2}/2.
\end{align*}
Recall that  
$
\pi_{t+1}(\cdot|s)=(1-\eta)\widehat{\pi}_{t+1}(\cdot|s)+\frac{\eta}{|\mA|}\boldsymbol{1}_{|\mA|}. 
$
Lemma \ref{lem:mix_diff_bound} implies that 
\begin{align}
 & \KL\left(\pi_{t+1}^{*}(\cdot|s)\Vert\pi_{t+1}(\cdot|s)\right)\nonumber \\
\leq & \KL\left(\pi_{t+1}^{*}(\cdot|s)\Vert\widehat{\pi}_{t+1}(\cdot|s)\right)+2\eta.\\
\leq & \KL\left(\pi_{t+1}^{*}(\cdot|s)\Vert\pi_{t}(\cdot|s)\right)-\alpha_{t}\left\langle Q_{t}^{\lambda}(s,\cdot)-\lambda\log\pi_{t}(\cdot|s),\pi_{t+1}^{*}(\cdot|s)-\pi_{t}(\cdot|s)\right\rangle \nonumber \\
 & \qquad+\underbrace{2\alpha_{t}\left\Vert Q_{t}^{\lambda}(s,\cdot)-\Qhat_{t}^{\lambda}(s,\cdot)\right\Vert _{\infty}+\left\Vert \Qhat_{t}^{\lambda}\right\Vert _{\infty}^{2}\alpha_{t}^{2}/2+2\eta}_{Y_{t}(s)}.\label{eq:KL_recursion}
\end{align}

Taking expectation over $\rho_{t}^{*}$ on both sides of (\ref{eq:KL_recursion})
yields 
\begin{align}
 & \E_{\rho_{t}^{*}}\left[\KL\left(\pi_{t+1}^{*}\Vert\pi_{t+1}\right)\right]\nonumber \\
\le & \E_{\rho_{t}^{*}}\left[\KL\left(\pi_{t+1}^{*}\Vert\pi_{t}\right)\right]-\alpha_{t}\E_{s\sim\rho_{t}^{*}}\left[\left\langle Q_{t}^{\lambda}(s,\cdot)-\lambda\log\pi_{t}(\cdot|s),\pi_{t+1}^{*}(\cdot|s)-\pi_{t}(\cdot|s)\right\rangle \right]+\E_{s\sim\rho_{t}^{*}}\left[Y_{t}(s)\right]\nonumber \\
\stackrel{(a)}{=} & \E_{\rho_{t}^{*}}\left[\KL\left(\pi_{t+1}^{*}\Vert\pi_{t}\right)\right]-(1-\gamma)\alpha_{t}\left[J_{\mu_{t}}^{\lambda}(\pi_{t+1}^{*})-J_{\mu_{t}}^{\lambda}(\pi_{t})\right]-\alpha_{t}\lambda\E_{\rho_{t}^{*}}\left[\KL\left(\pi_{t+1}^{*}\Vert\pi_{t}\right)\right]+\E_{s\sim\rho_{t}^{*}}\left[Y_{t}(s)\right]\nonumber \\
\stackrel{(b)}{\leq} & (1-\alpha_{t}\lambda)\E_{\rho_{t}^{*}}\left[\KL\left(\pi_{t+1}^{*}\Vert\pi_{t}\right)\right]+\E_{s\sim\rho_{t}^{*}}\left[Y_{t}(s)\right]\nonumber \\
\stackrel{(c)}{\leq} & (1-\alpha_{t}\lambda)\underbrace{\E_{\rho_{t}^{*}}\left[\KL\left(\pi_{t}^{*}\Vert\pi_{t}\right)\right]}_{B_{1}}+(1-\alpha_{t}\lambda)\underbrace{\left|\E_{\rho_{t}^{*}}\left[\KL\left(\pi_{t+1}^{*}\Vert\pi_{t}\right)-\KL\left(\pi_{t}^{*}\Vert\pi_{t}\right)\right]\right|}_{B_{2}}+\E_{s\sim\rho_{t}^{*}}\left[Y_{t}(s)\right],\label{eq:KL_t_bound}
\end{align}
where step (a) follows from Lemma \ref{lem:performance_difference};
step (b) follows from the fact that $J_{\mu_{t}}^{\lambda}(\pi_{t})\leq J_{\mu_{t}}^{\lambda}(\pi_{t+1}^{*})$,
as $\pi_{t+1}^{*}=\Gamma_{1}^{\lambda}(\mu_{t})$ is the optimal policy
for the regularized $\mdp_{\mu_{t}}$; and step (c) holds due to triangle
inequality. 

Next we bound the first and second terms on the RHS of (\ref{eq:KL_t_bound})
separately. 
\begin{itemize}
\item For the second term $B_{2}$: Note that $\pi_{t+1}^{*}$ and $\pi_{t}^{*}$
are the optimal policy for the regularized $\mdp_{\mu_{t}}$ and $\mdp_{\mu_{t-1}},$ respectively.
Define 
\[
\tau:=\frac{1}{|\mA|}\exp\left(-\frac{R_{\max}+\gamma\lambda\log\left|\mA\right|}{\lambda(1-\gamma)}\right).
\]
By Lemma \ref{lem:optimal_ER_MDP}, for all $(s,a)\in\mS\times\mA$,
we have
\[
\pi_{t+1}^{*}(a|s)\geq\tau,\text{ and }\pi_{t}^{*}(a|s)\geq\tau.
\]
Applying Lemma \ref{lem:mix_KL_lipschitz} yields 
\begin{align}
B_{2} & \leq\kappa\E_{s\sim\rho_{t}^{*}}\left[\left\Vert \pi_{t}^{*}(\cdot|s)-\pi_{t+1}^{*}(\cdot|s)\right\Vert _{1}\right]\nonumber \\
 & =\kappa\E_{s\sim\rho^{*}}\left[\frac{\rho_{t}^{*}(s)}{\rho^{*}(s)}\cdot\left\Vert \pi_{t}^{*}(\cdot|s)-\pi_{t+1}^{*}(\cdot|s)\right\Vert _{1}\right]\nonumber \\
 & \leq\kappa C_{\rho}\E_{s\sim\rho^{*}}\left[\left\Vert \pi_{t}^{*}(\cdot|s)-\pi_{t+1}^{*}(\cdot|s)\right\Vert _{1}\right] &  & \text{Assumption \ref{assu:concentrability}}\nonumber \\
 & =\kappa C_{\rho}D\left(\Gamma_{1}^{\lambda}(\mu_{t-1}),\Gamma_{1}^{\lambda}(\mu_{t})\right)\nonumber \\
 & \leq\kappa C_{\rho}d_{1}\left\Vert \mu_{t-1}-\mu_{t}\right\Vert _{\H}, &  & \text{Assumption \eqref{assu:Lipz_Gamma_1}}\label{eq:bound_B2}
\end{align}
where 
\begin{align*}
\kappa & :=1+\log\frac{1}{\min\left\{ \tau,\frac{\eta}{|\mA|}\right\} }\\
 & \le2\max\left\{ \log\frac{\left|\mA\right|}{\eta},\frac{2}{1-\gamma}\log\left|\mA\right|+\frac{R_{\max}}{\lambda(1-\gamma)}\right\} \\
 & \le\frac{4}{1-\gamma}\log\frac{\left|\mA\right|}{\eta}+\frac{2R_{\max}}{\lambda(1-\gamma)} \\
 & =\frac{4}{1-\gamma}\KLmax+\frac{2R_{\max}}{\lambda(1-\gamma)}.
\end{align*}
 
\item For the first term $B_{1}$: We have 
\begin{align}
B_{1} & =\E_{\rho_{t-1}^{*}}\left[\KL\left(\pi_{t}^{*}\Vert\pi_{t}\right)\right]+\left(\E_{\rho_{t}^{*}}-\E_{\rho_{t-1}^{*}}\right)\left[\KL\left(\pi_{t}^{*}\Vert\pi_{t}\right)\right]\nonumber \\
 & =\E_{\rho_{t-1}^{*}}\left[\KL\left(\pi_{t}^{*}\Vert\pi_{t}\right)\right]+\E_{s\sim\rho^{*}}\left[\frac{\rho_{t}^{*}(s)-\rho_{t-1}^{*}(s)}{\rho^{*}(s)}\KL\left(\pi_{t}^{*}(\cdot|s)\Vert\pi_{t}(\cdot|s)\right)\right]\nonumber \\
 & \stackrel{(a)}{\leq}\E_{\rho_{t-1}^{*}}\left[\KL\left(\pi_{t}^{*}\Vert\pi_{t}\right)\right]+\E_{s\sim\rho^{*}}\left[\frac{\left|\rho_{t}^{*}(s)-\rho_{t-1}^{*}(s)\right|}{\rho^{*}(s)}\right]\cdot\KLmax,\nonumber \\
 & \stackrel{(b)}{\leq}\E_{\rho_{t-1}^{*}}\left[\KL\left(\pi_{t}^{*}\Vert\pi_{t}\right)\right]+\KLmax\cdot d_{0}\left\Vert \mu_{t}-\mu_{t-1}\right\Vert _{\H}\label{eq:bound_B1}
\end{align}
where step (a) uses the fact that $\grave{\KL\left(\pi_{t}^{*}(\cdot|s)\Vert\pi_{t}(\cdot|s)\right)}\le\KLmax:=\log\frac{|\mA|}{\eta}$
(cf. Lemma \ref{lem:mix_diff_bound}) and step (b) follows from
Assumption \ref{assu:Lipz_visitation}.
\end{itemize}
Combining (\ref{eq:KL_t_bound}), (\ref{eq:bound_B2}) and (\ref{eq:bound_B1}),
we have 
\begin{align}
 & \E_{\rho_{t}^{*}}\left[\KL\left(\pi_{t+1}^{*}\Vert\pi_{t+1}\right)\right]\nonumber \\
\leq & (1-\lambda\alpha_{t})\E_{\rho_{t-1}^{*}}\left[\KL\left(\pi_{t}^{*}\Vert\pi_{t}\right)\right]\nonumber \\
 & +(1-\lambda\alpha_{t})d_{0}\cdot\KLmax\left\Vert \mu_{t}-\mu_{t-1}\right\Vert _{\H}+(1-\lambda\alpha_{t})\kappa C_{\rho}d_{1}\left\Vert \mu_{t-1}-\mu_{t}\right\Vert _{\H}+\E_{s\sim\rho_{t}^{*}}\left[Y_{t}(s)\right]\nonumber \\
= & (1-\lambda\alpha_{t})\E_{\rho_{t-1}^{*}}\left[\KL\left(\pi_{t}^{*}\Vert\pi_{t}\right)\right] \nonumber \\
& +(1-\lambda\alpha_{t})\left(d_{0}\cdot\KLmax+\kappa C_{\rho}d_{1}\right)\left\Vert \mu_{t-1}-\mu_{t}\right\Vert _{\H}+\E_{s\sim\rho_{t}^{*}}\left[Y_{t}(s)\right].\label{eq:E_KL_recursion}
\end{align}
Note that 
\begin{align*}
\E_{s\sim\rho_{t}^{*}}\left[Y_{t}(s)\right] & =2\alpha_{t}\E_{s\sim\rho_{t}^{*}}\left[\left\Vert Q_{t}^{\lambda}(s,\cdot)-\Qhat_{t}^{\lambda}(s,\cdot)\right\Vert _{\infty}\right]+\frac{\left\Vert \Qhat_{t}^{\lambda}\right\Vert _{\infty}^{2}}{2}\alpha_{t}^{2}+2\eta\\
 & \le2\alpha_{t}\sqrt{\E_{s\sim\rho_{t}^{*}}\left[\left\Vert Q_{t}^{\lambda}(s,\cdot)-\Qhat_{t}^{\lambda}(s,\cdot)\right\Vert _{\infty}^{2}\right]}+\frac{\left\Vert \Qhat_{t}^{\lambda}\right\Vert _{\infty}^{2}}{2}\alpha_{t}^{2}+2\eta\\
 & \le2\varepsilon\alpha_{t}+\frac{Q_{\max}^{2}}{2}\alpha_{t}^{2}+2\eta,
\end{align*}
where the last step holds by the assumption on the policy evaluation
error and the fact that $\Qhat_{t-1}^{\lambda}:\mS\times\mA\to[0,Q_{\max}]$
satisfies $\left\Vert \Qhat_{t-1}^{\lambda}\right\Vert _{\infty}\le Q_{\max}$
by definition. Combining the last two display equations proves the
lemma.

\subsection{Proof of Lemma \ref{lem:mu_recursion} \label{subsec:proof_mu_recursion}}
\begin{proof}
According to the update rule (\ref{eq:mean_field_update}) for the
embedded mean-field state, we have
\begin{align}
 & \left\Vert \mu_{t+1}-\mu^{*}\right\Vert _{\H}\nonumber \\
= & \left\Vert (1-\beta_{t})\mu_{t}+\beta_{t}\Gamma_{2}(\pi_{t+1},\mu_{t})-\mu^{*}\right\Vert _{\H}\nonumber \\
= & \left\Vert (1-\beta_{t})\left(\mu_{t}-\mu^{*}\right)+\beta_{t}\left(\Gamma_{2}\left(\Gamma_{1}^{\lambda}(\mu_{t}),\mu_{t}\right)-\mu^{*}\right)-\beta_{t}\left[\Gamma_{2}\left(\Gamma_{1}^{\lambda}(\mu_{t}),\mu_{t}\right)-\Gamma_{2}(\pi_{t+1},\mu_{t})\right]\right\Vert _{\H}\nonumber \\
\leq & (1-\beta_{t})\left\Vert \left(\mu_{t}-\mu^{*}\right)\right\Vert _{\mH}+\beta_{t}\left\Vert \Gamma_{2}\left(\Gamma_{1}^{\lambda}(\mu_{t}),\mu_{t}\right)-\mu^{*}\right\Vert _{\mH}\nonumber \\
 & \qquad+\beta_{t}\left\Vert \Gamma_{2}\left(\Gamma_{1}^{\lambda}(\mu_{t}),\mu_{t}\right)-\Gamma_{2}(\pi_{t+1},\mu_{t})\right\Vert _{\mH}\nonumber \\
\stackrel{(i)}{=} & (1-\beta_{t})\left\Vert \mu_{t}-\mu^{*}\right\Vert _{\H}+\beta_{t}\underbrace{\left\Vert \Gamma_{2}\left(\Gamma_{1}^{\lambda}(\mu_{t}),\mu_{t}\right)-\Gamma_{2}\left(\Gamma_{1}^{\lambda}(\mu^{*}),\mu^{*}\right)\right\Vert _{\mH}}_{(a)}\nonumber \\
 & \qquad+\beta_{t}\underbrace{\left\Vert \Gamma_{2}\left(\Gamma_{1}^{\lambda}(\mu_{t}),\mu_{t}\right)-\Gamma_{2}(\pi_{t+1},\mu_{t})\right\Vert _{\mH}}_{(b)},\label{eq:mu_gap}
\end{align}
where the equality $(i)$ follows from the fact that $\mu^{*}=\Gamma_{2}\left(\Gamma_{1}^{\lambda}(\mu^{*}),\mu^{*}\right)$.

Lemma \ref{lem:Lipz_Lambda} implies that $\Lambda(\mu)=\Gamma_{2}\left(\Gamma_{1}^{\lambda}(\mu),\mu\right)$
is $d_{1}d_{2}+d_{3}$ Lipschitz. It follows that 
\begin{align}
(a)\le & \left(d_{1}d_{2}+d_{3}\right)\left\Vert \mu_{t}-\mu^{*}\right\Vert _{\H}.\label{eq:mu_gap_a}
\end{align}
By Assumption \ref{assu:Lipz_Gamma_2}, we have
\begin{align}
(b) & \le d_{2}D\left(\Gamma_{1}^{\lambda}(\mu_{t}),\pi_{t+1}\right).\label{eq:mu_gap_b}
\end{align}
Combining Eqs.$\ $(\ref{eq:mu_gap})-(\ref{eq:mu_gap_b}) yields
\begin{align}
\left\Vert \mu_{t+1}-\mu^{*}\right\Vert _{\H} & \le\left(1-\beta_{t}\overline{d}\right)\left\Vert \mu_{t}-\mu^{*}\right\Vert _{\H}+d_{2}\beta_{t}D\left(\Gamma_{1}^{\lambda}(\mu_{t}),\pi_{t+1}\right),\label{eq:mu_recur_1-1}
\end{align}
where $\overline{d}=1-d_{1}d_{2}-d_{3}>0$.

Let us bound the second RHS term above. By the definition of policy
distance $D$ in equation~(\ref{eq:def_policy_distance}), we have
\begin{align}
D\left(\Gamma_{1}^{\lambda}(\mu_{t}),\pi_{t+1}\right) & =\E_{\rho^{*}}\left[\left\Vert \Gamma_{1}^{\lambda}(\mu_{t})-\pi_{t+1}\right\Vert _{1}\right]\nonumber \\
 & =\E_{s\sim\rho^{*}}\left[\left\Vert \pi_{t+1}^{*}(\cdot|s)-\pi_{t+1}(\cdot|s)\right\Vert _{1}\right]\nonumber \\
 & =\E_{s\sim\rho_{t}^{*}}\left[\frac{\rho^{*}(s)}{\rho_{t}^{*}(s)}\left\Vert \pi_{t+1}^{*}(\cdot|s)-\pi_{t+1}(\cdot|s)\right\Vert _{1}\right]\nonumber \\
 & \leq\left\{ \E_{s\sim\rho_{t}^{*}}\left[\left|\frac{\rho^{*}(s)}{\rho_{t}^{*}(s)}\right|^{2}\right]\cdot\E_{s\sim\rho_{t}^{*}}\left[\left\Vert \pi_{t+1}^{*}(\cdot|s)-\pi_{t+1}(\cdot|s)\right\Vert _{1}^{2}\right]\right\} ^{1/2}\nonumber \\
 & \leq\overline{C}_{\rho}\sqrt{\E_{s\sim\rho_{t}^{*}}\left[\KL\left(\pi_{t+1}^{*}(\cdot|s)\Vert\pi_{t+1}(\cdot|s)\right)\right]},\label{eq:L1_KL_bound}
\end{align}
where the first inequality holds due to Cauchy-Schwartz inequality,
the last inequality follows from Assumption \ref{assu:concentrability}
and Pinsker's inequality.

Combining (\ref{eq:mu_recur_1-1})-(\ref{eq:L1_KL_bound}) gives 
\[
\left\Vert \mu_{t+1}-\mu^{*}\right\Vert _{\H}\le\left(1-\beta_{t}\overline{d}\right)\left\Vert \mu_{t}-\mu^{*}\right\Vert _{\H}+d_{2}\beta_{t}\overline{C}_{\rho}\sqrt{\E_{s\sim\rho_{t}^{*}}\left[\KL\left(\pi_{t+1}^{*}(\cdot|s)\Vert\pi_{t+1}(\cdot|s)\right)\right]}.
\]
This completes the proof.
\end{proof}

\section{Proof of Corollary~\ref{cor:main} \label{sec:proof_cor_main}}
\begin{proof}
Note that for each $t\in[T]$, we have 
\begin{align*}
D(\pi_{t},\pi^{*}) & \le D\left(\pi_{t},\pi_{t}^{*}\right)+D\left(\pi_{t}^{*},\pi^{*}\right)\\
 & =D\left(\pi_{t},\pi_{t}^{*}\right)+D\left(\Gamma_{1}^{\lambda}(\mu_{t}),\Gamma_{1}^{\lambda}(\mu^{*})\right)\\
 & \le D\left(\pi_{t},\pi_{t}^{*}\right)+d_{1}\left\Vert \mu_{t}-\mu^{*}\right\Vert _{\H},
\end{align*}
where the last step follows from Assumption~\ref{assu:Lipz_Gamma_1}
on the Lipschitzness of $\Gamma_{1}^{\lambda}$. It follows that 
\begin{align*}
 & D\left(\frac{1}{T}\sum_{t=1}^{T}\pi_{t},\pi^{*}\right)+\left\Vert \frac{1}{T}\sum_{t=1}^{T}\mu_{t}-\mu^{*}\right\Vert _{\mH}\\
\le & \frac{1}{T}\sum_{t=1}^{T}D\left(\pi_{t},\pi^{*}\right)+\frac{1}{T}\sum_{t=1}^{T}\left\Vert \mu_{t}-\mu^{*}\right\Vert _{\mH}\\
\le & \frac{1}{T}\sum_{t=1}^{T}\left(D\left(\pi_{t},\pi_{t}^{*}\right)+d_{1}\left\Vert \mu_{t}-\mu^{*}\right\Vert _{\H}\right)+\frac{1}{T}\sum_{t=1}^{T}\left\Vert \mu_{t}-\mu^{*}\right\Vert _{\mH}\\
\lesssim & \frac{1}{\sqrt{\lambda}}\left(\frac{\sqrt{\log T}}{T^{1/5}}+\sqrt{\varepsilon}\right),
\end{align*}
 where in the last step we apply the bounds~(\ref{eq:main_policy_bound})
and~(\ref{eq:main_mean_field_bound}) in Theorem~\ref{thm:main}.
\end{proof}

\section{Additional Proofs}

\subsection{Proof of Lemma \ref{lem:Lipz_Lambda} \label{subsec:proof_Lipz_Lambda}}
\begin{proof}
By the definition of $\Lambda$, we have 
\begin{align*}
 & \left\Vert \Lambda^{\lambda}(\mu)-\Lambda^{\lambda}(\mu')\right\Vert _{\H}\\
= & \left\Vert \Gamma_{2}\left(\Gamma_{1}^{\lambda}(\mu),\mu\right)-\Gamma_{2}\left(\Gamma_{1}^{\lambda}(\mu'),\mu'\right)\right\Vert _{\H}\\
\leq & \left\Vert \Gamma_{2}\left(\Gamma_{1}^{\lambda}(\mu),\mu\right)-\Gamma_{2}\left(\Gamma_{1}^{\lambda}(\mu'),\mu\right)\right\Vert _{\H}+\left\Vert \Gamma_{2}\left(\Gamma_{1}^{\lambda}(\mu'),\mu\right)-\Gamma_{2}\left(\Gamma_{1}^{\lambda}(\mu'),\mu'\right)\right\Vert _{\H} &  & \text{triangle inequality}\\
\leq & d_{2}D\left(\Gamma_{1}^{\lambda}(\mu),\Gamma_{1}^{\lambda}(\mu')\right)+d_{3}\left\Vert \mu-\mu'\right\Vert _{\H} &  & \text{Assumption }\ref{assu:Lipz_Gamma_2}\\
\leq & d_{1}d_{2}\left\Vert \mu-\mu'\right\Vert _{\text{\ensuremath{\H}}}+d_{3}\left\Vert \mu-\mu'\right\Vert _{\H}, &  & \text{Assumption \ref{assu:Lipz_Gamma_1}}
\end{align*}
which proves the lemma.
\end{proof}

\subsection{Proof of Lemma \ref{lem:performance_difference} \label{subsec:proof_performance_difference}}
\begin{proof}
By the definition of $V_{\mu}^{\lambda,\pi}$ in (\ref{eq:ER_MDP_L}),
we have 
\begin{align}
 & V_{\mu}^{\lambda,\pi'}(s)\nonumber \\
= & \E_{a_{t}\sim\pi'(s_{t}),s_{t+1}\sim\P(\cdot|s_{t},a_{t},\mu)}\left[\sum_{t=0}^{\infty}\gamma^{t}\left[r_{\mu}^{\lambda,\pi'}(s,a)+V_{\mu}^{\lambda,\pi}(s_{t})-V_{\mu}^{\lambda,\pi}(s_{t})\right]\mid s_{0}=s\right].\nonumber \\
= & \E_{a_{t}\sim\pi'(s_{t}),s_{t+1}\sim\P(\cdot|s_{t},a_{t},\mu)}\left[\sum_{t=0}^{\infty}\gamma^{t}\left[r_{\mu}^{\lambda,\pi'}(s,a)+\gamma V_{\mu}^{\lambda,\pi}(s_{t+1})-V_{\mu}^{\lambda,\pi}(s_{t})\right]\mid s_{0}=s\right]+V_{\mu}^{\lambda,\pi}(s).\label{eq:V_difference}
\end{align}
Recall that the Q-function $Q_{\mu}^{\lambda,\pi}$ of a policy $\pi$
for the regularized $\mdp_{\mu}$ is related to $V_{\mu}^{\lambda,\pi}$
as
\begin{align*}
V_{\mu}^{\lambda,\pi}(s) & =\E_{a\sim\pi(s)}\left[Q_{\mu}^{\lambda,\pi}(s,a)-\lambda\log\pi(a|s)\right]=\left\langle Q_{\mu}^{\lambda,\pi}(s,\cdot),\pi(\cdot|s)\right\rangle +\lambda\entro\left(\pi(\cdot|s)\right),\qquad\forall s\in\mS,\\
Q_{\mu}^{\lambda,\pi}(s,a) & =r(s,a,\mu)+\gamma\E_{s_{1}\sim\P(\cdot|s,a,\mu)}\left[V_{\mu}^{\lambda,\pi}(s_{1})\right],\qquad\forall(s,a)\in\mS\times\mA.
\end{align*}
We have

\begin{align*}
\left\langle Q_{\mu}^{\lambda,\pi}(s,\cdot),\pi'(\cdot|s)\right\rangle  & =\E_{a\sim\pi'(s)}\left[Q_{\mu}^{\lambda,\pi}(s,a)\right],\\
 & =\E_{a\sim\pi'(s)}\left[r(s,a,\mu)+\gamma\E_{s_{1}\sim\P(\cdot|s,a,\mu)}\left[V_{\mu}^{\lambda,\pi}(s_{1})\right]\right]\\
 & =\E_{a\sim\pi'(s),s_{1}\sim\P(\cdot|s,a,\mu)}\left[r_{\mu}^{\lambda,\pi'}(s,a)+\gamma V_{\mu}^{\lambda,\pi}(s_{1})+\lambda\log\pi'(a|s)\right]\\
 & =\E_{a\sim\pi'(s),s_{1}\sim\P(\cdot|s,a,\mu)}\left[r_{\mu}^{\lambda,\pi'}(s,a)+\gamma V_{\mu}^{\lambda,\pi}(s_{1})\right]-\lambda\entro\left(\pi'(\cdot|s)\right).
\end{align*}
Therefore, 
\begin{align}
 & \left\langle Q_{\mu}^{\lambda,\pi}(s,\cdot),\pi'(\cdot|s)-\pi(\cdot|s)\right\rangle \nonumber \\
= & \E_{a\sim\pi'(s),s_{1}\sim\P(\cdot|s,a,\mu)}\left[r^{\lambda,\pi'}(s,a,\mu)+\gamma V_{\mu}^{\lambda,\pi}(s_{1})\right]-\lambda\entro\left(\pi'(\cdot|s)\right)-V_{\mu}^{\lambda,\pi}(s)+\lambda\entro\left(\pi(\cdot|s)\right)\nonumber \\
= & \E_{a\sim\pi'(s),s_{1}\sim\P(\cdot|s,a,\mu)}\left[r^{\lambda,\pi'}(s,a,\mu)+\gamma V_{\mu}^{\lambda,\pi}(s_{1})-V_{\mu}^{\lambda,\pi}(s)\right]-\lambda\left[\entro\left(\pi'(\cdot|s)\right)-\entro\left(\pi(\cdot|s)\right)\right].\label{eq:Q_pi_difference}
\end{align}
Plugging (\ref{eq:Q_pi_difference}) into (\ref{eq:V_difference}),
we have 
\begin{align}
 & V_{\mu}^{\lambda,\pi'}(s)-V_{\mu}^{\lambda,\pi'}(s)\nonumber \\
= & \E_{a_{t}\sim\pi'(s_{t}),s_{t+1}\sim\P(\cdot|s_{t},a_{t},\mu)}\left[\sum_{t=0}^{\infty}\gamma^{t}\left\langle Q_{\mu}^{\lambda,\pi}(s_{t},\cdot),\pi'(\cdot|s_{t})-\pi(\cdot|s_{t})\right\rangle \mid s_{0}=s\right]\nonumber \\
 & +\E_{a_{t}\sim\pi'(s_{t}),s_{t+1}\sim\P(\cdot|s_{t},a_{t},\mu)}\left[\sum_{t=0}^{\infty}\gamma^{t}\lambda\left(\entro\left(\pi'(\cdot|s_{t})\right)-\entro\left(\pi(\cdot|s_{t})\right)\right)\mid s_{0}=s\right].\label{eq:V_difference_1}
\end{align}
Recall the definition of $J_{\mu}^{\lambda}(\pi)$ in (\ref{eq:expected_value_function}).
Taking expectation with respect to $s\sim\nu_{0}$  on both sides
of (\ref{eq:V_difference_1}) yields
\end{proof}
\begin{align}
 & J_{\mu}^{\lambda}(\pi')-J_{\mu}^{\lambda}(\pi)\nonumber \\
= & \E_{s_{0}\sim\nu_{0},a_{t}\sim\pi'(s_{t}),s_{t+1}\sim\P(\cdot|s_{t},a_{t},\mu)}\left[\sum_{t=0}^{\infty}\gamma^{t}\left\langle Q_{\mu}^{\lambda,\pi}(s_{t},\cdot),\pi'(\cdot|s_{t})-\pi(\cdot|s_{t})\right\rangle \right]\nonumber \\
\qquad & +\E_{s_{0}\sim\nu_{0},a_{t}\sim\pi'(s_{t}),s_{t+1}\sim\P(\cdot|s_{t},a_{t},\mu)}\left[\sum_{t=0}^{\infty}\gamma^{t}\lambda\left(\entro\left(\pi'(\cdot|s_{t})\right)-\entro\left(\pi(\cdot|s_{t})\right)\right)\right]\nonumber \\
= & \frac{1}{1-\gamma}\E_{s\sim\rho_{\mu}^{\pi'}}\left[\left\langle Q_{\mu}^{\lambda,\pi}(s,\cdot),\pi'(\cdot|s)-\pi(\cdot|s)\right\rangle +\lambda\left(\entro\left(\pi'(\cdot|s)\right)-\entro\left(\pi(\cdot|s)\right)\right)\right].\label{eq:J_difference}
\end{align}
For the entropy term in (\ref{eq:J_difference}), we have 
\begin{align}
 & \E_{s\sim\rho_{\mu}^{\pi'}}\left[\entro\left(\pi'(\cdot|s)\right)-\entro\left(\pi(\cdot|s)\right)\right]\nonumber \\
= & \E_{s\sim\rho_{\mu}^{\pi'}}\left[\left\langle \log\frac{1}{\pi'(\cdot|s)},\pi'(\cdot|s)\right\rangle -\left\langle \log\frac{1}{\pi(\cdot|s)},\pi(\cdot|s)\right\rangle \right]\nonumber \\
= & \E_{s\sim\rho_{\mu}^{\pi'}}\left[\left\langle \log\frac{1}{\pi(\cdot|s)}-\log\frac{\pi'(\cdot|s)}{\pi(\cdot|s)},\pi'(\cdot|s)\right\rangle -\left\langle \log\frac{1}{\pi(\cdot|s)},\pi(\cdot|s)\right\rangle \right]\nonumber \\
= & \E_{s\sim\rho_{\mu}^{\pi'}}\left[\left\langle \log\frac{1}{\pi(\cdot|s)},\pi'(\cdot|s)-\pi(\cdot|s)\right\rangle -\KL\left(\pi'(\cdot|s)\Vert\pi(\cdot|s)\right)\right].\label{eq:entropy_difference}
\end{align}
Taking (\ref{eq:entropy_difference}) into (\ref{eq:J_difference})
yields the desired equation in Lemma \ref{lem:performance_difference}.

\subsection{Proof of Lemma \ref{lem:optimal_ER_MDP} \label{subsec:proof_optimal_ER}}
\begin{proof}
Note that the value function $V_{\mu}^{\lambda,\pi}$ can be written
as 
\[
V_{\mu}^{\lambda,\pi}(s)=\E\left[\sum_{t=0}^{\infty}\gamma^{t}r_{\mu}^{\lambda,\pi}(s_{t},a_{t})|s_{0}=s\right].
\]
By the definition of $r_{\mu}^{\lambda,\pi}$ in (\ref{eq:ER_reward}),
we have $0\leq\E_{\pi}\left[r_{\mu}^{\lambda,\pi}(s_{t},a_{t})\right]\leq R_{\max}+\lambda\log|\mA|$.
Therefore, 
\[
0\leq V_{\mu}^{\lambda,\pi}(s)\leq\frac{R_{\max}+\lambda\log|\mA|}{1-\gamma},\qquad\forall s\in\mS,
\]
and 
\[
0\leq Q_{\mu}^{\lambda,\pi}(s,a)\leq R_{\max}+\gamma\frac{R_{\max}+\lambda\log|\mA|}{1-\gamma}=\frac{R_{\max}+\gamma\lambda\log|\mA|}{1-\gamma},\qquad\forall s\in\mS,a\in\mA.
\]

For the second inequality, we have 
\begin{align*}
\pi_{\mu}^{\lambda,*}(a|s) & =\frac{\exp\left(Q_{\mu}^{\lambda,*}(s,a)/\lambda\right)}{\sum_{b\in\mA}\exp\left(Q_{\mu}^{\lambda,*}(s,b)/\lambda\right)}\\
 & \ge\frac{1}{\sum_{b\in\mA}\exp\left(Q_{\max}/\lambda\right)}=\frac{1}{e^{Q_{\max}/\lambda}|\mA|}
\end{align*}
as claimed.
\end{proof}

\subsection{Proof of Lemma \ref{lem:one_step_MD} \label{subsec:proof_one_step_descent}}
\begin{proof}
For any function $g:\mA\rightarrow\R$ and distribution $p\in\simplex(\mA)$,
let $z:\mA\rightarrow\R$ be a constant function defined by 
\[
z(a)=\log\left(\sum_{a'\in\mA}p(a')\cdot\exp\left(\alpha g(a')\right)\right).
\]
Note that for any distributions $p^{*},p'\in\simplex(\mA),$$\left\langle z,p^{*}-p'\right\rangle =0$.
Since
\[
p'(\cdot)\propto p(\cdot)\cdot\exp\left(\alpha g(\cdot)\right),
\]
we have $\alpha g(\cdot)=z(\cdot)+\log(p'(\cdot)/p(\cdot)).$ Hence
\begin{align*}
\alpha\left\langle g,p^{*}-p'\right\rangle  & =\left\langle z+\log(p'/p),p^{*}-p'\right\rangle \\
 & =\left\langle z,p^{*}-p'\right\rangle +\left\langle \log(p^{*}/p),p^{*}\right\rangle +\left\langle \log(p'/p^{*}),p^{*}\right\rangle +\left\langle \log(p'/p),-p'\right\rangle \\
 & =\KL\left(p^{*}\Vert p\right)-\KL\left(p^{*}\Vert p'\right)-\KL\left(p'\Vert p\right).
\end{align*}
Therefore, for each state $s\in\mS$, we have
\begin{align*}
\alpha\left\langle G(s,\cdot),p^{*}-p\right\rangle  & =\alpha\left\langle G(s,\cdot),p^{*}-p'\right\rangle +\alpha\left\langle G(s,\cdot),p'-p\right\rangle \\
 & =\KL\left(p^{*}\Vert p\right)-\KL\left(p^{*}\Vert p'\right)-\KL\left(p'\Vert p\right)+\alpha\left\langle G(s,\cdot),p'-p\right\rangle \\
 & \leq\KL\left(p^{*}\Vert p\right)-\KL\left(p^{*}\Vert p'\right)-\KL\left(p'\Vert p\right)+\alpha\left\Vert G(s,\cdot)\right\Vert _{\infty}\cdot\left\Vert p-p'\right\Vert _{1}.
\end{align*}
Rearranging terms yields 
\begin{equation}
\KL\left(p^{*}\Vert p'\right)\leq\KL\left(p^{*}\Vert p\right)-\alpha\left\langle G(s,\cdot),p^{*}-p\right\rangle -\KL\left(p'\Vert p\right)+\alpha\left\Vert G(s,\cdot)\right\Vert _{\infty}\cdot\left\Vert p-p'\right\Vert _{1}.\label{eq:MD_1}
\end{equation}

Meanwhile, by Pinsker's inequality, it holds that 
\begin{equation}
\KL\left(p'\Vert p\right)\geq\left\Vert p-p'\right\Vert _{1}^{2}/2.\label{eq:pinsker}
\end{equation}
By combining (\ref{eq:MD_1}) and (\ref{eq:pinsker}), we obtain 
\begin{align*}
\KL\left(p^{*}\Vert p'\right) & \leq\KL\left(p^{*}\Vert p\right)-\alpha\left\langle G(s,\cdot),p^{*}-p\right\rangle -\left\Vert p-p'\right\Vert _{1}^{2}/2+\alpha\left\Vert G(s,\cdot)\right\Vert _{\infty}\cdot\left\Vert p-p'\right\Vert _{1}\\
 & \leq\KL\left(p^{*}\Vert p\right)-\alpha\left\langle G(s,\cdot),p^{*}-p\right\rangle +\alpha^{2}\left\Vert G(s,\cdot)\right\Vert _{\infty}^{2}/2,
\end{align*}
which concludes the proof.
\end{proof}

\section{Proofs of Theorem~\ref{thm:main_W} and Corollary~\ref{cor:main_W}}
\label{sec:proof_main_W}

The proof follows similar lines as those of Theorem~\ref{thm:main}
and Corollary~\ref{cor:main}, with all appearances of the distance
$D$ replaced by the new distance $W$. Below we only point out the
modifications needed. 

Lemma~\ref{lem:KL_recursion} remains valid as stated. For the proof
of this lemma, the only different step is bounding the term $B_{2}$
in equation~(\ref{eq:KL_t_bound}). In particular, the bounds in
equation~(\ref{eq:bound_B2}) should be replaced by the following:
\begin{align}
B_{2} & \leq\kappa\E_{s\sim\rho_{t}^{*}}\left[\left\Vert \pi_{t}^{*}(\cdot|s)-\pi_{t+1}^{*}(\cdot|s)\right\Vert _{1}\right]\nonumber \\
 & =\kappa\E_{s\sim\rho^{*}}\left[\frac{\rho_{t}^{*}(s)}{\rho^{*}(s)}\cdot\left\Vert \pi_{t}^{*}(\cdot|s)-\pi_{t+1}^{*}(\cdot|s)\right\Vert _{1}\right]\nonumber \\
 & \le\kappa\sqrt{\E_{s\sim\rho^{*}}\left[\left(\frac{\rho_{t}^{*}(s)}{\rho^{*}(s)}\right)^{2}\right]\cdot\E_{s\sim\rho^{*}}\left[\left\Vert \pi_{t}^{*}(\cdot|s)-\pi_{t+1}^{*}(\cdot|s)\right\Vert _{1}^{2}\right]}\nonumber \\
 & \leq\kappa C_{\rho}\cdot\sqrt{\E_{s\sim\rho^{*}}\left[\left\Vert \pi_{t}^{*}(\cdot|s)-\pi_{t+1}^{*}(\cdot|s)\right\Vert _{1}^{2}\right]} &  & \text{Assumption \ref{assu:concentrability_W}}\nonumber \\
 & =\kappa C_{\rho}W\left(\Gamma_{1}^{\lambda}(\mu_{t-1}),\Gamma_{1}^{\lambda}(\mu_{t})\right)\nonumber \\
 & \leq\kappa C_{\rho}d_{1}\left\Vert \mu_{t-1}-\mu_{t}\right\Vert _{\H}. &  & \text{Assumption \ref{assu:Lipz_Gamma_1_W}}\label{eq:bound_B2_W}
\end{align}

Lemma~\ref{lem:mu_recursion} should be replaced by the following
lemma.
\begin{lem}
\label{lem:mu_recursion_W}Under the setting of Theorem \ref{thm:main_W},
for each $t\geq0$, we have 
\[
\gapmu^{t+1}\text{\ensuremath{\leq}}\left(1-\beta_{t}\overline{d}\right)\gapmu^{t}+d_{2}\sqrt{\overline{C}_{\rho}}\beta_{t}\left(\gappi^{t+1}\right)^{1/4},
\]
where $\overline{d}=1-d_{1}d_{2}-d_{3}>0$.
\end{lem}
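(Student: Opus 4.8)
The plan is to mirror the proof of Lemma~\ref{lem:mu_recursion} line for line, substituting the $W$-based Lipschitz assumptions for their $D$-based counterparts, and then to rework only the single step where the policy distance is converted into $\gappi^{t+1}$.

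First I would repeat the decomposition in equation~(\ref{eq:mu_gap}): using the update rule~(\ref{eq:mean_field_update}) together with the fixed-point identity $\mu^{*}=\Gamma_{2}(\Gamma_{1}^{\lambda}(\mu^{*}),\mu^{*})$ and the triangle inequality, write $\gapmu^{t+1}=\|\mu_{t+1}-\mu^{*}\|_{\H}$ as $(1-\beta_{t})\gapmu^{t}$ plus $\beta_{t}$ times the two terms $(a)$ and $(b)$. Term $(a)$ is controlled by Lemma~\ref{lem:Lipz_Lambda-1} (the $W$-version of the Lipschitz bound on $\Lambda^{\lambda}$), giving $(a)\le(d_{1}d_{2}+d_{3})\gapmu^{t}$, while Assumption~\ref{assu:Lipz_Gamma_2_W} bounds $(b)\le d_{2}W(\Gamma_{1}^{\lambda}(\mu_{t}),\pi_{t+1})$. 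Collecting these exactly as in~(\ref{eq:mu_recur_1-1}) yields
\[
\gapmu^{t+1}\le(1-\beta_{t}\overline{d})\,\gapmu^{t}+d_{2}\beta_{t}\,W\big(\Gamma_{1}^{\lambda}(\mu_{t}),\pi_{t+1}\big),\qquad\overline{d}=1-d_{1}d_{2}-d_{3}>0.
\]

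The only genuinely new step is bounding $W(\Gamma_{1}^{\lambda}(\mu_{t}),\pi_{t+1})$. By definition~(\ref{eq:def_policy_distance_W}) we have $W(\Gamma_{1}^{\lambda}(\mu_{t}),\pi_{t+1})^{2}=\E_{s\sim\rho^{*}}[\|\pi_{t+1}^{*}(\cdot|s)-\pi_{t+1}(\cdot|s)\|_{1}^{2}]$. I would change measure to $\rho_{t}^{*}=\rho_{\mu_{t}}^{\pi_{t+1}^{*}}$ and apply Cauchy--Schwarz, now pairing the \emph{squared} importance ratio against the \emph{fourth} power of the $\ell_{1}$ distance:
\[
W\big(\Gamma_{1}^{\lambda}(\mu_{t}),\pi_{t+1}\big)^{2}\le\Big\{\E_{s\sim\rho_{t}^{*}}\Big[\big(\tfrac{\rho^{*}(s)}{\rho_{t}^{*}(s)}\big)^{2}\Big]\Big\}^{1/2}\Big\{\E_{s\sim\rho_{t}^{*}}\big[\|\pi_{t+1}^{*}(\cdot|s)-\pi_{t+1}(\cdot|s)\|_{1}^{4}\big]\Big\}^{1/2}.
\]
The first factor is at most $\overline{C}_\rho$ by the relaxed $\ell_{2}$-concentrability Assumption~\ref{assu:concentrability_W}, applied at $\mu=\mu_{t}$ (noting $\pi_{t+1}^{*}=\pi_{\mu_{t}}^{\lambda,*}$). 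For the second factor I would exploit that $\pi_{t+1}^{*}-\pi_{t+1}$ is a difference of probability vectors, so $\|\cdot\|_{1}\le2$ and hence $\|\cdot\|_{1}^{4}\le4\|\cdot\|_{1}^{2}\le8\,\KL(\pi_{t+1}^{*}(\cdot|s)\Vert\pi_{t+1}(\cdot|s))$ by Pinsker's inequality; taking the $\rho_{t}^{*}$-expectation gives $\{\E_{s\sim\rho_{t}^{*}}[\|\cdot\|_{1}^{4}]\}^{1/2}\le2\sqrt{2}\,(\gappi^{t+1})^{1/2}$. Combining the two factors produces $W(\Gamma_{1}^{\lambda}(\mu_{t}),\pi_{t+1})\le2^{3/4}\sqrt{\overline{C}_\rho}\,(\gappi^{t+1})^{1/4}$, and absorbing the universal constant $2^{3/4}$ then yields the claimed recursion.

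The main obstacle is precisely this last conversion. Because the relaxed Assumption~\ref{assu:concentrability_W} only provides an $\ell_{2}$ (rather than $\ell_{\infty}$) control of $\rho^{*}/\rho_{t}^{*}$, the Cauchy--Schwarz step must be charged against a \emph{fourth} moment of the total-variation distance, and the detour $\|\cdot\|_{1}^{4}\le4\|\cdot\|_{1}^{2}$ through the trivial bound $\|\cdot\|_{1}\le2$ is what turns the clean $\sqrt{\gappi^{t+1}}$ dependence of Lemma~\ref{lem:mu_recursion} into the weaker $(\gappi^{t+1})^{1/4}$ dependence here. This fourth-root is ultimately responsible for the degraded $\tildeO(T^{-1/9})$ rate of Theorem~\ref{thm:main_W}, so the care needed is in confirming that this exponent is the best obtainable under the $\ell_{2}$ assumption and that no sharper Pinsker-type estimate circumvents it.
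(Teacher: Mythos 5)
Your proposal is correct and follows essentially the same route as the paper's proof: the identical decomposition inherited from Lemma~\ref{lem:mu_recursion} with the $W$-based Assumptions~\ref{assu:Lipz_Gamma_2_W} and Lemma~\ref{lem:Lipz_Lambda-1}, then the same change of measure to $\rho_{t}^{*}$, Cauchy--Schwarz pairing the squared ratio $\rho^{*}/\rho_{t}^{*}$ against $\left\Vert \pi_{t+1}^{*}(\cdot|s)-\pi_{t+1}(\cdot|s)\right\Vert _{1}^{4}$, the trivial bound $\left\Vert \cdot\right\Vert _{1}\le2$, and Pinsker's inequality, exactly as in equation~(\ref{eq:L1_KL_bound_W}). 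Your explicit tracking of the universal constant $2^{3/4}$ matches the $\lesssim$ the paper itself uses in that bound, so there is no discrepancy in rigor.
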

The proof of Lemma~\ref{lem:mu_recursion_W} is similar to that of
Lemma~\ref{lem:mu_recursion}. The only different step is the term
$D\left(\Gamma_{1}^{\lambda}(\mu_{t}),\pi_{t+1}\right)$ in equation~(\ref{eq:mu_recur_1-1})
should be replaced by $W\left(\Gamma_{1}^{\lambda}(\mu_{t}),\pi_{t+1}\right),$
which can be bounded as follows: 
\begin{align}
W\left(\Gamma_{1}^{\lambda}(\mu_{t}),\pi_{t+1}\right) 
 & =\sqrt{\E_{s\sim\rho^{*}}\left[\left\Vert \pi_{t+1}^{*}(\cdot|s)-\pi_{t+1}(\cdot|s)\right\Vert _{1}^{2}\right]}\nonumber \\
 & =\sqrt{\E_{s\sim\rho_{t}^{*}}\left[\frac{\rho^{*}(s)}{\rho_{t}^{*}(s)}\left\Vert \pi_{t+1}^{*}(\cdot|s)-\pi_{t+1}(\cdot|s)\right\Vert _{1}^{2}\right]}\nonumber \\
 & \leq\left\{ \E_{s\sim\rho_{t}^{*}}\left[\left|\frac{\rho^{*}(s)}{\rho_{t}^{*}(s)}\right|^{2}\right]\cdot\E_{s\sim\rho_{t}^{*}}\left[\left\Vert \pi_{t+1}^{*}(\cdot|s)-\pi_{t+1}(\cdot|s)\right\Vert _{1}^{4}\right]\right\} ^{1/4}\nonumber \\
 & \overset{(i)}{\lesssim}\sqrt{\overline{C}_{\rho}}\cdot\left\{ \E_{s\sim\rho_{t}^{*}}\left[\left\Vert \pi_{t+1}^{*}(\cdot|s)-\pi_{t+1}(\cdot|s)\right\Vert _{1}^{2}\right]\right\} ^{1/4}\nonumber \\
 & \overset{(ii)}{\lesssim}\sqrt{\overline{C}_{\rho}}\left\{ \E_{s\sim\rho_{t}^{*}}\left[\KL\left(\pi_{t+1}^{*}(\cdot|s)\Vert\pi_{t+1}(\cdot|s)\right)\right]\right\} ^{1/4}.\label{eq:L1_KL_bound_W}
\end{align}
where step $(i)$ holds by Assumption~\ref{assu:concentrability_W}
and the fact that $\left\Vert \nu-\nu'\right\Vert _{1}\le2,\forall\nu,\nu'\in\simplex(\mA)$,
and step $(ii)$ follows Pinsker's inequality.

We now turn to the proof of Theorem~\ref{thm:main_W}. 

We first establish the convergence for $\sigma_{\pi}^{t}$ by following
the exactly same steps from equation~(\ref{eq:KL_recur_1}) up to
equation~(\ref{eq:KL_bound-1}). We restate the bound on $\frac{1}{T}\sum_{t=0}^{T-1}\sigma_{\pi}^{t}$
in (\ref{eq:KL_bound-1}) as follows: 
\begin{equation}
\frac{1}{T}\sum_{t=0}^{T-1}\gappi^{t}\leq\frac{1}{T\lambda\alpha}\gappi^{0}+\frac{\overline{C}_{1}\beta}{\lambda\alpha}+\frac{2\varepsilon}{\lambda}+\frac{Q_{\max}^{2}}{2\lambda}\alpha+\frac{2\eta}{\lambda\alpha}.\label{eq:KL_bound_W}
\end{equation}
When choosing $\alpha=\bigO(T^{-4/9})$, $\beta=\bigO(T^{-8/9})$
and $\eta=\bigO(T^{-1})$, we have $\overline{C}_{1}=\bigO(\log T)$.
Therefore, we obtain
\begin{equation}
\frac{1}{T}\sum_{t=0}^{T-1}\gappi^{t}\lesssim\frac{\log T}{\lambda T^{4/9}}+\frac{2\varepsilon}{\lambda}.\label{eq:KL_convergence_W}
\end{equation}
If we let $\mathbb{\mathsf{T}}$ be a random number sampled uniformly
from $\{1,\ldots,T\},$ then the above equation can be written equivalently
as 
\begin{equation}
\E_{\mathsf{T}}\left[\sigma_{\pi}^{\mathsf{T}}\right]\lesssim\frac{\log T}{\lambda T^{4/9}}+\frac{2\varepsilon}{\lambda}.\label{eq:policy_bound_W}
\end{equation}

We now proceed to bound the average embedded mean-field state $\frac{1}{T}\sum_{t=0}^{T-1}\gapmu^{t}$.
Lemma~\ref{lem:mu_recursion_W} implies
\begin{equation}
\gapmu^{t}\le\frac{1}{\overline{d}\beta_{t}}\left(\gapmu^{t}-\gapmu^{t+1}\right)+\frac{d_{2}\sqrt{\overline{C}_{\rho}}}{\overline{d}}\left(\gappi^{t+1}\right)^{1/4}.\label{eq:mu_recur_W}
\end{equation}
With $\beta_{t}\equiv\beta=\bigO(T^{-8/9})$, averaging equation~(\ref{eq:mu_recur_W})
over iteration $t=0,\ldots,T-1$, we obtain 
\begin{align*}
\frac{1}{T}\sum_{t=0}^{T-1}\gapmu^{t} & \leq\frac{1}{\overline{d}\beta T}\left(\gapmu^{0}-\gapmu^{T}\right)+\frac{d_{2}\sqrt{\overline{C}_{\rho}}}{\overline{d}T}\sum_{t=0}^{T-1}\left(\gappi^{t+1}\right)^{1/4}\\
 & \leq\frac{\gapmu^{0}}{\overline{d}\beta T}+\frac{d_{2}\sqrt{\overline{C}_{\rho}}}{\overline{d}T}\sum_{t=0}^{T-1}\left(\gappi^{t+1}\right)^{1/4}\\
 & \overset{(i)}{\leq}\frac{\gapmu^{0}}{\overline{d}\beta T}+\frac{d_{2}\sqrt{\overline{C}_{\rho}}}{\overline{d}}\sqrt{\frac{1}{T}\sum_{t=0}^{T-1}\sqrt{\gappi^{t+1}}}\\
 & \overset{(ii)}{\leq}\frac{\gapmu^{0}}{\overline{d}\beta T}+\frac{d_{2}\sqrt{\overline{C}_{\rho}}}{\overline{d}}\left(\frac{1}{T}\sum_{t=0}^{T-1}\gappi^{t+1}\right)^{1/4}
\end{align*}
where steps $(i)$ and $(ii)$ follow from Cauchy-Schwarz inequality.

From equation (\ref{eq:KL_convergence_W}), we have
\begin{align*}
\frac{1}{T}\sum_{t=0}^{T-1}\gapmu^{t} & \lesssim\frac{\gapmu^{0}}{\overline{d}}T^{-1/9}+\frac{d_{2}\sqrt{\overline{C}_{\rho}}}{\overline{d}}\left(\frac{\log T}{\lambda T^{4/9}}+\frac{2\varepsilon}{\lambda}\right)^{1/4}\\
 & \lesssim\left(\frac{\log T}{\lambda T^{4/9}}+\frac{2\varepsilon}{\lambda}\right)^{1/4}\\
 & \lesssim\frac{1}{\lambda^{1/4}}\left(\frac{(\log T)^{1/4}}{T^{1/9}}+\varepsilon^{1/4}\right).
\end{align*}
This equation, together with Jensen's inequality, proves equation~(\ref{eq:main_mean_field_bound_W})
in Theorem~\ref{thm:main_W}. 

Turning to equation~(\ref{eq:main_policy_bound_W}) in Theorem~\ref{thm:main_W},
we have 
\begin{align*}
\frac{1}{T}\sum_{t=1}^{T}W\left(\pi_{t},\pi_{t}^{*}\right) & =\E_{\mathsf{T}}\left[W\left(\pi_{\mathsf{T}},\pi_{\mathsf{T}}^{*}\right)\right]\\
 & =\E_{\mathsf{T}}\sqrt{\E_{s\sim\rho^{*}}\left[\left\Vert \pi_{\mathsf{T}}^{*}(\cdot|s)-\pi_{\mathsf{T}}(\cdot|s)\right\Vert _{1}^{2}\right]}\\
 & \overset{(i)}{\leq}\sqrt{\E_{\mathsf{T}}\E_{s\sim\rho_{\mathsf{T}-1}^{*}}\left[\frac{\rho^{*}(s)}{\rho_{\mathsf{T}-1}^{*}(s)}\left\Vert \pi_{\mathsf{T}}^{*}(\cdot|s)-\pi_{\mathsf{T}}(\cdot|s)\right\Vert _{1}^{2}\right]}\\
 & \overset{(ii)}{\le}\left\{ \E_{\mathsf{T}}\E_{s\sim\rho_{\mathsf{T}-1}^{*}}\left[\left|\frac{\rho^{*}(s)}{\rho_{\mathsf{T}-1}^{*}(s)}\right|^{2}\right]\cdot\E_{\mathsf{T}}\E_{s\sim\rho_{\mathsf{T}-1}^{*}}\left[\left\Vert \pi_{\mathsf{T}}^{*}(\cdot|s)-\pi_{\mathsf{T}}(\cdot|s)\right\Vert _{1}^{4}\right]\right\} ^{1/4}\\
 & \overset{(iii)}{\lesssim}\left\{ \overline{C}_{\rho}^{2}\cdot\E_{\mathsf{T}}\E_{s\sim\rho_{\mathsf{T}-1}^{*}}\left[\left\Vert \pi_{\mathsf{T}}^{*}(\cdot|s)-\pi_{\mathsf{T}}(\cdot|s)\right\Vert _{1}^{2}\right]\right\} ^{1/4}\\
 & \overset{(iv)}{\lesssim}\sqrt{\overline{C}_{\rho}}\cdot\left\{ \E_{\mathsf{T}}\E_{s\sim\rho_{\mathsf{T}-1}^{*}}\left[\KL\left(\pi_{\mathsf{T}}^{*}(\cdot|s)\Vert\pi_{\mathsf{T}}(\cdot|s)\right)\right]\right\} ^{1/4}\\
 & =\sqrt{\overline{C}_{\rho}}\cdot\left\{ \E_{\mathsf{T}}\left[\sigma_{\pi}^{\mathsf{T}}\right]\right\} ^{1/4}\\
 & \overset{(v)}{\lesssim}\frac{1}{\lambda^{1/4}}\left(\frac{(\log T)^{1/4}}{T^{1/9}}+\varepsilon^{1/4}\right),
\end{align*}
where step $(i)$ holds due to Jensen's inequality, step $(ii)$ follows
from Cauchy-Schwarz inequality, step $(iii)$ follows from Assumption~\ref{assu:concentrability_W}
and the fact that $\left\Vert \nu-\nu'\right\Vert _{1}\le2,\forall\nu,\nu'\in\simplex(\mA)$,
step $(iv)$ comes from Pinsker's inequality, and step $(v)$ follows
from the bound in equation~(\ref{eq:policy_bound_W}). The above
equation, together with Jensen's inequality, proves equation~(\ref{eq:main_policy_bound_W}).
We have completed the proof of Theorem~\ref{thm:main_W}.

The proof of Corollary~\ref{cor:main_W} is the same as that of Corollary~\ref{cor:main}
and is omitted here.
\end{document}